\newtheorem{theorem}{Theorem}
\newtheorem{lemma}{Lemma}
\newtheorem{definition}{Definition}
\title{Federated Bayesian Optimization via\\
Thompson Sampling}
\author{%
  Zhongxiang Dai$^{\dagger}$, Bryan Kian Hsiang Low$^{\dagger}$, Patrick Jaillet$^{\S}$\\
  Dept. of Computer Science, National University of Singapore, Republic of Singapore$^{\dagger}$\\
  Dept. of Electrical Engineering and Computer Science, MIT, USA$^{\S}$\\
  \texttt{\{daizhongxiang,lowkh\}@comp.nus.edu.sg$^{\dagger}$,jaillet@mit.edu$^{\S}$}\\
}
\begin{document}

\maketitle

\begin{abstract}
\emph{Bayesian optimization} (BO) is a prominent approach to optimizing expensive-to-evaluate black-box functions. The massive computational capability of edge devices such as mobile phones, coupled with privacy concerns, has led to a surging interest in \emph{federated learning} (FL) which focuses on collaborative training of \emph{deep neural networks} (DNNs) via \emph{first-order optimization} techniques.
However, some common machine learning tasks such as hyperparameter tuning of DNNs lack access to gradients and thus require \emph{zeroth-order/black-box optimization}.
This hints at the possibility of extending BO to the FL setting (FBO) for agents to collaborate in these black-box optimization tasks.
This paper presents \emph{federated Thompson sampling} (FTS) which overcomes a number of key challenges of FBO and FL in a principled way: 
We (a) use \emph{random Fourier features} to approximate the Gaussian process surrogate model used in BO, which naturally produces the parameters to be exchanged between agents,
(b) design FTS based on \emph{Thompson sampling}, which significantly reduces the number of parameters to be exchanged,
and (c) provide a theoretical convergence guarantee that is robust against heterogeneous agents, which is a major challenge in FL and FBO.
We empirically demonstrate the effectiveness of FTS in terms of communication efficiency, computational efficiency, and practical performance.
\end{abstract}

\section{Introduction}
\label{sec:introduction}
%
\emph{Bayesian optimization} (BO) has recently become a prominent approach to optimizing expensive-to-evaluate black-box functions with no access to gradients, 
such as in hyperparameter tuning of \emph{deep neural networks} (DNNs)~\cite{shahriari2016taking}.
A rapidly growing computational capability of edge devices such as mobile phones, as well as an increasing concern over data privacy, has 
given rise to the widely celebrated paradigm of \emph{federated learning} (FL)~\cite{mcmahan2016communication} which is also known as \emph{federated optimization}~\cite{li2019convergence}.
In FL, individual agents, without transmitting their raw data, attempt to leverage the contributions from the other agents
to more effectively optimize the parameters of their \emph{machine learning} (ML) model (e.g., DNNs) through \emph{first-order optimization} techniques 
(e.g., stochastic gradient descent)~\cite{kairouz2019advances,li2019federated}.
However, some common ML tasks such as hyperparameter tuning of DNNs lack access to gradients and thus require \emph{zeroth-order/black-box optimization},
and a recent survey~\cite{kairouz2019advances} has pointed out that hyperparameter optimization of DNNs in the FL setting is one of the promising research directions for FL.
This opportunity, combined with the proven capability of BO to efficiently optimize expensive-to-evaluate black-box functions~\cite{shahriari2016taking}, naturally suggests the 
possibility of extending BO to the FL setting, which we refer to as \emph{federated BO} (FBO).

The setting of our FBO is similar to that of FL, except that FBO uses zeroth-order optimization, in contrast to first-order optimization adopted by FL.
In FBO, every agent uses BO to optimize a black-box function (e.g., hyperparameter optimization of a DNN) and 
attempts to improve the efficiency of its BO task by incorporating the information from other agents.
The information exchange between agents has to take place without directly transmitting the raw data of their BO tasks (i.e., history of input-output pairs).
A motivating example is when a number of mobile phone users collaborate in optimizing the hyperparameters of their separate DNNs used for next-word prediction in a smart keyboard application,
without sharing the raw data of their own hyperparameter optimization tasks. 
This application cannot be handled by FL due to the lack of gradient information and thus calls for FBO.
Note that the generality of BO as a black-box optimization algorithm makes the applicability of FBO extend beyond hyperparameter tuning of DNNs on edge devices.
For example, hospitals can be agents in an FL system~\cite{kairouz2019advances}: When a hospital uses BO to select the patients to perform a medical test~\cite{yu2015predicting}, 
FBO can be employed to help the hospital accelerate its BO task using the information from other hospitals without requiring their raw data.
Unfortunately, despite its promising applications, FBO faces a number of major challenges,
some of which are only present in FBO, while others plague the FL setting in general.

The first challenge, which arises only in FBO yet not FL, results from the requirement for retaining (hence not transmitting) the raw data.
In standard FL, the transmitted information consists of the parameters of a DNN~\cite{mcmahan2016communication}, which reduces the risk of privacy violation compared to passing the raw data.
In BO, the information about a BO task is contained in the \emph{surrogate model} which is used to model the objective function and hence guide the query selection (Section~\ref{sec:background}).
However, unlike a DNN, a \emph{Gaussian process} (GP) model~\cite{rasmussen2004gaussian}, which is the most commonly used surrogate model in BO,
is \emph{nonparametric}.
Therefore, a BO task has no parameters (except for the raw data of BO) that can represent the GP surrogate and thus be exchanged between agents, 
while the raw data of BO should be retained and never transmitted~\cite{kusner2015differentially}.
To overcome this challenge, we exploit \emph{random Fourier features} (RFF)~\cite{rahimi2008random} to 
approximate a GP using a Bayesian linear regression model.
This allows us to naturally derive parameters that contain the information about the approximate GP surrogate and 
thus can be communicated between agents without exchanging the raw data (Section~\ref{sec:background}).
In fact, with RFF approximation, the parameters to be exchanged in FBO are equivalent to those of a linear model in standard FL (Section~\ref{subsec:compare_fts_with_rgpe_taf}).

FBO also needs to handle some common challenges faced by FL in general: communication efficiency and heterogeneity of agents.
Firstly, communication efficiency is an important factor in the FL setting since a large number of communicated parameters places a demanding requirement on the communication bandwidth~\cite{kairouz2019advances} 
and is also more vulnerable to potential malicious privacy attacks~\cite{chang2019cronus}.
To this end, we use \emph{Thompson sampling} (TS)~\cite{thompson1933likelihood}, which has been recognized as a highly effective practical method~\cite{chapelle2011empirical}, to develop our FBO algorithm.
The use of TS reduces the required number of parameters to be communicated while maintaining competitive performances (Section~\ref{subsec:compare_fts_with_rgpe_taf}).
Secondly, the heterogeneity of agents is an important practical consideration in FL since different agents can have highly disparate properties~\cite{li2019federated}. 
In FBO, heterogeneous agents represent those agents whose objective functions are significantly different from that of the \emph{target agent} (i.e., the agent performing BO).
For example, the optimal hyperparameters of the DNN for next-word prediction may vary significantly across agents as a result of the distinct typing habits of different mobile phone users.
To address this challenge, we derive a theoretical convergence guarantee for our algorithm which is \emph{robust against heterogeneous agents}.
In particular, our algorithm achieves \emph{no regret} asymptotically even when some or all other agents have highly different objective functions from the target agent.

This paper introduces the first algorithm for the FBO setting called \emph{federated Thompson sampling} (FTS) which is both theoretically principled and practically effective.
We provide a theoretical convergence guarantee for FTS that is robust against heterogeneous agents (Section~\ref{sec:theoretical_results}).
We demonstrate the empirical effectiveness of FTS in terms of communication efficiency, 
computational efficiency, and practical performance using a landmine detection experiment and 
two activity recognition experiments using Google glasses and mobile phone sensors (Section~\ref{sec:experiments}).
%
\section{Background}
\label{sec:background}
\textbf{Bayesian Optimization (BO) and Gaussian Process (GP).}
BO attempts to find a global maximizer of a black-box \emph{objective function} $f$ defined on a domain $\mathcal{X} \subset \mathbb{R}^D$, i.e., find $x^*\triangleq{\arg\max}_{x\in\mathcal{X}}f(x)$ through sequential queries.
That is, in iteration $t=1,\ldots,T$, BO queries an input $x_t$ to observe a noisy output $y(x_t)\triangleq f(x_t)+\epsilon$ where $\epsilon$ is an additive Gaussian noise with variance $\sigma^2$: $\epsilon\sim \mathcal{N}(0,\sigma^2)$.
The aim of a BO algorithm is to minimize \emph{regret}.
Specifically, if the \emph{cumulative regret} $R_T\triangleq\sum_{t=1,\ldots,T}[f(x^*) - f(x_t)]$ grows sublinearly, then the BO algorithm is said to achieve \emph{no regret} since 
it implies that the \emph{simple regret} $S_T\triangleq\min_{t=1,\ldots,T} [f(x^*)-f(x_t)] \leq R_T/T$ goes to $0$ asymptotically.
In order to sequentially select the queries to minimize regret, BO usually uses a GP as a surrogate to model the objective function $f$.
A GP is a stochastic process in which any finite subset of random variables follows a multivariate Gaussian distribution~\cite{rasmussen2004gaussian}.
A GP, which is represented as $\mathcal{GP}(\mu(\cdot), k(\cdot,\cdot))$, is fully characterized by its mean function $\mu(x)$ and covariance/kernel function $k(x,x')$ for all $x,x'\in \mathcal{X}$.
We assume w.l.o.g. that $\mu(x)\triangleq 0$ and $k(x,x') \leq 1$ for all $x,x'\in\mathcal{X}$.
We consider the widely used \emph{squared exponential} (SE) kernel here.
Conditioned on a set of $t$ observations $D_t \triangleq \{(x_1,y(x_1)),\ldots,(x_t,y(x_t))\}$, 
the posterior mean and covariance of GP can be expressed as
\begin{equation}
    \mu_t(x) \triangleq k_t(x)^\top(K_t+\sigma^2I)^{-1}y_{t}\ ,\quad \sigma_t^2(x,x') \triangleq k(x,x')-k_t(x)^\top(K_t+\sigma^2I)^{-1}k_t(x')
\label{gp_posterior}
\end{equation}
where $K_t\triangleq [k(x_{t'}, x_{t''})]_{t',t''=1,\ldots,t}$, $k_t(x)\triangleq [k(x, x_{t'})]^{\top}_{t'=1,\ldots,t}$, and $y_t\triangleq [y(x_1),\ldots,y(x_t)]^{\top}$.

Unfortunately, GP suffers from poor scalability (i.e., by incurring $\mathcal{O}(t^3)$ time), thus calling for the need of approximation methods.
Bochner's theorem states that any continuous stationary kernel $k$ (e.g., SE kernel) can be expressed as the Fourier integral of a spectral density $p(s)$~\cite{rasmussen2004gaussian}.
As a result, random samples can be drawn from $p(s)$ to construct the $M$-dimensional ($M\geq1$) \emph{random features} $\phi(x)$ for all $x\in\mathcal{X}$ (Appendix~\ref{app:construct_random_features}) 
whose inner product can be used to approximate the kernel values: $k(x,x') \approx \phi(x)^{\top}\phi(x')$ for all $x,x'\in \mathcal{X}$~\cite{rahimi2008random}.
The approximation quality of such a \emph{random Fourier features} (RFF) approximation method is theoretically guaranteed with high probability: $\sup_{x,x'\in \mathcal{X}}|k(x, x') - \phi(x)^{\top}\phi(x')| \leq \varepsilon$ 
where $\varepsilon\triangleq\mathcal{O}(M^{-1/2})$~\cite{rahimi2008random}.
Therefore, increasing the number $M$ of random features improves the approximation quality (i.e., smaller $\varepsilon$).

A GP with RFF approximation can be interpreted as a Bayesian linear regression model with $\phi(x)$ as the features: $\hat{f}(x)\triangleq\phi(x)^{\top}\omega$.
With the prior of $\mathbb{P}(\omega) \triangleq \mathcal{N}(0, I)$ and given the set of observations $D_t$, the posterior belief of $\omega$ can be derived as
\begin{equation}
\mathbb{P}(\omega|\Phi(X_t), y_t) = \mathcal{N}(\nu_t, \sigma^2\Sigma_t^{-1}) 
\label{eq:posterior_dist_omega}
\end{equation}
where $\Phi(X_t)=[\phi(x_1),\ldots,\phi(x_t)]^{\top}$ is a $t\times M$-dimensional matrix and
\begin{equation}
\Sigma_t \triangleq \Phi(X_t)^{\top}\Phi(X_t) + \sigma^{2}I\ , \qquad 
\nu_t \triangleq \Sigma_t^{-1}\Phi(X_t)^{\top}y_t 
\label{eq:sigma_nu}
\end{equation}
which contain $M^2$ and $M$ parameters, respectively.
As a result, we can sample a function $\tilde{f}$ from the GP posterior/predictive belief with RFF approximation by firstly sampling $\tilde{\omega}$ from the posterior belief of $\omega$~\eqref{eq:posterior_dist_omega} 
and then setting $\tilde{f}(x)=\phi(x)^{\top}\tilde{\omega}$ for all $x \in \mathcal{X}$.
Moreover, $\Sigma_t$ and $\nu_t$~\eqref{eq:sigma_nu} fully define the GP posterior/predictive belief  with RFF approximation at any input $x$, 
which is a Gaussian with the mean $\hat{\mu}_t(x) \triangleq \phi(x)^{\top}\nu_t$ and variance 
$\hat{\sigma}_t^2(x) \triangleq \sigma^2 \phi(x)^{\top}\Sigma_t^{-1}\phi(x)$ (Appendix~\ref{app:gp_posterior_rff}).
\textbf{Problem Setting of Federated Bayesian Optimization.}
Assume that there are $N+1$ agents in the system: $\mathcal{A}$ and $\mathcal{A}_1,\ldots,\mathcal{A}_N$. 
For ease of exposition, we focus on the perspective of $\mathcal{A}$ as the \emph{target agent},
i.e., $\mathcal{A}$ attempts to use the information from agents $\mathcal{A}_1,\ldots,\mathcal{A}_N$ to accelerate its BO task.
We denote $\mathcal{A}$'s objective function as $f$ 
and a sampled function from $\mathcal{A}$'s GP posterior belief~\eqref{gp_posterior} at iteration $t$ as $f_t$.
We represent $\mathcal{A}_n$'s objective function as $g_n$ and a sampled function from $\mathcal{A}_n$'s GP posterior belief with RFF approximation as $\hat{g}_{n}$.
We assume that all agents share the same set of random features $\phi(x)$ for all $x\in\mathcal{X}$, 
which is easily achievable since it is equivalent to sharing the first layer of a neural network in FL (Appendix~\ref{app:construct_random_features}).
For theoretical analysis, we assume that all objective functions are defined on the same domain $\mathcal{X} \subset \mathbb{R}^{D}$ which is assumed to be discrete for simplicity but our analysis can be easily extended to compact domain through discretization~\cite{chowdhury2017kernelized}.
A smoothness assumption on these functions is required for theoretical analysis; so, we assume that they 
have bounded norm induced by the \emph{reproducing kernel Hilbert space} (RKHS) associated with the kernel $k$: $\norm{f}_{k} \leq B$ and $\norm{g_n}_{k} \leq B$ for $n=1,\ldots,N$.
This further suggests that the absolute function values are upper-bounded: $|f(x)| \leq B$ and $|g_n(x)|\leq B$ for all $x\in\mathcal{X}$.
We denote the maximum difference between $f$ and $g_n$ as $d_n\triangleq\max_{x\in \mathcal{X}}|f(x) - g_n(x)|$ which characterizes the similarity between $f$ and $g_n$.
A smaller $d_n$ implies that $f$ and $g_n$ are more similar and heterogeneous agents are those with large $d_n$'s.
Let $t_n$ denote the number of BO iterations that $\mathcal{A}_n$ has completed (i.e., number of observations of $\mathcal{A}_n$) when it passes information to $\mathcal{A}$;
$t_n$'s are constants unless otherwise specified.
\section{Federated Bayesian Optimization (FBO)}
\subsection{Federated Thompson Sampling (FTS)}
\label{subsec:fts_algo}
Before agent $\mathcal{A}$ starts to run a new BO task, it can request for information from the other agents $\mathcal{A}_1,\ldots,\mathcal{A}_N$.
Next, every agent $\mathcal{A}_n$ for $n=1,\ldots,N$ uses its own history of observations, as well as the shared random features (Section~\ref{sec:background}), 
to calculate the posterior belief $\mathcal{N}(\nu_{n}, \sigma^2\Sigma_{n}^{-1})$~\eqref{eq:posterior_dist_omega} where $\nu_{n}$ and $\Sigma_{n}$ represent $\mathcal{A}_n$'s parameters of the RFF approximation~\eqref{eq:sigma_nu}. Then, $\mathcal{A}_n$ draws a sample from the posterior belief: $\omega_n \sim \mathcal{N}(\nu_{n}, \sigma^2\Sigma_{n}^{-1})$ and passes the $M$-dimensional vector $\omega_n$ to the target agent $\mathcal{A}$
(possibly via a central server).
After receiving the messages from other agents, $\mathcal{A}$ uses them to start the FTS algorithm (Algorithm~\ref{alg:F_TS}).
To begin with, $\mathcal{A}$ needs to define (a) a monotonically increasing sequence $[p_t]_{t\in\mathbb{Z}^+}$ s.t.~$p_t\in(0, 1]$ for all $t\in\mathbb{Z}^+$ and $p_t\rightarrow 1$ as $t\rightarrow +\infty$,
and (b) a discrete distribution $P_{N}$ over the agents $\mathcal{A}_1,\ldots,\mathcal{A}_N$ s.t.~$P_{N}[n]\in[0,1]$ for $n=1,\ldots,N$ and $\sum^N_{n=1}P_{N}[n]=1$.
In iteration $t \geq 1$ of FTS, with probability $p_t$  (line $4$ of Algorithm~\ref{alg:F_TS}), $\mathcal{A}$ samples a function $f_t$ using its current GP posterior belief~\eqref{gp_posterior} and chooses $x_t=\arg\max_{x\in\mathcal{X}}f_t(x)$. 
With probability $1-p_t$ (line $6$), $\mathcal{A}$ firstly samples an agent $\mathcal{A}_n$ from $P_{N}$  and then chooses $x_t=\arg\max_{x\in\mathcal{X}}\hat{g}_{n}(x)$ where $\hat{g}_{n}(x)=\phi(x)^{\top}\omega_{n}$ corresponds to a sampled function from $\mathcal{A}_n$'s GP posterior belief with RFF approximation.
Next, $x_t$ is queried to observe $y(x_t)$ and FTS proceeds to the next iteration $t+1$.
%
\begin{algorithm}[H]
\begin{algorithmic}[1]
	\FOR{$t=1,2,\ldots, T$}
		\STATE Sample $r$ from the uniform distribution over $[0,1]$: $r\sim U(0,1)$
		\IF{$r \leq p_t$}
	        \STATE Sample $f_t \sim \mathcal{GP}(\mu_{t-1}(\cdot), \beta_t^2\sigma^2_{t-1}(\cdot,\cdot))$\footnotemark 
	        and choose $x_t=\arg\max_{x\in\mathcal{X}}f_t(x)$
		\ELSE
            \STATE Sample agent $\mathcal{A}_n$ from the distribution $P_N$ and choose $x_t=\arg\max_{x\in\mathcal{X}} \phi(x)^{\top}\omega_{n}$
        \ENDIF
        \STATE Query $x_t$ to observe $y(x_t)$ and update GP posterior belief~\eqref{gp_posterior} with $(x_t, y(x_t))$
	\ENDFOR
\end{algorithmic}
\caption{Federated Thompson Sampling (FTS)}
\label{alg:F_TS}
\end{algorithm}
\vspace{-2mm}
\footnotetext{We will define $\beta_t$ in Theorem~\ref{theorem:fts} (Section~\ref{sec:theoretical_results}).}
%
Interestingly, FTS (Algorithm~\ref{alg:F_TS}) can be interpreted as a variant of \emph{TS with a mixture of GPs}. That is, in each iteration $t$, we firstly sample a GP: The GP of $\mathcal{A}$ is sampled with probability $p_t$ while the GP of $\mathcal{A}_n$ is sampled with probability $(1-p_t)P_N[n]$ for  $n=1,\ldots,N$. Next, we draw a function from the sampled GP whose maximizer is selected to be queried. As a result, $x_t$ follows the same distribution as the maximizer of the mixture of GPs and the mixture model gradually converges to the GP of $\mathcal{A}$ as $p_t \rightarrow 1$. 
The sequence $[p_t]_{t\in\mathbb{Z}^+}$ controls the degree of which information from the other agents is exploited, such that decreasing the value of this sequence encourages the utilization of such information. 
The distribution $P_N$ decides the preferences for different agents. 
A natural choice for $P_N$ is the uniform distribution $P_N[n]=1/N$ for $n=1,\ldots,N$ indicating equal preferences for all agents,
which is a common choice when we have no knowledge regarding which agents are more similar to the target agent.
In FTS, \emph{stragglers}\footnote{Stragglers refer to those agents whose information is not received by the target agent~\cite{li2019convergence}.} 
can be naturally dealt with by simply assigning $0$ to the corresponding agent $\mathcal{A}_n$ in the distribution $P_N$ such that 
$\mathcal{A}_n$ is never sampled (line $6$ of Algorithm~\ref{alg:F_TS}).
Therefore, FTS is robust against communication failure which is a common issue in FL~\cite{li2019convergence}.

Since only one message $\omega_n$ is received from each agent \emph{before} the beginning of FTS, once an agent $\mathcal{A}_n$ is sampled and its message $\omega_n$ is used (line $6$ of Algorithm~\ref{alg:F_TS}), 
we remove it from $P_N$ by setting the corresponding element to $0$ and then re-normalize $P_N$.
However, FTS can be easily generalized to allow $\mathcal{A}$ to receive information from each agent after every iteration (or every few iterations) 
such that every agent can be sampled multiple times.
This more general setting requires more rounds of communication.
In practice, FTS is expected to perform similarly in both settings when (a) the number $N$ of agents is large (i.e., a common assumption in FL),  
and (b) $P_N$ gives similar or equal preferences to all agents 
such that the probability of an agent being sampled more than once is small.
Furthermore, this setting can be further generalized to encompass the scenario where multiple (even all) agents are concurrently performing optimization tasks using FTS.
In this case, the information received from $\mathcal{A}_n$ can be updated as $\mathcal{A}_n$ collects more observations, 
i.e., $t_n$ may increase as updated information is received from $\mathcal{A}_n$.
\subsection{Comparison with Other BO Algorithms Modified for the FBO Setting}
\label{subsec:compare_fts_with_rgpe_taf}
Although FTS is the first algorithm for the FBO setting, some algorithms for \emph{meta-learning} in BO, 
such as \emph{ranking-weighted GP ensemble} (RGPE)~\cite{feurer2018scalable} and \emph{transfer acquisition function} (TAF)~\cite{wistuba2018scalable},
can be adapted to the FBO setting through a heuristic combination with RFF approximation.
Meta-learning aims to use the information from previous tasks to accelerate the current task.
Specifically, both RGPE and TAF use a separate GP surrogate to model the objective function of every agent (i.e., previous task) and use these GP surrogates to accelerate the current BO task.
To modify both algorithms to suit the FBO setting, every agent $\mathcal{A}_n$ firstly applies RFF approximation to its own GP surrogate and passes the resulting parameters $\nu_n$ and $\Sigma_n^{-1}$ (Section~\ref{sec:background}) to the target agent $\mathcal{A}$.
Next, after receiving $\nu_n$ and $\Sigma_n^{-1}$ from the other agents, $\mathcal{A}$ can use them to calculate the GP surrogate (with RFF approximation) of each agent (Section~\ref{sec:background}),
which can then be plugged into the original RGPE/TAF
algorithm.\footnote{Refer to~\cite{feurer2018scalable} and~\cite{wistuba2018scalable} for more details about RGPE and TAF, respectively.}
However, unlike FTS, RGPE and TAF do not have theoretical convergence guarantee and thus lack an assurance to guarantee consistent performances in the presence of heterogeneous agents.
Moreover, as we will analyze below and show in the experiments (Section~\ref{sec:experiments}),
FTS outperforms both RGPE and TAF in a number of major aspects including communication efficiency, computational efficiency, and practical performance.


Firstly, regarding communication efficiency, both RGPE and TAF require $\nu_{n}$ and $\Sigma_{n}^{-1}$ (i.e., $M + M^2$ parameters) from each agent 
since both the posterior mean and variance of every agent are needed.
Moreover, TAF additionally requires the incumbent (currently found maximum observation value) of every agent, which can further increase the risk of privacy leak.
In a given experiment and for a fixed $M$, our FTS algorithm is superior in terms of communication efficiency since it only requires an $M$-dimensional vector $\omega_n$ from each agent, 
which is equivalent to standard FL using a linear model with $M$ parameters.
Secondly, FTS is also advantageous in terms of computational efficiency: When $x_t$ is selected using $\omega_n$ from an agent,
FTS only needs to solve the optimization problem of $x_t=\arg\max_{x\in\mathcal{X}} \phi(x)^{\top}\omega_{n}$ (line $6$ of Algorithm~\ref{alg:F_TS}), which incurs minimal computational cost;\footnote{We use the DIRECT method for this optimization problem, which, for example, takes on average $0.76$ seconds per iteration in the landmine detection experiment (Section~\ref{subsec:exp_real_world}).} 
when $x_t$ is selected by maximizing a sampled function from $\mathcal{A}$'s GP posterior belief (line $4$ of Algorithm~\ref{alg:F_TS}), 
this maximization step can also utilize the RFF approximation, which is computationally cheap.
In contrast, for both RGPE and TAF, every evaluation of the acquisition function (i.e., to be maximized to select $x_t$) at an input $x\in \mathcal{X}$ 
requires calculating the posterior mean and variance using the GP surrogate of \emph{every} agent at $x$.
Therefore, their required computation in every iteration grows linearly in the number $N$ of agents and  can thus become prohibitively costly when $N$ is large.
We have also empirically verified this in our experiments (see Fig.~\ref{fig:real_world_exp_time_plot}d in Section~\ref{subsec:exp_real_world}).
%
\section{Theoretical Results}
\label{sec:theoretical_results}
In our theoretical analysis, since we allow the presence of heterogeneous agents (i.e., other agents with  significantly different objective functions from the target agent),
we do not aim to show that FTS achieves a faster convergence than standard TS and instead prove a convergence guarantee that is robust against heterogeneous agents.
This is consistent with most works proving the convergence of FL algorithms~\cite{li2018federated,li2019convergence}  
and makes the theoretical results more applicable in general since the presence of heterogeneous agents is a major and inevitable challenge of FL and FBO.
Note that we analyze FTS in the more general setting where communication is allowed before every iteration instead of only before the first iteration.
However, as discussed in Section~\ref{subsec:fts_algo}, FTS behaves similarly in both settings in the common scenario when $N$ is large and $P_N$ assigns similar probabilities to all agents.
Theorem~\ref{theorem:fts} below is our main theoretical result (see Appendix~\ref{app:proof_theorem_fts} for its proof):
\begin{theorem}
\label{theorem:fts}
Let $\gamma_t$ be the maximum information gain on $f$ from any set of $t$ observations.
Let $\delta\in(0,1)$, $\beta_{t} \triangleq B+\sigma\sqrt{2(\gamma_{t-1} + 1 + \log(4/\delta)}$, 
and $c_t \triangleq \beta_t (1 + \sqrt{2\log(|\mathcal{X}|t^2)})$ for all $t\in\mathbb{Z}^+$.
Choose $[p_t]_{t\in\mathbb{Z}^+}$ as a monotonically increasing sequence satisfying $p_t\in(0, 1]$ for all  $t\in\mathbb{Z}^+$, $p_t\rightarrow 1$ as $t\rightarrow +\infty$, and $(1-p_t)c_t \leq (1-p_1)c_1$ for all  $t\in\mathbb{Z}^+\setminus\{1\}$.
With probability of at least $1 - \delta$, the cumulative regret incurred by FTS is\footnote{The $\tilde{\mathcal{O}}$ notation ignores all logarithmic factors.}
\[
 R_T=\tilde{\mathcal{O}}\left(\left(B+1/p_1\right)\gamma_T\sqrt{T}+{\sum}^T_{t=1}\psi_t\right)
\]
where $\psi_t \triangleq 2(1-p_t)\sum^N_{n=1}P_N[n] \Delta_{n,t}$ and 
$\Delta_{n, t} \triangleq \tilde{\mathcal{O}}(M^{-1/2} B t_n^2 + B + \sqrt{\gamma_{t_n}} + \sqrt{M} + d_n + \sqrt{\gamma_t})$.
\end{theorem}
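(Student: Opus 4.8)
The plan is to extend the standard Thompson sampling (TS) regret analysis for kernelized bandits~\cite{chowdhury2017kernelized} to the mixture-of-GPs interpretation of FTS given in Section~\ref{subsec:fts_algo}, isolating the extra regret incurred whenever the target agent $\mathcal{A}$ borrows a sampled function $\hat{g}_n$ from a (possibly heterogeneous) agent. I would first establish the two confidence bounds governing the iterations in which $x_t$ is chosen from $\mathcal{A}$'s own posterior (the probability-$p_t$ branch). The RKHS concentration result gives $|f(x) - \mu_{t-1}(x)| \leq \beta_t \sigma_{t-1}(x)$ uniformly over $x \in \mathcal{X}$ and $t$ with high probability, which is exactly why $\beta_t$ carries the $\log(4/\delta)$ term: four high-probability events (the confidence bound and sample concentration for $f$, plus their analogues for the borrowed agents) will each be allotted failure probability $\delta/4$ and combined by a union bound. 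A complementary Gaussian-tail argument over the $|\mathcal{X}|$ arms shows $f_t$ stays within $\beta_t\sqrt{2\log(|\mathcal{X}|t^2)}\,\sigma_{t-1}(x)$ of $\mu_{t-1}(x)$, so that $|f(x) - f_t(x)| \leq c_t\sigma_{t-1}(x)$ with $c_t$ as defined (the $t^2$ yielding a summable union over $t$).

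The crux is controlling the probability-$(1-p_t)$ branch, where $x_t = \arg\max_{x}\phi(x)^{\top}\omega_n$ is selected using another agent's message $\omega_n$. Here I would derive a uniform bound $|f(x) - \hat{g}_n(x)| = \tilde{\mathcal{O}}(\Delta_{n,t})$ by splitting $|f(x)-\hat{g}_n(x)| \leq d_n + |g_n(x) - \hat{g}_n(x)|$, which immediately produces the $d_n$ term. The remaining term is where the other pieces of $\Delta_{n,t}$ originate: the RFF error $\varepsilon = \mathcal{O}(M^{-1/2})$ corrupts every kernel entry used to form $\mathcal{A}_n$'s posterior, and propagating this perturbation through the inverse $\Sigma_n^{-1}$ over $t_n$ observations accumulates into the $\mathcal{O}(M^{-1/2}Bt_n^2)$ term; relating the exact-kernel posterior mean of $g_n$ back to $g_n$ via an RKHS confidence bound over $t_n$ observations contributes $B + \sqrt{\gamma_{t_n}}$; and the Gaussian concentration of the draw $\omega_n \sim \mathcal{N}(\nu_n, \sigma^2\Sigma_n^{-1})$ around its mean, with $\norm{\omega_n}$ concentrating at scale $\sqrt{M}$, contributes $\sqrt{M}$. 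The final $\sqrt{\gamma_t}$ term enters when this value-gap is converted into a regret contribution measured against $\mathcal{A}$'s current posterior uncertainty.

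With both branches controlled, I would run the TS analysis through the notion of saturated points: $x$ is saturated at iteration $t$ if $f(x^*)-f(x) > c_t\sigma_{t-1}(x)$. On the probability-$p_t$ branch, the Gaussian anti-concentration of $f_t$ at $x^*$ ensures $x_t=\arg\max_{x} f_t(x)$ is unsaturated with probability bounded below by a universal-constant multiple of $p_t \geq p_1$, whereupon $f(x^*)-f(x_t) \leq c_t\sigma_{t-1}(x_t)$ up to lower-order terms; this $p_t$ scaling of the anti-concentration constant is the source of the $1/p_1$ factor, and summing $\sum_t\sigma_{t-1}(x_t)=\mathcal{O}(\sqrt{T\gamma_T})$ yields the $\tilde{\mathcal{O}}((B+1/p_1)\gamma_T\sqrt{T})$ term. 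On the probability-$(1-p_t)$ branch, I apply the uniform bound at both $x^*$ and $x_t$ via $f(x^*)-f(x_t) \leq [f(x^*)-\hat{g}_n(x^*)] + [\hat{g}_n(x_t)-f(x_t)]$ (the dropped middle term $\hat{g}_n(x^*)-\hat{g}_n(x_t)\leq 0$ since $x_t$ maximizes $\hat{g}_n$), giving a per-iteration contribution of $2(1-p_t)\sum_n P_N[n]\Delta_{n,t}=\psi_t$; the hypothesis $(1-p_t)c_t\leq(1-p_1)c_1$ keeps these mixture terms from inflating the constant in the saturated-point argument. I expect the uniform bound on $|g_n(x)-\hat{g}_n(x)|$ --- especially the delicate $\mathcal{O}(M^{-1/2}Bt_n^2)$ propagation of the RFF perturbation through $\Sigma_n^{-1}$ --- to be the main technical obstacle.
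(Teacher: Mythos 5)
Your proposal is correct and follows essentially the same architecture as the paper's proof: the saturated-point Thompson-sampling analysis of \cite{chowdhury2017kernelized} with the unsaturation probability scaled by $p_t$ (Lemma~\ref{lemma:prob_unsaturated}, the source of the $1/p_1$ factor), the same three-part accounting of the borrowed-sample error ($\varepsilon(t_n+1)^2/\sigma^2$ perturbation of the posterior mean via the RFF guarantee of \cite{mutny2018efficient} giving $M^{-1/2}Bt_n^2$; an RKHS confidence bound on $g_n$ giving $B+\sqrt{\gamma_{t_n}}$; chi-squared concentration of the draw $\omega_n$ giving $\sqrt{M}$; plus $d_n$), and the same endgame via $\sum_t \sigma_{t-1}(x_t)=\mathcal{O}(\sqrt{T\gamma_T})$. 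The one genuine local difference is your treatment of the probability-$(1-p_t)$ branch: you bound $f(x^*)-f(x_t)$ directly against $\hat{g}_n$ at $x^*$ and $x_t$, whereas the paper runs a single unified regret decomposition through the target's sampled $f_t$ and the least-uncertain unsaturated point $\overline{x}_t$, and only then splits on the branch events, bounding $\mathbb{E}[f_t(\overline{x}_t)-f_t(x_t)\mid B_{t,n}]$ by $2\Delta_{n,t}$ via Lemma~\ref{lemma:bound_gt_ft}. This is exactly where the paper's $\Delta_{n,t}$ acquires its $\sqrt{\gamma_t}$ term: the chain $|\hat{g}_{n,t}-f_t|\leq|\hat{g}_{n,t}-g_n|+d_n+|f-f_t|$ picks up $c_t$ from $|f(x)-f_t(x)|\leq c_t\sigma_{t-1}(x)\leq c_t$. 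Your direct route, which never compares to $f_t$ on that branch, would not naturally produce $\sqrt{\gamma_t}$ at all (a harmless, in fact slightly tighter, outcome), and your stated mechanism for it --- a conversion ``against $\mathcal{A}$'s current posterior uncertainty'' --- is the one hand-wavy spot, precisely where your route departs from the paper's.

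Two smaller corrections. First, passing from per-iteration conditional expectations to a high-probability bound on $R_T$ requires the super-martingale/Azuma--Hoeffding step, which your sketch skips; it is standard in the template you cite but must be stated, since the events $E^{f_t}(t)$ only hold per-iteration with probability $1-1/t^2$. Second, you misattribute the role of the hypothesis $(1-p_t)c_t\leq(1-p_1)c_1$: it does not protect the saturated-point argument, but is used in the Azuma step to guarantee $\psi_t\leq\psi_1+\mathcal{O}(\sqrt{\log t})$, so that the martingale increments $|Y_t-Y_{t-1}|\leq c_t(1+4B/p+10/(pp_1))+\psi_t$ remain controlled and the deviation term is $\tilde{\mathcal{O}}(\sqrt{T})$. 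Neither point invalidates your plan; with the martingale step added and the condition rerouted to it, your argument closes.
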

Since $\gamma_T=\mathcal{O}((\log T)^{D+1})$ for the SE kernel~\cite{srinivas2009gaussian}, the first term in the upper bound is sublinear in $T$. 
Moreover, since the sequence of $[p_t]_{t\in\mathbb{Z}^+}$ is chosen to be monotonically increasing and goes to $1$ when $t\rightarrow \infty$, $1-p_t$ goes to $0$ asymptotically.
Therefore, the second term in the upper bound also grows sublinearly.\footnote{Note that for the SE kernel, $\sqrt{\gamma_t}$ in $\Delta_{n,t}$ is logarithmic in $t$: $\sqrt{\gamma_t}=\mathcal{O}((\log t)^{(D+1)/2})$.}
For example, if $[p_t]_{t\in\mathbb{Z}^+}$ is chosen such that $1 - p_t=\mathcal{O}(1/\sqrt{t})$, $\sum^T_{t=1}\psi_t=\tilde{\mathcal{O}}(\sqrt{T})$.
As a result, FTS achieves \emph{no regret} asymptotically regardless of the difference between the target agent and the other agents,
which is a highly desirable property for FBO in which the heterogeneity among agents is a prominent challenge.
Such a robust regret upper bound is achieved because we upper-bound the worst-case error for \emph{any} set of agents (i.e., any set of values of $d_n$ and $t_n$ for $n=1,\ldots,N$) in our proof.
The robust nature of the regret upper bound can be reflected in its dependence on the sequence $[p_t]_{t\in\mathbb{Z}^+}$ as well as on $d_n$ and $t_n$.
When the value of the sequence $[p_t]_{t\in\mathbb{Z}^+}$ is small, i.e., when the information from the other agents is exploited more (Section~\ref{subsec:fts_algo}),
the worst-case error due to more utilization of these information is also increased.
This is corroborated by Theorem~\ref{theorem:fts} since smaller values of $p_t$ increase the regret upper bound through the terms $1/p_1$ and $(1-p_t)$ in $\psi_t$.
Theorem~\ref{theorem:fts} also shows that the regret bound becomes worse with larger values of $d_n$ and $t_n$ because 
a larger $d_n$ increases the difference between the objective functions of $\mathcal{A}_n$ and $\mathcal{A}$, and more observations from $\mathcal{A}_n$ (i.e., larger $t_n$)
also loosens the upper bound since for a fixed $d_n$, a larger number of observations increases the worst-case error by 
accumulating the individual errors.\footnote{In the most general setting where $\mathcal{A}_n$ may collect more observations between different rounds of communication 
such that $t_n$ may increase (Section~\ref{subsec:fts_algo}), $1-p_t$ can decay faster to preserve the no-regret convergence.}

The dependence of the regret upper bound (through $\Delta_{n,t}$) on the number $M$ of random features is particularly interesting due to the interaction between two opposing factors.
Firstly, the $M^{-1/2} B t_n^2$ term arises since better approximation quality of the agent's GP surrogates (i.e., larger $M$) improves the performance.
However, the $\sqrt{M}$ term suggests the presence of another factor with an opposite effect.
This results from the need to upper-bound the distance between the $M$-dimensional Gaussian random variable $\omega_{n}$ and its mean $\nu_n$~\eqref{eq:posterior_dist_omega},
which grows at a rate of $\mathcal{O}(\sqrt{M})$
(Lemma~\ref{lemma:bound_posterior_sample_around_mean} in Appendix~\ref{app:proof_theorem_fts}).
Taking the derivative of both terms w.r.t.~$M$ reveals that the regret bound is guaranteed to become tighter with an increasing $M$
(i.e., the effect of the $M^{-1/2} B t_n^2$ term dominates more) when $t_n$ is sufficiently large, i.e., when $t_n=\Omega(\sqrt{M/B})$.
An intuitive explanation of this finding, which is verified in our experiments (Section~\ref{subsec:exp_synth_func}), 
is that the positive effect (i.e., a tighter regret bound) of better RFF approximation from a larger $M$ is amplified when more observations are available (i.e., $t_n$ is large).
In contrast, when $t_n$ is small, minimal information is offered by agent $\mathcal{A}_n$ and increasing the quality of RFF approximation thus only leads to marginal or negligible improvement in the performance.
The practical implication of this insight is that when the other agents only have a small number of observations, 
it is not recommended to use a large number of random features since it requires a larger communication bandwidth (Section~\ref{subsec:compare_fts_with_rgpe_taf}) yet is unlikely to improve the performance.
\section{Experiments and Discussion}
\label{sec:experiments}
We firstly use synthetic functions to investigate the behavior of FTS.
Next, using $3$ real-world experiments, 
we demonstrate the effectiveness of FTS in terms of communication efficiency, computational efficiency,  and practical performance.
Since it has been repeatedly observed that the theoretical choice of $\beta_t$ that is used to establish the confidence interval is overly conservative~\cite{bogunovic2018adversarially,srinivas2009gaussian},
we set it to a constant: $\beta_t=1.0$.
As a result, $c_t$ (Theorem~\ref{theorem:fts}) grows slowly (i.e., logarithmically) and we thus do not explicitly check the validity of the condition $(1-p_t)c_t \leq (1-p_1)c_1$ for all $t\in\mathbb{Z}^+\setminus\{1\}$.
All error bars represent standard errors.
For simplicity, we focus on the simple setting here where communication happens only before the beginning of FTS (Section~\ref{subsec:fts_algo}). In Appendix~\ref{app:results_increasing_t_n}, we also evaluate the performance of FTS in the most general setting where the other agents are also performing optimization tasks such that they may collect more observations between different rounds of communication (i.e., increasing $t_n$). The results (Fig.~\ref{fig:increasing_tn} in Appendix~\ref{app:results_increasing_t_n}) show consistent performances of FTS in both settings.
More experimental details and results are deferred to Appendix~\ref{app:experiments} due to space constraint.
%
\subsection{Optimization of Synthetic Functions}
\label{subsec:exp_synth_func}
In synthetic experiments, the objective functions are sampled from a GP (i.e., defined on a $1$-D discrete domain within $[0,1]$) using the SE kernel 
and scaled into the range $[0, 1]$.
We fix the total number of agents as $N=50$ and vary $d_n$, $t_n$, and $M$ to investigate their effects on the performance.
We use the same $d_n$ and $t_n$ for all agents for simplicity.
We choose $P_N$ to be uniform: $P_N[n]=1/N$ for $n=1,\ldots,N$ and choose the sequence $[p_t]_{t\in\mathbb{Z}^+}$ as $p_t=1-1/\sqrt{t}$ for all $t\in\mathbb{Z}^+\setminus\{1\}$
and $p_1=p_2$. 
Figs.~\ref{fig:synth_func}a and b show that when $d_n=0.02$ is small, FTS is able to perform better than TS.
Intuitively, the performance advantage of FTS results from its ability to exploit the additional information from the other agents to reduce the need for exploration.
These results also reveal that when $t_n$ of every agent is small (Fig.~\ref{fig:synth_func}a), the effect of $M$ is negligible;
on the other hand, when $t_n$ is large (Fig.~\ref{fig:synth_func}b), increasing $M$ leads to evident improvement in the performance.
This corroborates our theoretical analysis (Section~\ref{sec:theoretical_results}) stating that when $t_n$ is large, 
increasing the value of $M$ is more likely to tighten the regret bound and hence improve the performance.
Moreover, comparing the green curves in Figs.~\ref{fig:synth_func}a and b shows that when the other agents' objective functions are similar to the target agent's objective function (i.e., $d_n=0.02$ is small)
and the RFF approximation is accurate (i.e., $M=100$ is large), increasing the number of observations from the other agents ($t_n=100$ vs.~$t_n=40$) improves the performance.
Lastly, Fig.~\ref{fig:synth_func}c verifies FTS's theoretically guaranteed robustness against heterogeneous agents (Section~\ref{sec:theoretical_results}) 
since it shows that even when all other agents are heterogeneous (i.e., every $d_n=1.2$ is large), the performances of FTS are still comparable to that of standard TS.
Note that Fig.~\ref{fig:synth_func}c demonstrates a potential limitation of our method, i.e., in this scenario of heterogeneous agents, FTS may converge slightly slower than TS if $p_t$ does not grow sufficiently fast.
However, the figure also shows that making $p_t$ grow faster (i.e., making the effect of the other agents decay faster) allows FTS to match the performance of TS in this adverse scenario (red curve).
\begin{figure}
	\centering
	\begin{tabular}{ccc}
		\hspace{-3mm} \includegraphics[width=0.32\linewidth]{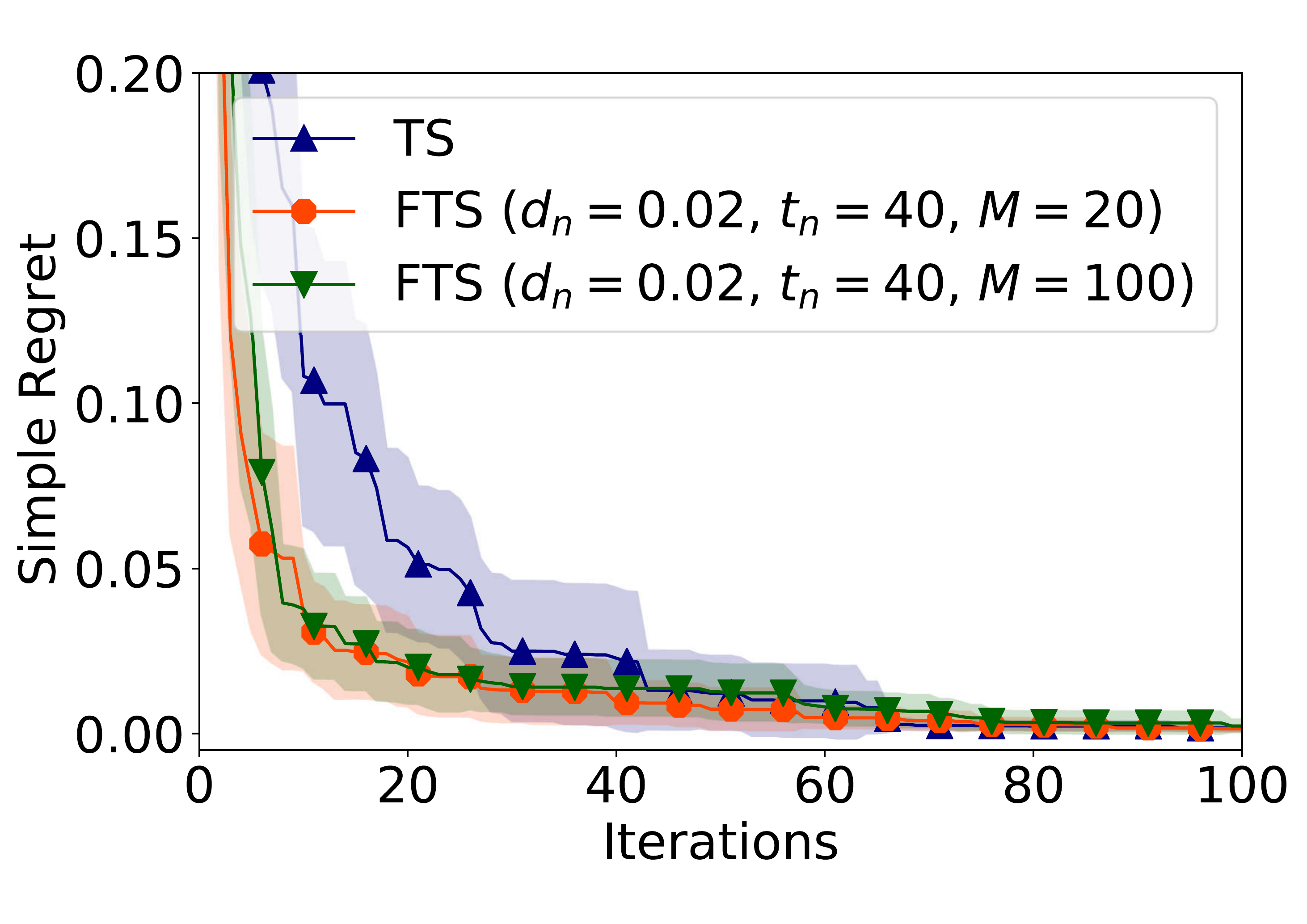} & \hspace{-4mm}
		\includegraphics[width=0.32\linewidth]{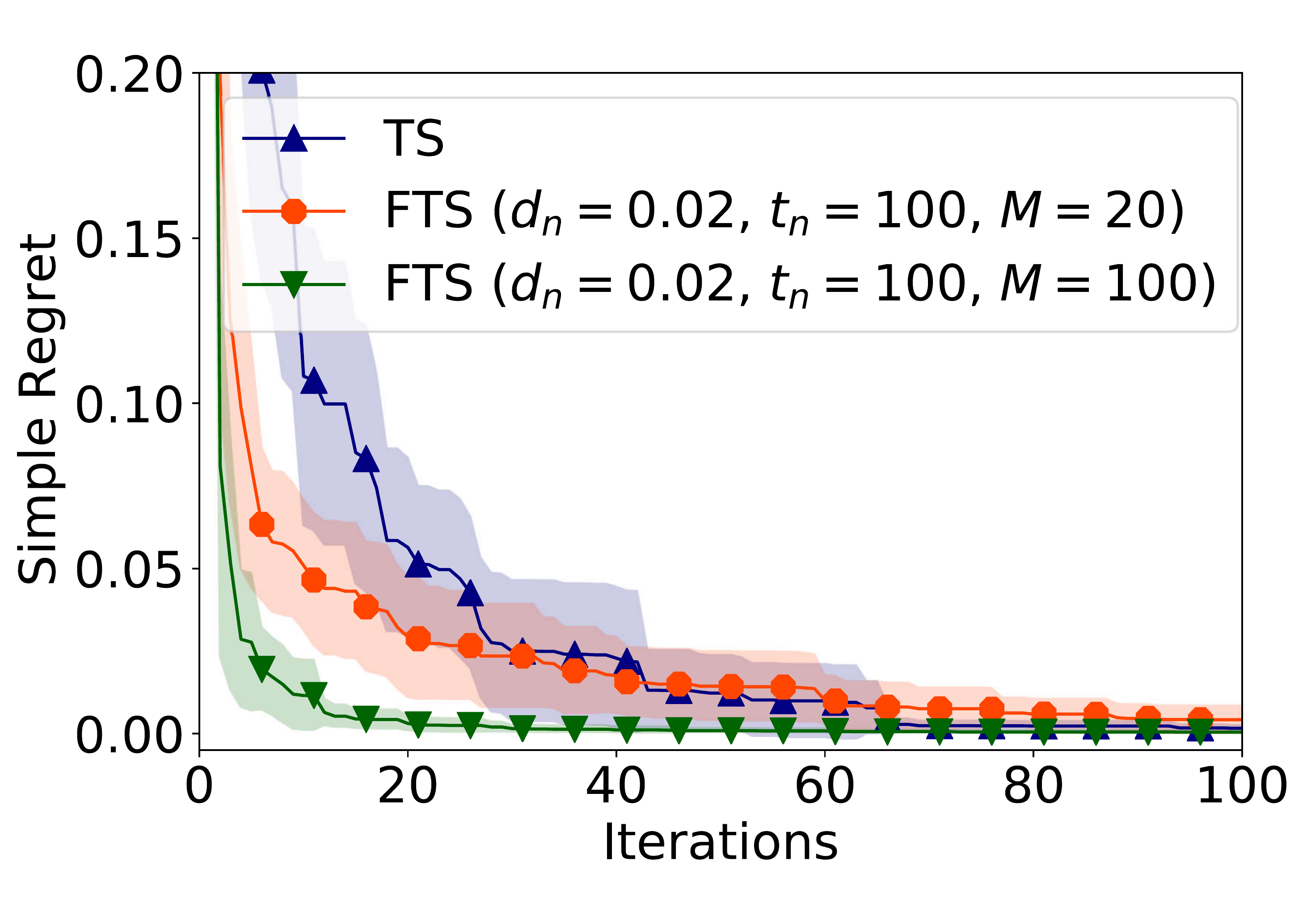} & \hspace{-4mm} 
		\includegraphics[width=0.32\linewidth]{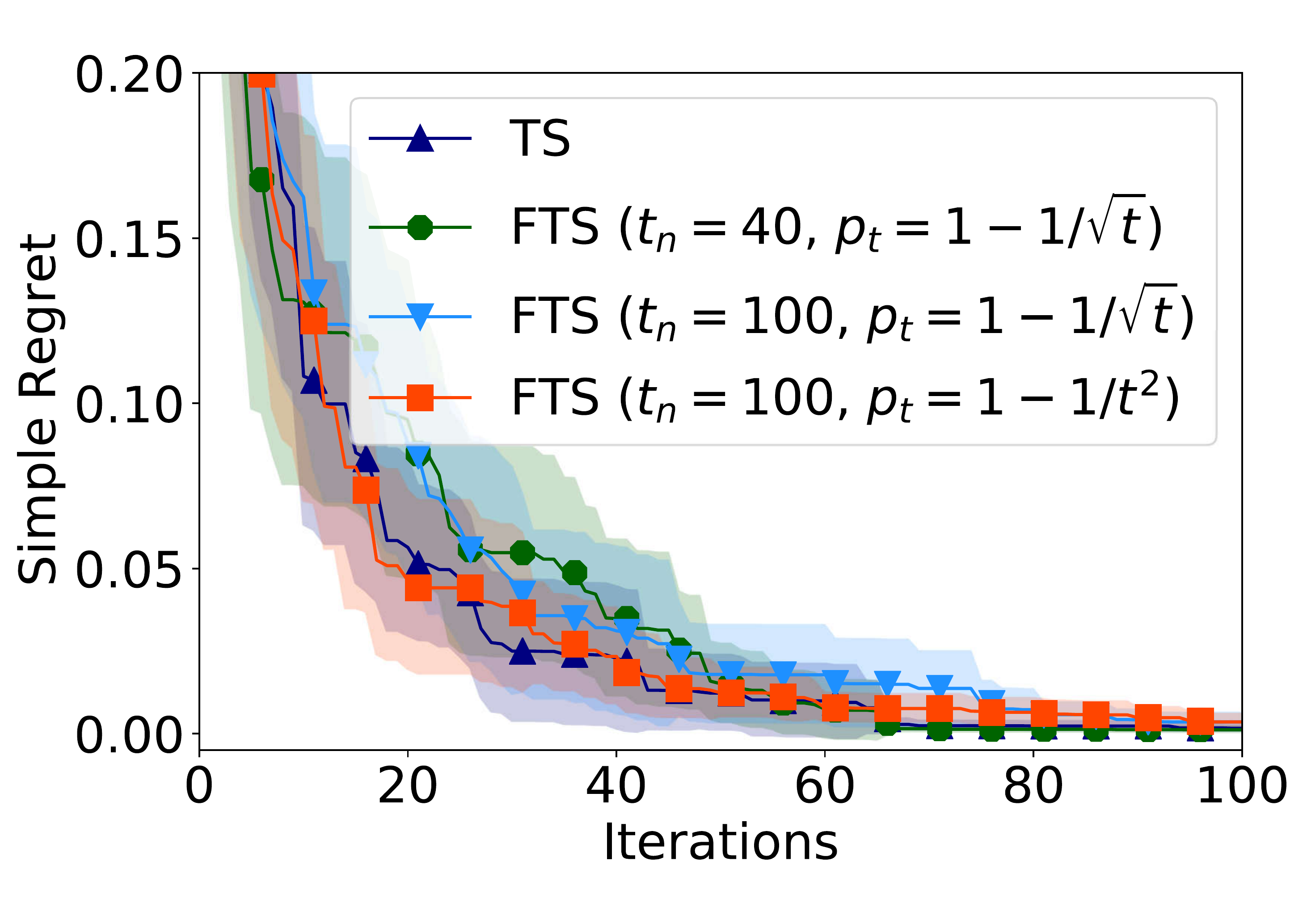}\\
		{(a)} & {(b)} & {(c)}
	\end{tabular}\vspace{-1.5mm}
	\caption{Simple regret in synthetic experiments showing the effect of $M$ when $t_n$ is (a) small and (b) large, and (c) the performance when $d_n=1.2$ is large. Each curve is averaged over $5$ randomly sampled functions from a GP and $5$ random initializations of $1$ input for each function.}
	\label{fig:synth_func}\vspace{-1mm}
\end{figure}
%
%
\subsection{Real-world Experiments}
\label{subsec:exp_real_world}
For real-world experiments, we use $3$ datasets generated in federated settings that naturally contain heterogeneous agents~\cite{smith2017federated}.
Firstly, we use a landmine detection dataset in which the landmine fields are located at two different terrains~\cite{xue2007multi}.
Next, we use two activity recognition datasets collected using Google glasses~\cite{rahman2015unintrusive} and mobile phone sensors~\cite{anguita2013public}, both of which contain heterogeneous agents 
since cross-subject heterogeneity has been a major challenge for human activity recognition~\cite{rahman2015unintrusive}.
We compare our FTS algorithm with standard TS (i.e., no communication with other agents), RGPE, and TAF. Note that RGPE and TAF are meta-learning algorithms for BO and are hence not specifically designed for the FBO setting (Section~\ref{subsec:compare_fts_with_rgpe_taf}).

\textbf{Landmine Detection.} This dataset includes $29$ landmine fields. 
For each field, every entry in the dataset consists of $9$ features and a binary label indicating whether the corresponding location contains landmines.
The task of every field is to tune $2$ hyperparameters of an SVM classifier (i.e., RBF kernel parameter in $[0.01,10]$ and L$2$ regularization parameter in $[10^{-4},10]$) 
that is used to predict whether a location contains landmines.
We fix one of the landmine fields as the target agent and the remaining $N=28$ fields as the other agents, each of whom has completed a BO task of $t_n=50$ iterations.

\textbf{Activity Recognition Using Google Glasses.} This dataset contains sensor measurements from Google glasses worn by $38$ participants. 
Every agent attempts to use $57$ features, which we have extracted from the corresponding participant's measurements, to predict whether the participant is eating or performing other activities.
Every agent uses \emph{logistic regression} (LR) for activity prediction and needs to tune $3$ hyperparameters of LR: batch size ($[20, 60]$), L$2$ regularization parameter ($[10^{-6}, 1]$), and learning rate ($[0.01, 0.1]$).
We fix one of the participants as the target agent and all other $N=37$ participants as the other agents, each of whom possesses $t_n=50$ BO observations.

\textbf{Activity Recognition Using Mobile Phone Sensors.} This dataset consists of mobile phone sensor measurements from $30$ subjects performing $6$ activities.
Each agent attempts to tune the hyperparameters of a subject's activity prediction model whose input includes $561$ features and output is one of the $6$ activity classes.
The activity prediction model and tuned hyperparameters, as well as their ranges, are the same as that in the Google glasses experiment.
We again fix one of the subjects as the target agent and all other $N=29$ subjects as the other agents with $t_n=50$ observations each.
\begin{figure}
	\centering
	\begin{tabular}{ccc}
		\hspace{-3mm} \includegraphics[width=0.32\linewidth]{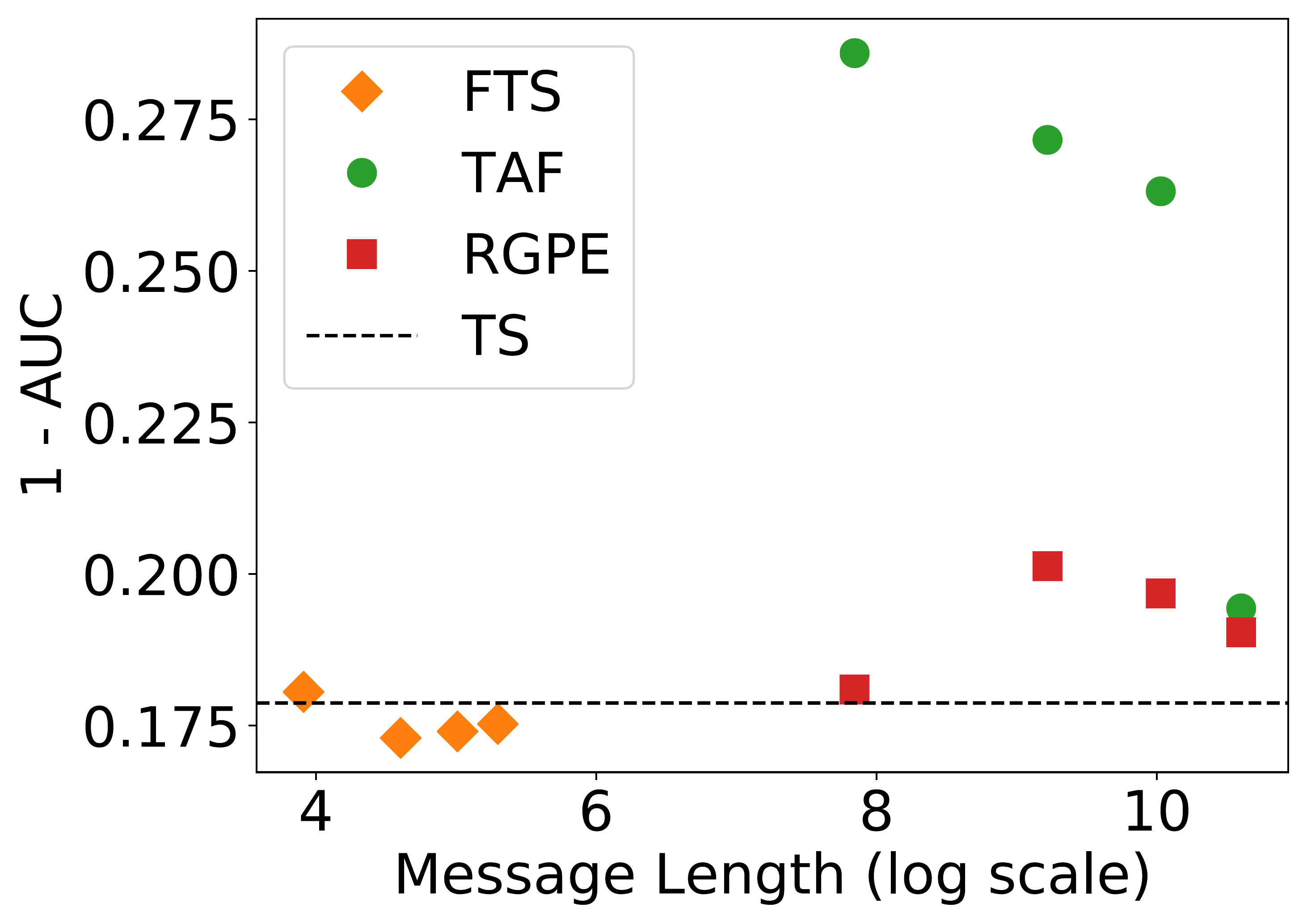} & \hspace{-4mm}
		\includegraphics[width=0.32\linewidth]{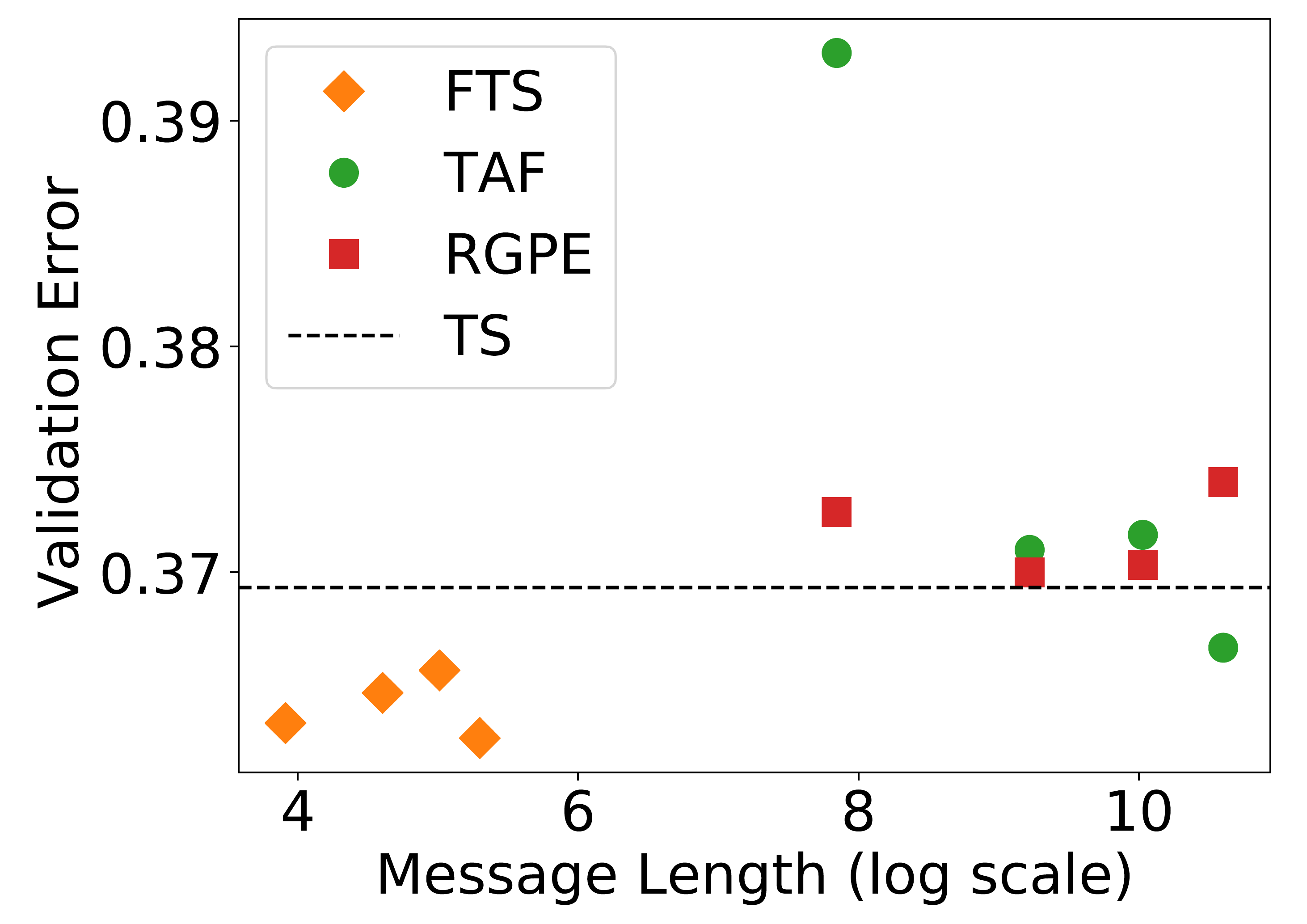} & \hspace{-4mm} 
		\includegraphics[width=0.32\linewidth]{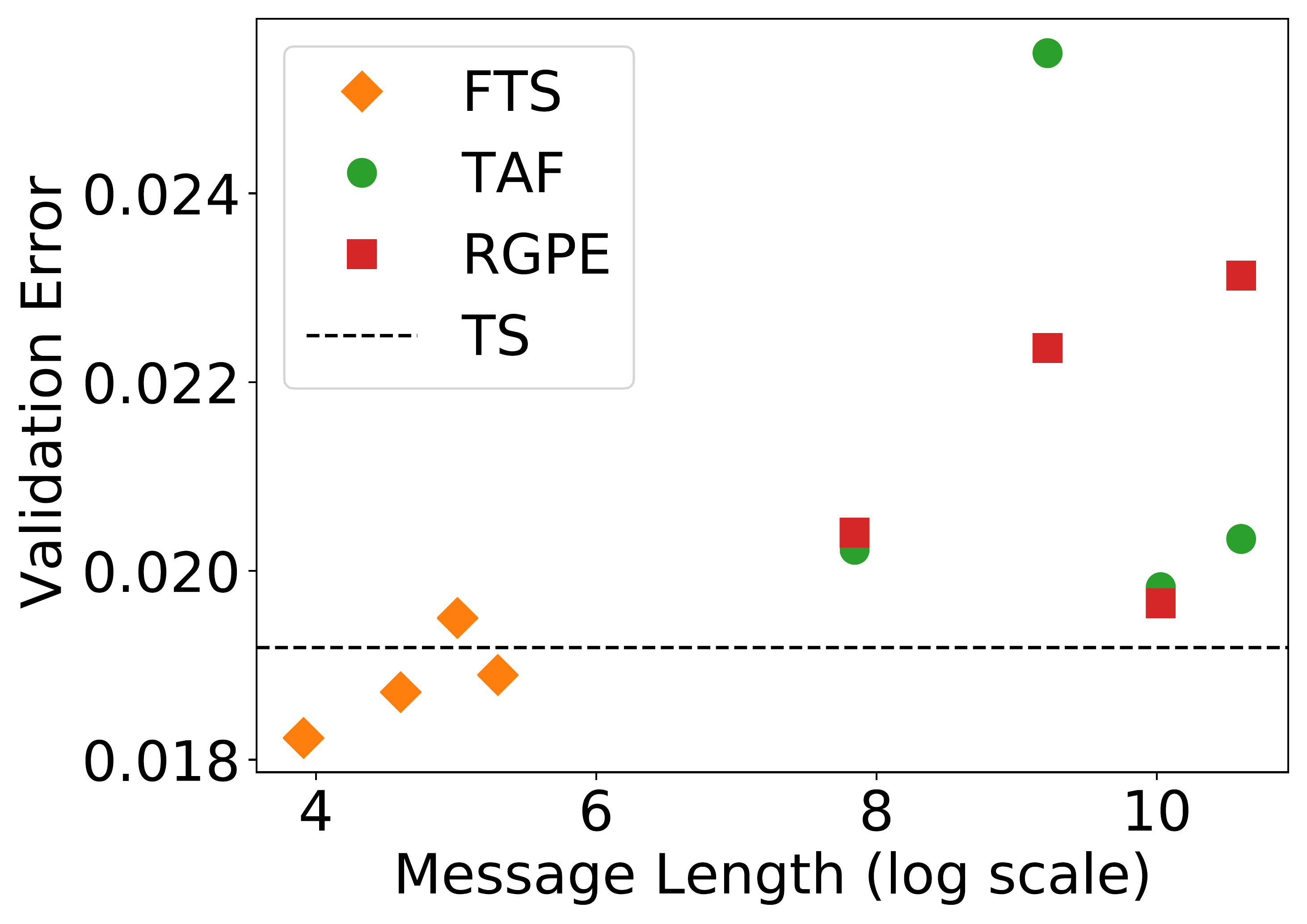}\\
		{(a)} & {(b)} & {(c)}
	\end{tabular}\vspace{-1.5mm}
	\caption{Best performance after $50$ iterations (vertical) vs.~the length of the message (i.e., the number of parameters) communicated from each agent (horizontal) for the (a) landmine detection, 
	(b) Google glasses, and (c) mobile phone sensors experiments.
	The more to the \emph{bottom left}, the better the performance and the less the required communication.
    The results for every method correspond to $M=50,100,150,200$, respectively.
    Every result is averaged over $6$ different target agents and each target agent is averaged over $5$ different initializations of $3$ randomly selected inputs.}
	\label{fig:real_world_exp}\vspace{-2mm}
\end{figure}
\begin{figure}
	\centering
	\begin{tabular}{cccc}
		\hspace{-4mm} \includegraphics[width=0.25\linewidth]{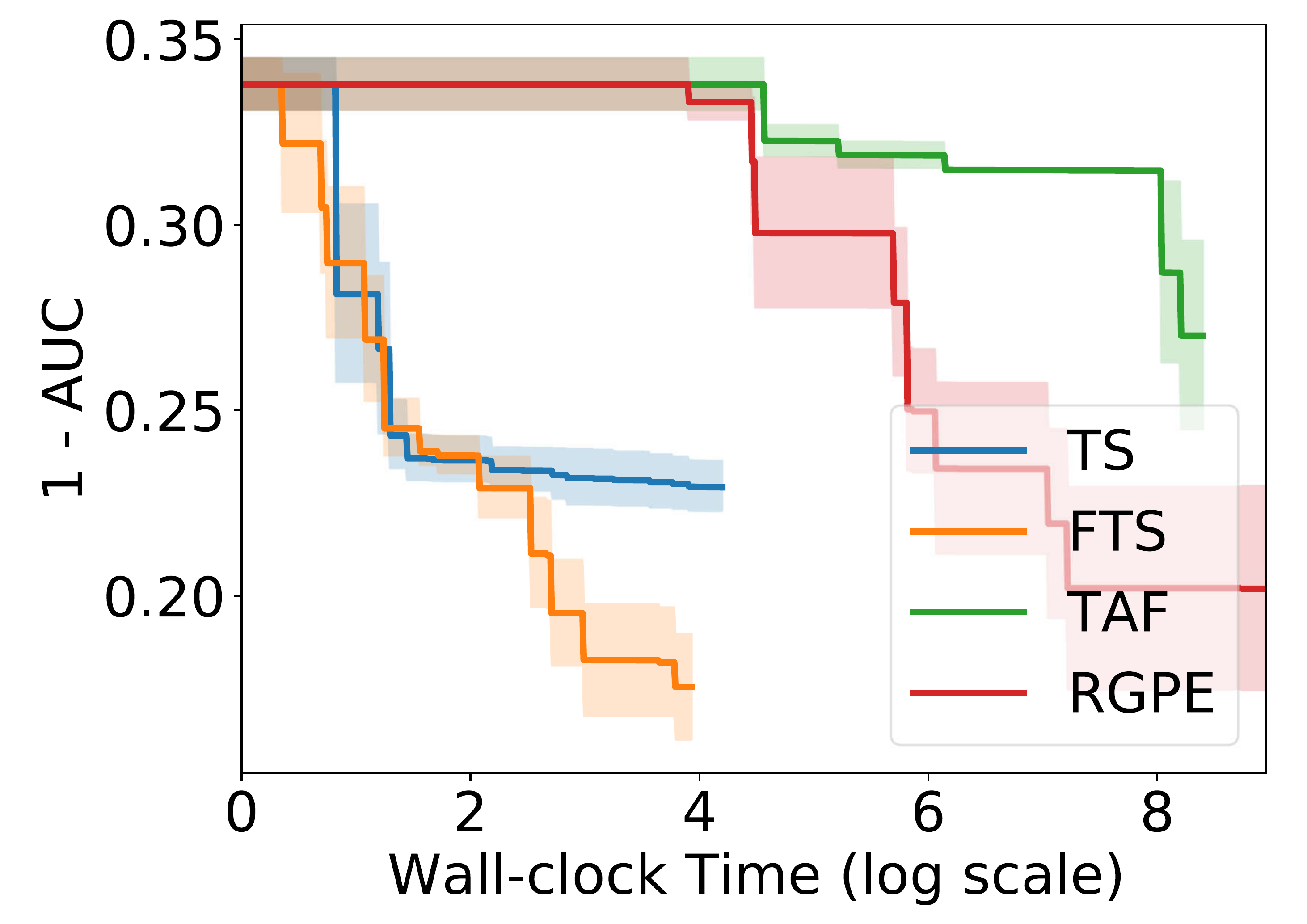} & \hspace{-4mm}
		\includegraphics[width=0.25\linewidth]{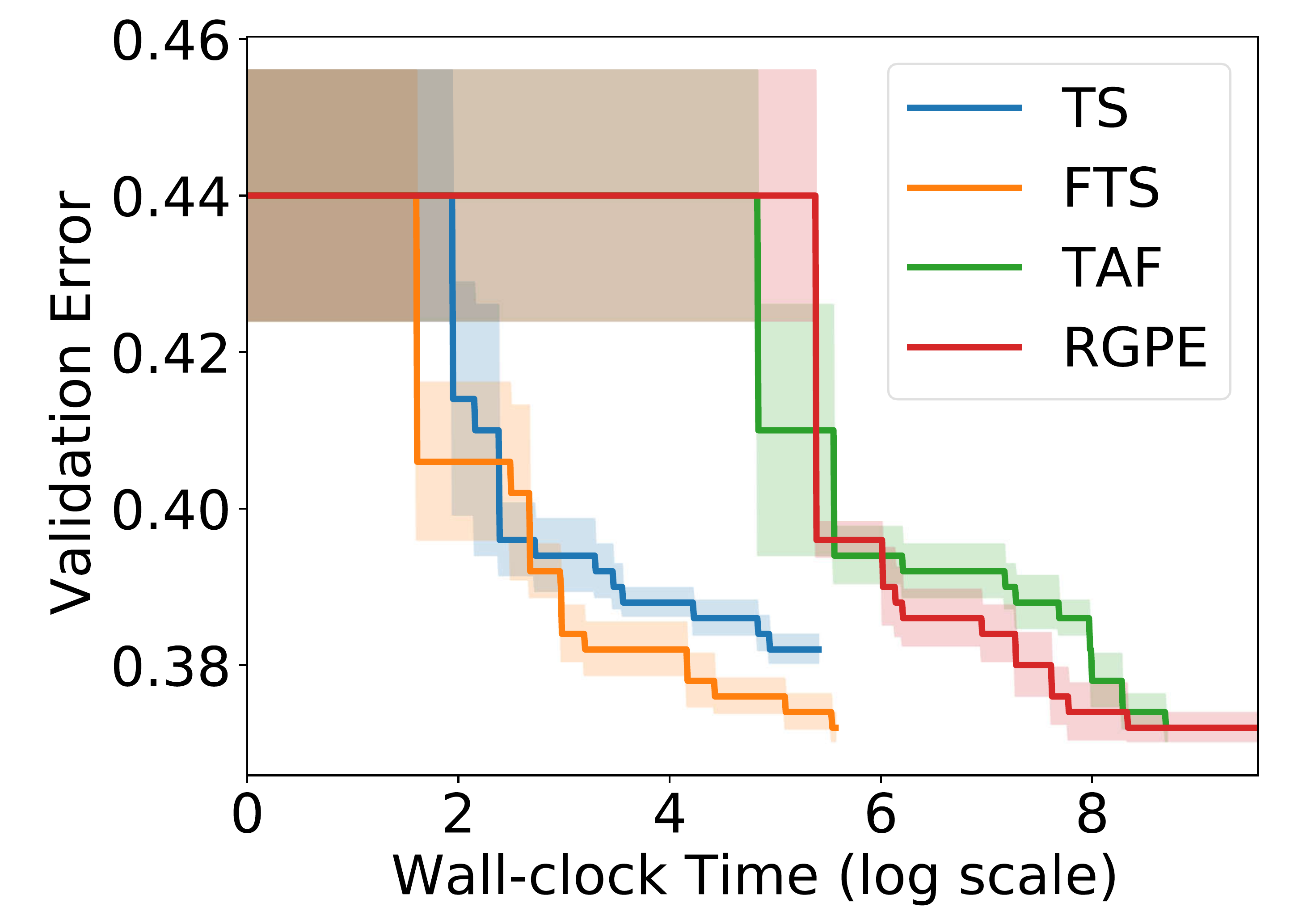} & \hspace{-4mm} 
		\includegraphics[width=0.25\linewidth]{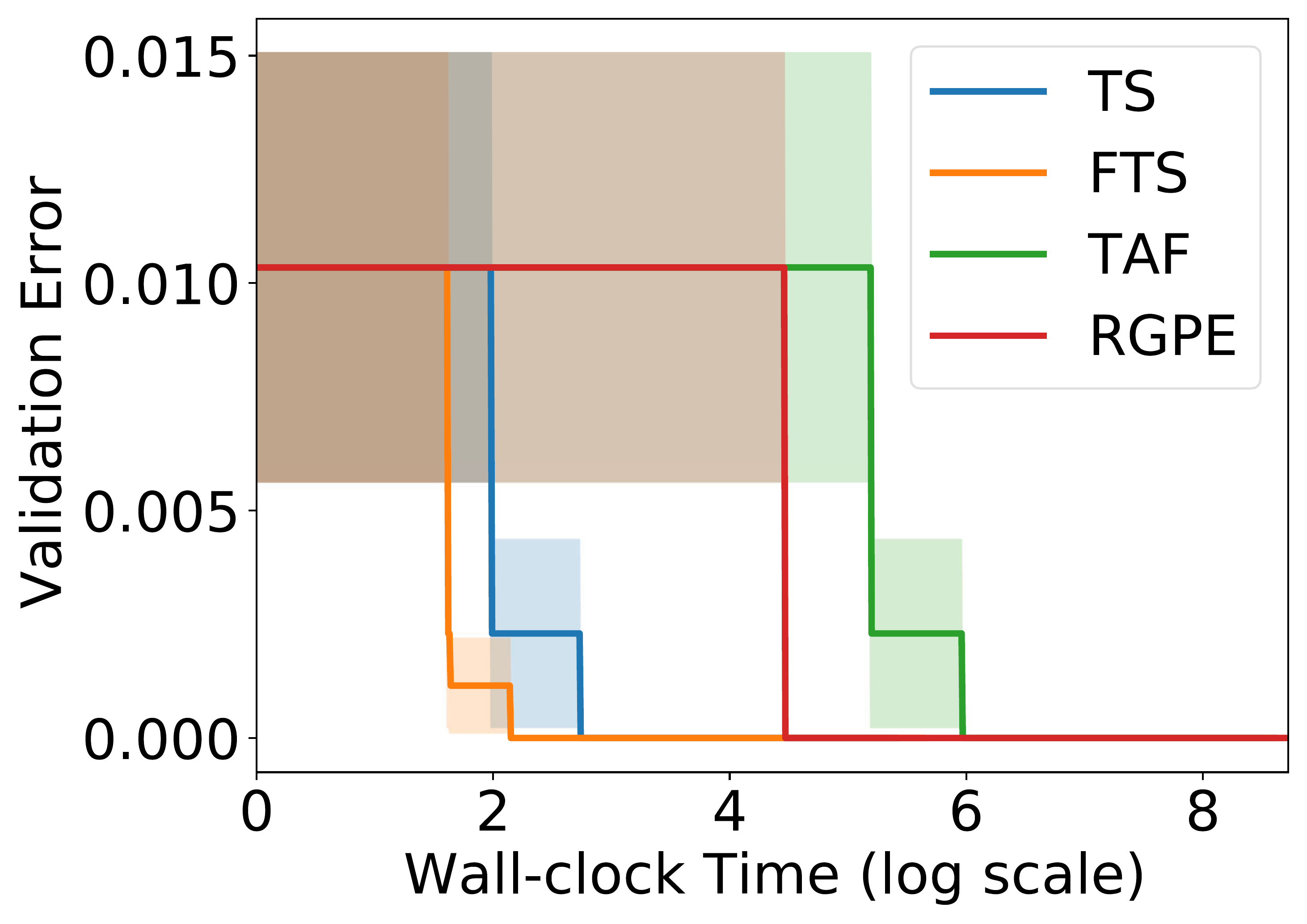} & \hspace{-4mm} 
		\includegraphics[width=0.25\linewidth]{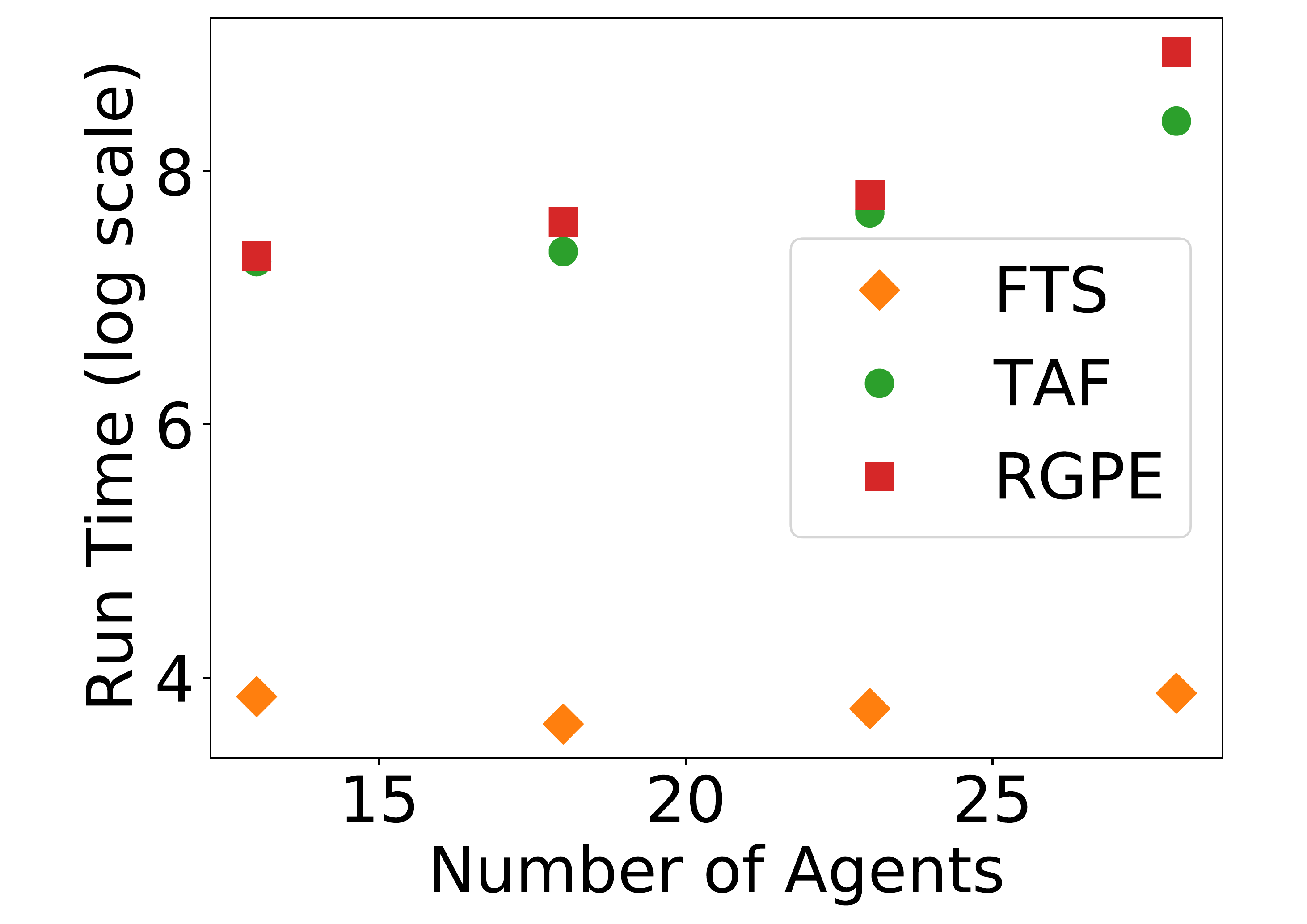}\\
		{(a)} & {(b)} & {(c)} & {(d)}
	\end{tabular}\vspace{-1mm}
	\caption{Best performance observed vs.~run time (seconds) for the (a) landmine detection, (b) Google glasses, and (c) mobile phone sensors experiments,
	in which \emph{FTS converges faster than other methods}.
    These results correspond to the first (of the $6$) target agent used in each experiment in Fig.~\ref{fig:real_world_exp} with $M=100$ and are averaged over $5$ random initializations of $3$ inputs.\cref{footnote_single_agent}
    Every method is run for $50$ iterations.
    (d) Total runtime vs.~the number of agents for the landmine detection experiment.}
	\label{fig:real_world_exp_time_plot}\vspace{-4mm}
\end{figure}

For all experiments, we set $P_N$ to be uniform: $P_N[n]=1/N, \forall n=1,\ldots,N$, and $p_t=1-1/t^2$ for all $t\in\mathbb{Z}^+\setminus\{1\}$ and $p_1=p_2$.
We use validation error as the performance metric for the two activity recognition experiments, and use \emph{area under the receiver operating characteristic curve} (AUC)
to measure the performance of the landmine detection experiment since this dataset is extremely imbalanced 
(i.e., only $6.2\%$ of all locations contain landmines).
We repeat every experiment $6$ times with each time treating one of the first $6$ agents as the target agent.
Fig.~\ref{fig:real_world_exp} shows the (averaged) best performance after $50$ iterations of different methods (vertical axis) as well as their required number of parameters to be passed from each agent (horizontal axis).
FTS outperforms both RGPE and TAF in terms of both the \emph{performance metric} and the \emph{communication efficiency}.
Note that this comparison is unfair for FTS since FTS is much more computationally efficient than RGPE and TAF (Section~\ref{subsec:compare_fts_with_rgpe_taf}) 
such that it completes $50$ iterations in significantly shorter time (Fig.~\ref{fig:real_world_exp_time_plot}).
Fig.~\ref{fig:real_world_exp_time_plot} plots the best performance achieved vs.~the run time of different algorithms 
with the first agent treated as the target agent; 
refer to Appendix~\ref{app:more_experimental_results} for the results of the other agents.\footnote{\label{footnote_single_agent}We cannot average the results across different agents since their output scales vary significantly.}
Fig.~\ref{fig:real_world_exp_time_plot} shows that FTS achieves the fastest convergence among all methods and showcases the advantage of FTS over RGPE and TAF in terms of \emph{computational efficiency} (Section~\ref{subsec:compare_fts_with_rgpe_taf}).
Overall, the consistent performance advantage of FTS across all real-world experiments is an indication of its practical robustness,
which may be largely attributed to its robust theoretical convergence guarantee ensuring its consistent performance even in the presence of heterogeneous agents (Section~\ref{sec:theoretical_results}).
Furthermore, we also use the landmine detection experiment to illustrate the scalability of our method w.r.t. the number $N$ of agents. The results (Fig.~\ref{fig:real_world_exp_time_plot}d) show that increasing $N$ has minimal effect on the runtime of FTS yet leads to growing computational cost for RGPE and TAF. This verifies the relevant discussion at the end of Section~\ref{subsec:compare_fts_with_rgpe_taf}.
%
\section{Related Works}
Since its recent introduction in~\cite{mcmahan2016communication}, FL has gained tremendous attention mainly due to its prominent practical relevance in the collaborative training of ML models such as DNNs~\cite{mcmahan2016communication} or decision tree-based models~\cite{li2020practical,li2020privacy}.
Meanwhile, efforts have also been made to derive theoretical convergence guarantees for FL algorithms~\cite{li2018federated,li2019convergence}. 
Refer to recent surveys~\cite{kairouz2019advances,li2019federated2,li2019federated} for more comprehensive reviews of FL.
TS~\cite{thompson1933likelihood} has been known as a highly effective practical technique for multi-armed bandit problems~\cite{chapelle2011empirical,russo2017tutorial}.
The Bayesian regret~\cite{russo2014learning} and frequentist regret~\cite{chowdhury2017kernelized} of TS in BO have both been analyzed 
and TS has been shown to perform effectively in BO problems such as high-dimensional BO~\cite{mutny2018efficient}.
The theoretical analysis in this work has adopted techniques used in the works of~\cite{chowdhury2017kernelized,mutny2018efficient}.
Our algorithm is also related to multi-fidelity BO~\cite{dai2019bayesian,kandasamy2016gaussian,poloczek2017multi,wu2020practical,zhang2020bayesian,zhang2017information} which has the option to query low-fidelity functions.
This is analogous to our algorithm allowing the target agent to use the information from the other agents for query selection and the similarity between an agent and the target agent can be interpreted as a measure of fidelity.
Moreover, our algorithm also bears similarity to parallel/distributed BO 
algorithms~\cite{contal2013parallel,daxberger2017distributed,desautels2014parallelizing}, 
especially those based on 
TS~\cite{hernandez2017parallel,kandasamy2018parallelised}. 
However, there are fundamental differences: For example, they usually optimize a single objective function whereas we need to consider possibly heterogeneous objective functions from different agents.
On the other hand, BO involving multiple agents with possibly different objective functions has been studied from the perspective of game theory by the works of~\cite{dai2020r2,sessa2019no}.
As discussed in Section~\ref{subsec:compare_fts_with_rgpe_taf}, some works on meta-learning for BO~\cite{feurer2018scalable,wistuba2018scalable},
which study how information from other related BO tasks is used to accelerate the current BO task, can be adapted to the FBO setting.
However, these works do not provide theoretical convergence guarantee nor 
tackle the issues of avoiding the transmission of raw data and achieving efficient communication.
Moreover, their adapted variants for FBO have been shown to be outperformed by our FTS algorithm 
in various major aspects including communication efficiency, computational efficiency, and practical performance (Section~\ref{subsec:exp_real_world}).
%
\section{Conclusion and Future Works}
This paper introduces the first algorithm for the FBO setting called FTS which addresses some key challenges in FBO in a principled manner.
We theoretically show its convergence guarantee which is robust against heterogeneous agents,
and empirically demonstrate its communication efficiency, computational efficiency, and practical effectiveness using three real-world experiments.
As a future work, we plan to explore techniques to automatically optimize the distribution $P_N$ used by FTS to sample agents 
by learning the similarity between each agent and the target agent (i.e., the fidelity of each agent).
Other than the RFF approximation used in this work, other approximation techniques for GP (such as those based on inducing points~\cite{Chen13,LowTASE15,LowRSS13,LowUAI12,MinhAAAI17,HoangICML19,NghiaICML16,HoangICML16,NghiaAAAI19,LowECML14a,low15,Ruofei18,teng20,LowAAAI14,Haibin19,HaibinAPP}) may also be used to derive the parameters to be exchanged between agents, which is worth exploring in future works.
Moreover, in our experiments, the hyperparameters of the target agent's GP is learned by maximizing the marginal likelihood; it would be interesting to explore whether the GP hyperparameters can also be shared among the agents, which can potentially facilitate better collaboration.
Furthermore, our current algorithm is only able to avoid sharing of raw data 
and may hence be susceptible to advanced privacy attacks.
So, it would be interesting to incorporate the state-of-the-art privacy-preserving techniques into our algorithm such as differential privacy~\cite{dwork2006calibrating} which has been applied to BO by the works of~\cite{kharkovskii2020private,kusner2015differentially}.
We will consider incentivizing collaboration in FBO~\cite{Rachael20} and generalizing FBO to nonmyopic BO~\cite{dmitrii20a,ling16} and high-dimensional BO~\cite{NghiaAAAI18} settings.

\section*{Broader Impact}
Since the setting of our FBO is similar to that of FL, our work inherits a number of potential broader impacts of FL. We will analyze here the potential impacts of our work in the scenario where the individual agents are edge devices such as mobile phones since it is a major area of application for FBO and FL.

Specifically, since our algorithm can be used to improve the efficiency of black-box optimization tasks performed by mobile phones, it has the potential of dramatically improving
the efficacy and function of various applications for the user (e.g., the smart keyboard example that we mentioned in Section~\ref{sec:introduction}), which will enhance their user experience and productivity.
On the other hand, some negative impacts of FL also need to be considered when promoting the widespread application of our work.
For example, 
although our work is able to prevent exchanging the raw data in the same way as standard FL,
advanced privacy attack methods (e.g., inference attack) may still incur risks of privacy leak for FL and thus FBO.
Preventing this risk through principled privacy protection techniques (e.g., differential privacy) is important for the widespread adoption of FL and FBO algorithms 
and hence represents an interesting and promising direction for future research.

\begin{ack}
This research/project is supported by A*STAR under its RIE$2020$ Advanced Manufacturing and Engineering (AME) Industry Alignment Fund -- Pre Positioning (IAF-PP) (Award A$19$E$4$a$0101$). 
\end{ack}

\bibliography{fbo}
\bibliographystyle{abbrv}

\newpage
\appendix

\section{Construction of Random Fourier Features}
\label{app:construct_random_features}
As mentioned in Section~\ref{sec:background}, in this work, we focus on the widely used \emph{squared exponential} (SE) kernel: $k(x,x')=\sigma^2_0\exp (-\norm{x-x'}_2^2/(2l^2))$ 
in which $l$ is the length scale and $\sigma_0^2$ is the signal variance.
$\sigma_0^2=1$ is usually the default value, which we use in all experiments.
We construct the random features following the work of~\cite{rahimi2008random}.
Specifically, for the SE kernel with length scale $l$, the spectral density follows a $D$-dimensional Gaussian distribution: $p(s)=\mathcal{N}(0, \frac{1}{l^2}I_{D\times D})$.
To begin with, we draw $M$ independent samples of $\{s_i\}_{i=1,\ldots,M}$ from $p(s)$ (every $s_i$ is a $D$-dimensional vector), 
and $M$ independent samples of $\{b_i\}_{i=1,\ldots,M}$ from the uniform distribution over $[0, 2\pi]$ (every $b_i$ is a scalar).
Next, for an input $x$, the corresponding $M$-dimensional random features (basis functions) can be constructed as $\phi(x)^{\top}=[\sqrt{2/M}\cos(s_i^{\top}x+b_i)]_{i=1,\ldots,M}$.
Each set of random features $\phi(x)$ is then normalized such that $\norm{\phi(x)}_2^2=\sigma^2_0$.
As a result, sharing the random features $\phi(x),\forall x\in\mathcal{X}$ among all agents (Section~\ref{sec:background}) can be achieved by simply sharing the parameters $\{s_i\}_{i=1,\ldots,M}$ and $\{b_i\}_{i=1,\ldots,M}$.
This is easily achievable since it is equivalent to sharing the parameters of the first layer of a neural network model with $M$ units in the hidden layer,
in which $\{s_i\}_{i=1,\ldots,M}$ are the weights (which form a $D\times M$-dimensional weight matrix) and $\{b_i\}_{i=1,\ldots,M}$ are the biases.

\section{GP Posterior/Predictive Belief with Random Fourier Features Approximation}
\label{app:gp_posterior_rff}
Here we derive the expressions of the posterior/predictive mean and variance of a GP with random Fourier features (RFF) approximation (Section~\ref{sec:background}).
Recall that we have defined $\Phi(X_t)=[\phi(x_1),\ldots,\phi(x_t)]^{\top}$ which is a $t\times M$-dimensional matrix.

With the RFF approximation, the kernel function is approximated by $k(x, x') \approx \phi(x)^{\top}\phi(x')$.
Define $\hat{K}_t=[\phi(x_{t'})^{\top}\phi(x_{t''})]_{t',t''=1,\ldots,t}=\Phi(X_t)\Phi(X_t)^{\top}$ and $\hat{k}_t(x)=[\phi(x)^{\top}\phi(x_{t'})]^{\top}_{t'=1,\ldots,t}=\Phi(X_t)\phi(x)$,
which are analogous to $K_t$ and $k_t(x)$ in~\eqref{gp_posterior} with the kernel values $k(x,x')$ replaced by the approximate kernel values $\phi(x)^{\top}\phi(x')$.
With these definitions, we have that
\begin{equation}
\begin{split}
\Phi(X_t)^{\top}\left[\hat{K}_{t} + \sigma^2I\right]&=\Phi(X_t)^{\top}\left[\Phi(X_t)\Phi(X_t)^{\top} + \sigma^2I\right]\\
&=\Phi(X_t)^{\top}\Phi(X_t)\Phi(X_t)^{\top} + \sigma^2\Phi(X_t)^{\top}\\
&=\left[\Phi(X_t)^{\top}\Phi(X_t) + \sigma^2I\right]\Phi(X_t)^{\top}\\
&=\Sigma_t\Phi(X_t)^{\top}.
\end{split}
\end{equation}
Multiplying both sides by $\Sigma^{-1}_t$ from the left and $(\hat{K}_t+\sigma^2I)^{-1}$ from the right, we get
\begin{equation}
\begin{split}
\Sigma^{-1}_t\Phi(X_t)^{\top}=\Phi(X_t)^{\top}(\hat{K}_t+\sigma^2I)^{-1}.
\end{split}
\end{equation}
Then multiplying both sides by $\phi(x)^{\top}$ from the left and $y_t$ from the right, we get
\begin{equation}
\begin{split}
\hat{\mu}_t(x)=\phi(x)^{\top}\nu_t=\phi(x)^{\top}\Sigma^{-1}_t\Phi(X_t)^{\top}y_t&=\phi(x)^{\top}\Phi(X_t)^{\top}(\hat{K}_t+\sigma^2I)^{-1}y_t\\
&=\hat{k}_t(x)^{\top}(\hat{K}_t+\sigma^2I)^{-1}y_t,
\end{split}
\label{eq:post_mean_derivation}
\end{equation}
which proves that the expression of the approximate posterior mean with RFF approximation: $\hat{\mu}_t(x)=\phi(x)^{\top}\nu_t$ 
matches the expression of the posterior mean of standard GP without RFF approximation, 
except that the kernel values $k(x,x')$ are replaced by the approximate kernel values $\phi(x)^{\top}\phi(x')$.

Next, we derive the expression of the approximate posterior variance. Making use of the matrix inversion lemma, we get
\begin{equation}
\begin{split}
\hat{\sigma}^{2}_{t}(x)&=\sigma^2\phi(x)^{\top}\Sigma_t^{-1}\phi(x)=\sigma^2\phi(x)^{\top}(\Phi(X_t)^{\top}\Phi(X_t) + \sigma^2I)^{-1}\phi(x)\\
&=\sigma^2\phi(x)^{\top}\left[\frac{1}{\sigma^2} I - \frac{1}{\sigma^2} \Phi(X_t)^{\top}\left( I + \Phi(X_t) \frac{1}{\sigma^2} \Phi(X_t)^{\top} \right)^{-1} \Phi(X_t)\frac{1}{\sigma^2}\right] \phi(x)\\
&=\phi(x)^{\top}\phi(x) - \phi(x)^{\top}\Phi(X_t)^{\top}\left(\sigma^2I + \Phi(X_t)^{\top}\Phi(X_t) \right)^{-1}\Phi(X_t)\phi(x)\\
&=\hat{k}(x, x) - \hat{k}_t(x)^{\top}\left( \hat{K}_t + \sigma^2I \right)^{-1} \hat{k}_t(x),
\end{split}
\label{eq:post_var_derivation}
\end{equation}
which gives the expression of the approximate posterior variance: $\hat{\sigma}^{2}_{t}(x)=\sigma^2\phi(x)^{\top}\Sigma_t^{-1}\phi(x)$.
To conclude, Equations~\eqref{eq:post_mean_derivation} and~\eqref{eq:post_var_derivation} prove that 
the expressions of the GP posterior mean and variance with RFF approximation given in Section~\ref{sec:background} (in the paragraph after Equation~\eqref{eq:sigma_nu}) match the corresponding expressions of standard GP posterior
mean and variance without RFF approximation~\eqref{gp_posterior}, except that the original kernel values (i.e., $k(x,x')$) are replaced by the corresponding approximate kernel values (i.e., $\phi(x)^{\top}\phi(x')$).

\section{Proof of Theorem~\ref{theorem:fts}}
\label{app:proof_theorem_fts}
As mentioned in Section~\ref{sec:theoretical_results}, we analyze our FTS algorithm in the more general setting 
in which a message can be received from each agent $\mathcal{A}_n$ before every iteration $t$, instead of only before the first iteration.
Therefore, throughout our theoretical analysis, we use $\omega_{n,t}$, instead of $\omega_n$, to denote the message received from agent $\mathcal{A}_n$ before iteration $t$.
Similarly, we use $\hat{g}_{n,t}$, instead of $\hat{g}_{n}$, to denote the corresponding sampled function from agent $\mathcal{A}_n$ with RFF approximation in iteration $t$, obtained using $\omega_{n,t}$:
$\hat{g}_{n,t}(x)=\phi(x)^{\top}\omega_{n,t},\forall x\in\mathcal{X}$.
Note that our theoretical analysis and results also hold in the most general setting where every agent $\mathcal{A}_n$ may collect more observations between different rounds of communication,
in which the only difference is that every $t_n,\forall n=1,\ldots,N$ may increase over different iterations.

Define $\mathcal{F}_{t}$ as the filtration containing agent $\mathcal{A}$'s history of selected inputs and observed outputs up to iteration $t$.
Let $\delta\in(0,1)$, 
we have defined in Theorem~\ref{theorem:fts} that $\beta_{t} = B+\sigma\sqrt{2(\gamma_{t-1} + 1 + \log(4/\delta)}$
and $c_t = \beta_t (1 + \sqrt{2\log(|\mathcal{X}|t^2)})$ for all $t\in\mathbb{Z}^+$.
Clearly, both $\beta_t$ and $c_t$ are increasing in $t$.
Denote by $A_t$ the event that agent $\mathcal{A}$ chooses $x_t$ by maximizing a sampled function from its own GP posterior belief (i.e., $x_t=\arg\max_{x\in \mathcal{X}}f_t(x)$, as in line $4$ of Algorithm~\ref{alg:F_TS}), 
which happens with probability $p_t$; 
denote by $B_t$ the event that $\mathcal{A}$ chooses $x_t$ by maximizing the sampled function from any other agent $\mathcal{A}_1,\ldots,\mathcal{A}_N$ (line $6$ of Algorithm~\ref{alg:F_TS}), 
which happens with probability $(1-p_t)$; 
denote by $B_{t,n}$ the event that $\mathcal{A}$ chooses $x_t$ by maximizing the sampled function of agent $\mathcal{A}_n$ using RFF approximation 
(i.e., $x_t=\arg\max_{x\in\mathcal{X}}\hat{g}_{n,t}(x)$), which happens with probability $(1-p_t) \times P_N[n]$.

To begin with, we define two high-probability events through the following lemmas.
\begin{lemma}
\label{lemma:uniform_bound}
Let $\delta \in (0, 1)$. Define $E^f(t)$ as the event that $|\mu_{t-1}(x) - f(x)| \leq \beta_t \sigma_{t-1}(x)$ for all $x\in \mathcal{X}$.
We have that $\mathbb{P}\left[E^f(t)\right] \geq 1 - \delta / 4$ for all $t\geq 1$.
\end{lemma}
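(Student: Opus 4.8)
The plan is to recognize Lemma~\ref{lemma:uniform_bound} as the standard frequentist confidence bound for Gaussian process regression and to obtain it by directly invoking the self-normalized concentration result of~\cite{chowdhury2017kernelized}. Two facts supply the required hypotheses: (a) the target agent's objective $f$ lies in the RKHS of the kernel $k$ with $\norm{f}_k \le B$, as assumed in Section~\ref{sec:background}; and (b) the observation noise $\epsilon \sim \mathcal{N}(0, \sigma^2)$ is $\sigma$-sub-Gaussian and forms a martingale difference sequence with respect to the filtration $\mathcal{F}_{t}$ generated by $\mathcal{A}$'s past queries and observations. These are exactly the conditions under which the time-uniform confidence bound for the \emph{exact} GP posterior $(\mu_{t-1}, \sigma_{t-1})$ in~\eqref{gp_posterior} applies.

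Concretely, I would proceed as follows. First, I would write the deviation $\mu_{t-1}(x) - f(x)$ and, using the reproducing property of the RKHS together with Cauchy--Schwarz, factor it into a term controlled by $\norm{f}_k \le B$ and a self-normalized noise term, each scaled by the posterior standard deviation $\sigma_{t-1}(x)$. Second, I would bound the self-normalized noise term uniformly over all $t\ge 1$ via the method-of-mixtures (Laplace) argument, which converts the time-uniform control into the log-determinant of the regularized kernel matrix. Third, I would upper-bound this log-determinant by the maximum information gain, producing the $\gamma_{t-1}$ dependence inside $\beta_t$. Choosing the confidence parameter to be $\delta/4$ rather than $\delta$ replaces $\log(1/\delta)$ by $\log(4/\delta)$ and yields precisely $\beta_{t} = B+\sigma\sqrt{2(\gamma_{t-1} + 1 + \log(4/\delta))}$, so that $|\mu_{t-1}(x) - f(x)| \le \beta_t \sigma_{t-1}(x)$ holds simultaneously for all $x \in \mathcal{X}$ and all $t \ge 1$ with probability at least $1 - \delta/4$. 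Since this uniform-in-$t$ statement implies $\mathbb{P}[E^f(t)] \ge 1 - \delta/4$ for each individual $t$, the claim follows.

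I do not expect a genuine obstacle here, since the result is a direct instantiation of a known bound; the only care needed is bookkeeping of the confidence budget. The use of $\delta/4$ anticipates a union bound over the four high-probability events of the overall proof (the remaining three control, respectively, the concentration of the sampled function $f_t$ around $\mu_{t-1}$, the approximation error of the agents' RFF-sampled functions, and the deviation of $\omega_{n,t}$ from its mean $\nu_n$), so that all four together fail with probability at most $\delta$. A minor point worth verifying is that $\gamma_{t-1}$ here denotes the maximum information gain for $\mathcal{A}$'s own kernel $k$, matching the quantity appearing in the final regret bound; this is immediate because Lemma~\ref{lemma:uniform_bound} concerns only $\mathcal{A}$'s exact GP and not the RFF surrogates of the other agents.
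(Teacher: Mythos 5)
Your proof is correct and takes essentially the same route as the paper, which simply cites Theorem 2 of~\cite{chowdhury2017kernelized} instantiated with error probability $\delta/4$; your sketch of the internals (RKHS decomposition, self-normalized martingale bound via method of mixtures, log-determinant bounded by $\gamma_{t-1}$) just unpacks what that cited theorem already provides. One peripheral inaccuracy in your closing remark: the concentration of the sampled function $f_t$ around $\mu_{t-1}$ is handled in the paper with probability $1-1/t^2$ per iteration rather than drawing on the $\delta$ budget, but this does not affect the proof of the lemma itself.
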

Lemma~\ref{lemma:uniform_bound} quantifies the concentration of the function $f$ around its posterior mean and its proof follows directly from Theorem 2 of the work of~\cite{chowdhury2017kernelized} by using an error probability of $\delta/4$.
\begin{lemma}
\label{lemma:uniform_bound_t}
Define $E^{f_t}(t)$ as the event that $|f_t(x) - \mu_{t-1}(x)| \leq \beta_t \sqrt{2\log(|\mathcal{X}|t^2)} \sigma_{t-1}(x)$.
We have that $\mathbb{P}\left[E^{f_t}(t) | \mathcal{F}_{t-1}\right] \geq 1 - 1 / t^2$ for any possible filtration $\mathcal{F}_{t-1}$.
\end{lemma}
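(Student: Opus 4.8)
The plan is to exploit the fact that, conditioned on the filtration $\mathcal{F}_{t-1}$, the GP posterior belief of agent $\mathcal{A}$ is completely determined, so that the random function $f_t$ sampled in line $4$ of Algorithm~\ref{alg:F_TS} is drawn from the \emph{fixed} Gaussian process $\mathcal{GP}(\mu_{t-1}(\cdot),\beta_t^2\sigma_{t-1}^2(\cdot,\cdot))$. In particular, for each \emph{fixed} input $x\in\mathcal{X}$ the marginal law of $f_t(x)$ given $\mathcal{F}_{t-1}$ is a univariate Gaussian with mean $\mu_{t-1}(x)$ and variance $\beta_t^2\sigma_{t-1}^2(x)$. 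Standardizing, the quantity $Z_x\triangleq(f_t(x)-\mu_{t-1}(x))/(\beta_t\sigma_{t-1}(x))$ is a standard normal random variable conditioned on $\mathcal{F}_{t-1}$, which reduces the claim to a tail bound on $|Z_x|$ holding uniformly over $x$.

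Next I would invoke the standard Gaussian tail inequality $\mathbb{P}[Z_x>c\mid\mathcal{F}_{t-1}]\le \tfrac12 e^{-c^2/2}$, valid for all $c\ge0$, and hence $\mathbb{P}[|Z_x|>c\mid\mathcal{F}_{t-1}]\le e^{-c^2/2}$. Choosing the threshold $c=\sqrt{2\log(|\mathcal{X}|t^2)}$ makes the right-hand side equal to exactly $1/(|\mathcal{X}|t^2)$, so for each fixed $x$ the event that $|f_t(x)-\mu_{t-1}(x)|>\beta_t\sqrt{2\log(|\mathcal{X}|t^2)}\,\sigma_{t-1}(x)$ has conditional probability at most $1/(|\mathcal{X}|t^2)$.

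Finally, since $\mathcal{X}$ is assumed discrete (Section~\ref{sec:background}), I would apply a union bound over its $|\mathcal{X}|$ elements: the probability that the deviation bound fails for \emph{some} $x\in\mathcal{X}$ is at most $|\mathcal{X}|\cdot 1/(|\mathcal{X}|t^2)=1/t^2$. Taking the complement yields $\mathbb{P}[E^{f_t}(t)\mid\mathcal{F}_{t-1}]\ge 1-1/t^2$, and the bound holds for every realization of $\mathcal{F}_{t-1}$ because the entire argument is carried out conditionally on $\mathcal{F}_{t-1}$.

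I do not expect a serious obstacle here, as this is a routine conditional-Gaussian concentration argument; the only points requiring care are (i) observing that conditioning on $\mathcal{F}_{t-1}$ freezes the posterior so that $f_t(x)$ is exactly Gaussian with the stated moments, (ii) using the slightly tighter one-sided tail bound $\tfrac12 e^{-c^2/2}$ (rather than the cruder Chernoff bound $e^{-c^2/2}$) so that the two-sided bound is $e^{-c^2/2}$ and the constants line up to give precisely $1/t^2$, and (iii) relying on the discreteness of $\mathcal{X}$ to render the union bound finite, with the compact-domain case handled separately by discretization.
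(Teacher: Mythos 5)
Your proof is correct and takes essentially the same route as the paper: the paper simply cites Lemma 5 of the work of~\cite{chowdhury2017kernelized} (specialized to a discrete domain), and the proof of that lemma is exactly your argument --- conditioned on $\mathcal{F}_{t-1}$, each $f_t(x)$ is Gaussian with mean $\mu_{t-1}(x)$ and standard deviation $\beta_t\sigma_{t-1}(x)$, so the two-sided tail bound $e^{-c^2/2}$ with $c=\sqrt{2\log(|\mathcal{X}|t^2)}$ followed by a union bound over the $|\mathcal{X}|$ points yields $\mathbb{P}\left[E^{f_t}(t)\mid\mathcal{F}_{t-1}\right]\geq 1-1/t^2$. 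You have merely written out explicitly the details that the paper's citation leaves implicit, with the constants lining up in the same way.
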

Lemma~\ref{lemma:uniform_bound_t} illustrates how concentrated a sampled function $f_t$ from $f$ is around its posterior mean and is a simpler version of Lemma 5 of the work of~\cite{chowdhury2017kernelized}. Specifically, 
we have assumed a discrete domain, whereas the work of~\cite{chowdhury2017kernelized} deals with a compact domain.
Note that both events $E^{f}(t)$ and $E^{f_t}(t)$ are $\mathcal{F}_{t-1}$-measurable.

Next, we define a set of inputs at every iteration $t$ called \emph{saturated points}, which represents the set of ``bad'' inputs at iteration $t$.
These inputs are ``bad'' in the sense that the function values at these inputs have relatively large difference from the value of the global maximum of $f$.
In the subsequent proof, we will lower-bound the probability that the selected input $x_t$ is \emph{unsaturated}, which will be a critical step in the proof.
\begin{definition}
\label{def:saturated_set}
Define the set of saturated points at iteration $t$ as
\[
S_t = \{ x \in \mathcal{X} : \Delta(x) > c_t \sigma_{t-1}(x) \}
\]
in which $\Delta(x) = f(x^*) - f(x)$ and $x^* = \arg\max_{x\in \mathcal{X}}f(x)$.
\end{definition}
Note that from this definition, $x^*$ is always unsaturated since $\Delta(x)=f(x^*) - f(x^*) = 0 < c_t \sigma_{t-1}(x^*)$ for all $t\geq 1$. Also note that $S_t$ is $\mathcal{F}_{t-1}$-measurable.

The next lemma bounds the deviation of the sampled function $\hat{g}_{n,t}(x)$ from agent $\mathcal{A}_n$'s GP posterior belief with RFF approximation around its posterior mean $\hat{\mu}_{n,t}(x)$, 
whose proof is based on that of Lemma 11 of~\cite{mutny2018efficient}.
\begin{lemma}
\label{lemma:bound_posterior_sample_around_mean}
Given $\delta \in (0, 1)$. We have that for all agents $\mathcal{A}_n,\forall n=1,\ldots,N$, all $x\in \mathcal{X}$ and all $t\geq 1$, with probability of at least $1 - \delta/4$
\[
|\hat{\mu}_{n,t}(x) - \hat{g}_{n,t}(x)| \leq \sqrt{2\log\frac{2\pi^2t^2N}{3\delta} + M}.
\]
\end{lemma}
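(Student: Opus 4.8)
The plan is to reduce the statement to a concentration bound on the Euclidean norm of a single Gaussian vector, so that the bound becomes uniform over $x \in \mathcal{X}$ and no union bound over the domain is needed. First I would write the quantity of interest as an inner product: since $\hat{\mu}_{n,t}(x) = \phi(x)^\top \nu_{n,t}$ and $\hat{g}_{n,t}(x) = \phi(x)^\top \omega_{n,t}$, we have $\hat{\mu}_{n,t}(x) - \hat{g}_{n,t}(x) = \phi(x)^\top(\nu_{n,t} - \omega_{n,t})$. Applying Cauchy--Schwarz and using the normalization $\norm{\phi(x)}_2 = \sigma_0 = 1$ (Appendix~\ref{app:construct_random_features}) gives $\abs{\hat{\mu}_{n,t}(x) - \hat{g}_{n,t}(x)} \le \norm{\nu_{n,t} - \omega_{n,t}}_2$ for every $x \in \mathcal{X}$ simultaneously. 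This is the key move: the right-hand side is independent of $x$, which is exactly why the final bound carries no $\abs{\mathcal{X}}$ factor.

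Next I would control $\norm{\nu_{n,t} - \omega_{n,t}}_2$ using the posterior law~\eqref{eq:posterior_dist_omega}. Writing $Y \triangleq \omega_{n,t} - \nu_{n,t} \sim \mathcal{N}(0, \sigma^2\Sigma_{n,t}^{-1})$, the crucial observation is that $\Sigma_{n,t} = \Phi(X_{t_n})^\top\Phi(X_{t_n}) + \sigma^2 I \succeq \sigma^2 I$ by~\eqref{eq:sigma_nu}, so the covariance $C \triangleq \sigma^2\Sigma_{n,t}^{-1}$ satisfies $C \preceq I$ and hence $C^{-1} \succeq I$. Since $Y^\top C^{-1} Y \sim \chi^2_M$ exactly for $Y \sim \mathcal{N}(0,C)$, the positive-semidefinite comparison yields $\norm{Y}_2^2 = Y^\top Y \le Y^\top C^{-1} Y$, i.e. $\norm{\nu_{n,t} - \omega_{n,t}}_2^2$ is stochastically dominated by a $\chi^2_M$ random variable. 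Applying the chi-squared tail bound underlying Lemma~11 of~\cite{mutny2018efficient} then gives, for a fixed pair $(n,t)$ and any $\delta_{n,t} \in (0,1)$, that $\norm{\nu_{n,t} - \omega_{n,t}}_2 \le \sqrt{M + 2\log(1/\delta_{n,t})}$ with probability at least $1 - \delta_{n,t}$.

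Finally I would assemble the uniform-in-$(n,t)$ statement by a union bound. Choosing $\delta_{n,t} \triangleq \frac{6\delta}{4\pi^2 N t^2}$, summing over the $N$ agents and over $t \ge 1$ with $\sum_{t\ge1} t^{-2} = \pi^2/6$ makes the total failure probability at most $\delta/4$, while $\log(1/\delta_{n,t}) = \log\frac{2\pi^2 t^2 N}{3\delta}$ reproduces exactly the logarithmic term in the claimed bound. Combined with the Cauchy--Schwarz step, this establishes Lemma~\ref{lemma:bound_posterior_sample_around_mean}.

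I expect the main obstacle to be the concentration step. The naive route through Gaussian Lipschitz concentration of $\norm{Y}_2$ yields the looser $\sqrt{M} + \sqrt{2\log(1/\delta_{n,t})}$, whereas the stated form keeps the two contributions \emph{inside} a single square root, which is tighter and demands the sharper $\chi^2_M$ tail argument rather than a sub-Gaussian norm bound. Handling the fact that the covariance is only bounded by $I$ (rather than equal to $I$), via the quadratic-form comparison $Y^\top Y \le Y^\top C^{-1} Y$, is the device that lets the exact-$\chi^2_M$ concentration be invoked; aligning its constants with the $\frac{2\pi^2 t^2 N}{3\delta}$ factor is the part that needs the most care.
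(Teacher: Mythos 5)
Your proposal is correct and takes essentially the same route as the paper's proof: the paper also reduces the pointwise deviation to the norm of a standard $M$-dimensional Gaussian via Cauchy--Schwarz (writing $\omega_{n,t} = \nu_{n,t} + \sigma\Sigma_{n,t}^{-1/2}z$ and bounding $|\sigma\phi(x)^{\top}\Sigma_{n,t}^{-1/2}z| \leq \hat{\sigma}_{n,t}(x)\norm{z}_2 \leq \norm{z}_2$, which is the same device as your $\norm{\phi(x)}_2=1$ plus $\sigma^2\Sigma_{n,t}^{-1} \preceq I$ comparison), then invokes the chi-squared tail bound $\mathbb{P}(\norm{z}_2^2 \geq M + 2\lambda)\leq \exp(-\lambda)$ from~\cite{mutny2018efficient} with exactly your failure-probability allocation $\frac{3\delta}{2\pi^2 t^2 N}$ and the same union bound over agents and iterations. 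The only difference is the cosmetic choice of where Cauchy--Schwarz is split, which changes nothing substantive.
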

\begin{proof}
Recall from Section~\ref{sec:background} that the sampled function $\hat{g}_{n,t}$ is obtained by firstly sampling $\omega_{n,t} \sim \mathcal{N}(\nu_{n,t}, \sigma^2\Sigma_{n,t}^{-1})$, 
and then setting $\hat{g}_{n,t}(x)=\phi(x)^{\top}\omega_{n,t}, \forall x \in \mathcal{X}$.
Moreover, we have shown in Section~\ref{sec:background} that $\hat{\mu}_{n,t}(x) = \phi(x)^{\top}\nu_{n,t}$.
Denote $\omega_{n,t} = \nu_{n,t} + \sigma \Sigma_{n,t}^{-1/2} z$, in which $z\sim\mathcal{N}(0,I)$ is the $M\times 1$-dimensional standard Gaussian distribution.
We have that
\begin{equation}
\begin{split}
|\hat{\mu}_{n,t}(x) - \hat{g}_{n,t}(x)|^2 &= |\phi(x)^{\top}\nu_{n,t} - \phi(x)^{\top}(\nu_{n,t} + \sigma \Sigma_{n,t}^{-1/2} z)|^2\\
&=|\sigma\phi(x)^{\top} \Sigma_{n,t}^{-1/2} z|^2\\
&\leq \sigma^2 \norm{\phi(x)^{\top} \Sigma_{n,t}^{-1/2}}_2^2 \norm{z}_2^2\\
&=\sigma^2\phi(x)^{\top} \Sigma_{n,t}^{-1}\phi(x) \norm{z}_2^2\\
&=\hat{\sigma}^2_{n,t}(x)\norm{z}_2^2 \leq \norm{z}_2^2,
\end{split}
\end{equation}
in which we have made use of
the assumption w.l.o.g.~that the posterior variance is upper-bounded by $1$ in the last inequality.
Next, making use of the concentration of chi-squared distribution: $\mathbb{P}(\norm{z}_2^2 \geq M + 2\lambda) \leq \exp(-\lambda)$~\cite{mutny2018efficient}, 
we have that with probability of at least $1 - \frac{3\delta}{2\pi^2t^2N}$,
\begin{equation}
\norm{z}_2^2 \leq M + 2\log\frac{2\pi^2t^2N}{3\delta}.
\end{equation}
Taking a union bound over all agents $\mathcal{A}_1,\ldots,\mathcal{A}_N$ and over all $t\geq 1$ completes the proof.
\end{proof}

The following lemma uniformly upper-bounds the difference between agent $\mathcal{A}_n$'s objective function $g_{n}$ and sampled function $\hat{g}_{n,t}$ from its GP posterior belief with RFF approximation.
\begin{lemma}
\label{lemma:bound_gt_gn}
Given any $\delta \in (0, 1)$. 
For agent $\mathcal{A}_n$'s sampled function $\hat{g}_{n,t}$ from its GP posterior belief with RFF approximation, we have that for all agents $\mathcal{A}_n,\forall n=1,\ldots,N$, all $x\in \mathcal{X}$ and all $t\geq 1$, with probability of at least $1 - \delta/2$,
\[
|\hat{g}_{n,t}(x) - g_{n}(x)|\leq \tilde{\Delta}_{n,t},
\]
where $\beta'_{t} = B+\sigma\sqrt{2(\gamma_{t-1} + 1 + \log(8N/\delta)}$, and
\[
\tilde{\Delta}_{n,t} \triangleq \varepsilon\frac{(t_n+1)^2}{\sigma^2}\left(B +  \sqrt{2\log\left(\frac{4\pi^2t^2N}{3\delta}\right)}  \right) + \beta'_{t_n+1} + \sqrt{2\log\frac{2\pi^2t^2N}{3\delta} + M}.
\]
\end{lemma}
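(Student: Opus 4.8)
The plan is to bound $|\hat{g}_{n,t}(x) - g_{n}(x)|$ by inserting two intermediate quantities and applying the triangle inequality, namely the RFF-approximate posterior mean $\hat{\mu}_{n,t}(x) = \phi(x)^\top \nu_{n,t}$ and the exact (non-approximate) GP posterior mean $\mu_{n,t}(x)$ of agent $\mathcal{A}_n$ computed from its $t_n$ observations. This yields
\[
|\hat{g}_{n,t}(x) - g_{n}(x)| \leq \underbrace{|\hat{g}_{n,t}(x) - \hat{\mu}_{n,t}(x)|}_{(\mathrm{I})} + \underbrace{|\hat{\mu}_{n,t}(x) - \mu_{n,t}(x)|}_{(\mathrm{II})} + \underbrace{|\mu_{n,t}(x) - g_{n}(x)|}_{(\mathrm{III})},
\]
and I would match each summand to one of the three terms in $\tilde{\Delta}_{n,t}$.

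Two of these terms are immediate. Term $(\mathrm{I})$ is exactly the quantity controlled by Lemma~\ref{lemma:bound_posterior_sample_around_mean}, which supplies the $\sqrt{2\log(2\pi^2 t^2 N/(3\delta)) + M}$ contribution with probability at least $1-\delta/4$, uniformly over all agents, inputs, and iterations. Term $(\mathrm{III})$ is the concentration of the exact GP posterior mean of agent $\mathcal{A}_n$ around its own objective $g_n$; applying Theorem~2 of~\cite{chowdhury2017kernelized} to $\mathcal{A}_n$ (which has $t_n$ observations) and using the assumption that the posterior standard deviation is at most $1$ gives $|\mu_{n,t}(x) - g_n(x)| \leq \beta'_{t_n+1}\,\sigma_{n,t}(x) \leq \beta'_{t_n+1}$. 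The inflated confidence parameter $\beta'$ carries $\log(8N/\delta)$ rather than $\log(4/\delta)$ precisely to absorb a union bound over the $N$ agents, so this step holds simultaneously for all $n$ with probability at least $1-\delta/8$.

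The crux is Term $(\mathrm{II})$, the error incurred by the RFF approximation in the posterior mean itself. Here I would start from the closed-form expressions $\mu_{n,t}(x) = k_{t_n}(x)^\top (K_{t_n}+\sigma^2 I)^{-1} y_{t_n}$ and $\hat{\mu}_{n,t}(x) = \hat{k}_{t_n}(x)^\top (\hat{K}_{t_n}+\sigma^2 I)^{-1} y_{t_n}$ (the latter justified in Appendix~\ref{app:gp_posterior_rff}) and run a matrix-perturbation argument in the spirit of Lemma~11 of~\cite{mutny2018efficient}. The uniform RFF guarantee $\sup_{x,x'}|k(x,x') - \phi(x)^\top\phi(x')| \leq \varepsilon$ bounds the entrywise perturbations, so $\|k_{t_n}(x) - \hat{k}_{t_n}(x)\|_2 = \mathcal{O}(\sqrt{t_n}\,\varepsilon)$ and $\|K_{t_n} - \hat{K}_{t_n}\|_{\mathrm{op}} = \mathcal{O}(t_n\varepsilon)$; the regularizer $\sigma^2 I$ caps the operator norms of both inverses by $1/\sigma^2$, and the identity $(K+\sigma^2 I)^{-1} - (\hat{K}+\sigma^2 I)^{-1} = (K+\sigma^2 I)^{-1}(\hat{K}-K)(\hat{K}+\sigma^2 I)^{-1}$ converts the Gram-matrix perturbation into a perturbation of the inverse. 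Splitting $\mu_{n,t}(x) - \hat{\mu}_{n,t}(x)$ into a contribution from the perturbed kernel vector and one from the perturbed inverse, collecting the $\sqrt{t_n}$ factors from the vector norms and the bound $\|y_{t_n}\|_2 \leq \sqrt{t_n}\max_i|y(x_i)|$, produces the $\varepsilon (t_n+1)^2/\sigma^2$ scaling.

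The last ingredient for Term $(\mathrm{II})$ is a high-probability bound on the observation magnitudes: since $y(x_i) = g_n(x_i) + \epsilon_i$ with $|g_n(x_i)| \leq B$ and $\epsilon_i \sim \mathcal{N}(0,\sigma^2)$, a Gaussian tail bound with a union bound over the $t_n$ observations, the $N$ agents, and the iterations $t$ (the $t^2$ inside the logarithm supporting a $\sum_t 1/t^2$ summation) gives $\max_i |y(x_i)| \leq B + \sqrt{2\log(4\pi^2 t^2 N/(3\delta))}$ with probability at least $1-\delta/8$, which is exactly the factor multiplying $\varepsilon(t_n+1)^2/\sigma^2$. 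Adding the three error budgets, $\delta/4 + \delta/8 + \delta/8 = \delta/2$, yields the claimed confidence. I expect the main obstacle to be Term $(\mathrm{II})$: one must keep the polynomial dependence on $t_n$ and the $1/\sigma^2$ factor tight while carefully tracking the interaction between the perturbation of the kernel vector $k_{t_n}(x)$ and the perturbation of the regularized inverse, rather than losing extra powers of $t_n$ or $1/\sigma$ through loose norm bounds.
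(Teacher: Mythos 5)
Your proposal is correct and takes essentially the same approach as the paper's proof: the identical three-term triangle decomposition through $\hat{\mu}_{n,t}$ and $\mu_{n,t}$, the same use of Lemma~\ref{lemma:bound_posterior_sample_around_mean} for the sample-around-mean term and of Lemma~\ref{lemma:uniform_bound} (Theorem~2 of~\cite{chowdhury2017kernelized}, rescaled to $\delta/(8N)$ with a union bound over agents) for the mean-around-function term, and the same $\delta/4+\delta/8+\delta/8=\delta/2$ error budget. The only difference is cosmetic: for the posterior-mean approximation term the paper directly invokes Theorem~5 of~\cite{mutny2018efficient} with error probability $3\delta/(4\pi^2t^2N)$ and a union bound over agents and iterations, whereas you sketch that matrix-perturbation argument from scratch (and attribute it to Lemma~11 of~\cite{mutny2018efficient}, which the paper actually uses for the chi-squared concentration in Lemma~\ref{lemma:bound_posterior_sample_around_mean}).
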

\begin{proof}
Recall that $\varepsilon$ is the accuracy of the RFF approximation, $t_n$ is the number of iterations that agent $\mathcal{A}_n$ has completed in its own BO task when it passes information to $\mathcal{A}$,
$M$ is the number of random features used in the RFF approximation.
Denote by $\hat{\mu}_{n,t}(x)$ and $\mu_{n,t}(x)$ ($\hat{\sigma}_{n,t}(x)$ and $\sigma_{n,t}(x)$) the posterior mean (standard deviation) at $x$ of agent $\mathcal{A}_n$'s GP after running its own BO task for $t_n$ iterations with and without the RFF approximation respectively. 

We have that for all $x\in \mathcal{X}$, all agents $\mathcal{A}_n$ and all $t\geq 1$, with probability of at least $1 - {\delta}/{8}$,
\begin{equation}
    |\mu_{n,t}(x) - \hat{\mu}_{n,t}(x)| \leq \varepsilon\frac{(t_n+1)^2}{\sigma^2}\left(B + \sqrt{2\log\left(\frac{4\pi^2t^2N}{3\delta}\right)}\right),
\end{equation}
which can be proved by following the proof of Theorem 5 in the work of~\cite{mutny2018efficient} by substituting the error probability of 
$\frac{3\delta}{4\pi^2t^2N}$,
and taking a union bound over all agents and all $t\geq 1$.
Next, making use of Lemma~\ref{lemma:uniform_bound} (replacing $f$ by $g_n$, and $\delta/4$ by $\delta/(8N)$), we get
\begin{equation}
|\mu_{n,t}(x) - g_n(x)| \leq \beta'_{t_n+1}\sigma_{n,t}(x) \leq \beta'_{t_n+1},
\end{equation}
which holds for all $x\in \mathcal{X}$, agents $\mathcal{A}_n$ and $t_n\geq 1$, with probability of at least $1 - {\delta}/{8}$.
The last inequality follows from our assumption w.l.o.g. that the posterior variance is upper-bounded by $1$. 


Combining the two equations above and making use of Lemma~\ref{lemma:bound_posterior_sample_around_mean} completes the proof.
\end{proof}

The next lemma shows a uniform upper bound on the difference between the sampled function $f_t$ of agent $\mathcal{A}$ and that of agent $\mathcal{A}_n$ with RFF approximation ($\hat{g}_{n,t}$).
\begin{lemma}
\label{lemma:bound_gt_ft}
At iteration $t$, conditioned on the events $E^{f}(t)$ and $E^{f_t}(t)$, we have that for all agents $\mathcal{A}_n,\forall n=1,\ldots,N$ and for all $x\in \mathcal{X}$ with probability $\geq 1 - \delta/2$
\[
|\hat{g}_{n,t}(x) - f_t(x)|\leq \Delta_{n,t},
\]
in which 
\[
\Delta_{n, t} \triangleq \varepsilon\frac{(t_n+1)^2}{\sigma^2}\left(B +  \sqrt{2\log\left(\frac{4\pi^2t^2N}{3\delta}\right)}  \right) + \beta'_{t_n+1} + \sqrt{2\log\frac{2\pi^2t^2N}{3\delta} + M} + d_n + c_t.
\]
\end{lemma}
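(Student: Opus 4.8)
The plan is to bound $|\hat{g}_{n,t}(x) - f_t(x)|$ by inserting the two intermediate functions $g_n$ and $f$ and applying the triangle inequality:
\[
|\hat{g}_{n,t}(x) - f_t(x)| \leq |\hat{g}_{n,t}(x) - g_n(x)| + |g_n(x) - f(x)| + |f(x) - f_t(x)|.
\]
Each of the three terms is then controlled by a result already established in the excerpt, and the three bounds are designed to sum to the stated $\Delta_{n,t}$.

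First, the term $|\hat{g}_{n,t}(x) - g_n(x)|$ is exactly what Lemma~\ref{lemma:bound_gt_gn} bounds: uniformly over all agents $\mathcal{A}_n$, all $x\in\mathcal{X}$, and all $t\geq 1$, it is at most $\tilde{\Delta}_{n,t}$ with probability at least $1-\delta/2$. This is the only genuine source of randomness in the statement, and it already carries the entire $\delta/2$ failure budget. Second, the term $|g_n(x) - f(x)|$ is deterministically at most $d_n$, directly from the definition $d_n \triangleq \max_{x\in\mathcal{X}}|f(x) - g_n(x)|$.

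Third, I would control $|f(x) - f_t(x)|$ using the two conditioning events. Conditioned on $E^f(t)$ (Lemma~\ref{lemma:uniform_bound}) we have $|\mu_{t-1}(x) - f(x)| \leq \beta_t\sigma_{t-1}(x)$, and conditioned on $E^{f_t}(t)$ (Lemma~\ref{lemma:uniform_bound_t}) we have $|f_t(x) - \mu_{t-1}(x)| \leq \beta_t\sqrt{2\log(|\mathcal{X}|t^2)}\,\sigma_{t-1}(x)$. A triangle inequality through the posterior mean $\mu_{t-1}(x)$ then gives
\[
|f(x) - f_t(x)| \leq \beta_t\left(1 + \sqrt{2\log(|\mathcal{X}|t^2)}\right)\sigma_{t-1}(x) = c_t\,\sigma_{t-1}(x) \leq c_t,
\]
where the last step uses the w.l.o.g.~assumption $\sigma_{t-1}(x) \leq 1$ together with the definition $c_t \triangleq \beta_t(1+\sqrt{2\log(|\mathcal{X}|t^2)})$.

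Summing the three bounds yields $\tilde{\Delta}_{n,t} + d_n + c_t$, which is precisely the claimed $\Delta_{n,t}$; since the second and third terms are deterministic given the conditioning, the only randomness is that inherited from Lemma~\ref{lemma:bound_gt_gn}, so the bound holds with probability at least $1-\delta/2$ under the conditioning on $E^f(t)$ and $E^{f_t}(t)$. There is no substantial obstacle here: the argument is a routine three-term triangle inequality. The only point requiring care is the bookkeeping — confirming that $\Delta_{n,t}$ decomposes exactly as $\tilde{\Delta}_{n,t} + d_n + c_t$, and verifying that no additional union bound is consumed beyond the one already spent in Lemma~\ref{lemma:bound_gt_gn}, since both the $d_n$ bound and the $c_t$ bound follow deterministically from the definitions and the conditioning events.
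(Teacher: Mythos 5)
Your proof is correct and follows essentially the same route as the paper's: the same three-term triangle inequality through $g_n$ and $f$, with $|\hat{g}_{n,t}(x)-g_n(x)|\leq\tilde{\Delta}_{n,t}$ from Lemma~\ref{lemma:bound_gt_gn} carrying the full $\delta/2$ failure probability, the deterministic $d_n$ bound, and $|f(x)-f_t(x)|\leq c_t\sigma_{t-1}(x)\leq c_t$ obtained by chaining the two conditioning events through $\mu_{t-1}(x)$. Your bookkeeping observation that no additional union bound is consumed matches the paper's accounting exactly.
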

\begin{proof}
Firstly, note that since we condition on both events $E^{f}(t)$ and $E^{f_t}(t)$, we have that for all $x\in \mathcal{X}$ and all $t\geq 1$
\begin{equation}
\begin{split}
|f(x) - f_t(x)| &\leq |f(x) - \mu_{t-1}(x)| + |\mu_{t-1}(x) - f_t(x)|\\
&= \beta_t\sigma_{t-1}(x) + \beta_t \sqrt{2\log(|\mathcal{X}|t^2)} \sigma_{t-1}(x)=c_t\sigma_{t-1}(x)
\end{split}
\label{eq:combine_two_events}
\end{equation}
Next,
\begin{equation}
\begin{split}
|\hat{g}_{n,t}(x) - f_t(x)| &\leq
|\hat{g}_{n,t}(x) - g_n(x)| + |g_n(x) - f(x)| + |f(x) - f_t(x)|\\
&\leq \tilde{\Delta}_{n,t} + d_n + c_t\sigma_{t-1}(x)\\
&\leq \tilde{\Delta}_{n,t} + d_n + c_t,
\end{split}
\end{equation}
in which we have made use of Lemma~\ref{lemma:bound_gt_gn}, 
the definition of $d_n$: $d_n=\max_{x\in \mathcal{X}}|f(x) - g_n(x)|$ (Section~\ref{sec:background}, last paragraph),
Equation~\eqref{eq:combine_two_events}, and the assumption that the posterior variance is upper-bounded by $1$.
Plugging in the expression of $\tilde{\Delta}_{n,t}$ from Lemma~\ref{lemma:bound_gt_gn} completes the proof.
\end{proof}


\begin{lemma}
\label{lemma:uniform_lower_bound}
For any filtration $\mathcal{F}_{t-1}$, conditioned on the events $E^f(t)$ and $A_t$, we have that for every $x\in \mathcal{X}$,
\begin{equation}
\mathbb{P}\left(f_t(x) > f(x) | \mathcal{F}_{t-1},E^f(t), A_t\right) \geq p,
\label{eq:tmp_1}
\end{equation}
in which $p=\frac{1}{4e\sqrt{\pi}}$. 
\end{lemma}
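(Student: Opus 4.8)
The plan is to recognize this as the standard Thompson-sampling \emph{anti-concentration} lemma: under the event $A_t$ the sampled function $f_t$ is drawn from agent $\mathcal{A}$'s own GP posterior, so $f_t(x)$ is Gaussian, and I want to show that it overshoots the true value $f(x)$ with at least constant probability. First I would note that $\mu_{t-1}(x)$, $\sigma_{t-1}(x)$ and the event $E^f(t)$ are all $\mathcal{F}_{t-1}$-measurable, so once we condition on $\mathcal{F}_{t-1}$ these become deterministic and the only remaining randomness is the draw of $f_t$. Conditioned on $\mathcal{F}_{t-1}$ and $A_t$ (line $4$ of Algorithm~\ref{alg:F_TS}), we have $f_t(x) \sim \mathcal{N}(\mu_{t-1}(x), \beta_t^2\sigma_{t-1}^2(x))$, so that $Z \triangleq (f_t(x)-\mu_{t-1}(x))/(\beta_t\sigma_{t-1}(x))$ is a standard normal random variable whose law is unaffected by the further $\mathcal{F}_{t-1}$-measurable conditioning on $E^f(t)$.

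Next I would rewrite the target event in standardized form:
\[
\mathbb{P}\left(f_t(x) > f(x) \mid \mathcal{F}_{t-1}, E^f(t), A_t\right) = \mathbb{P}\left(Z > \frac{f(x)-\mu_{t-1}(x)}{\beta_t\sigma_{t-1}(x)} \,\Big|\, \mathcal{F}_{t-1}, E^f(t), A_t\right).
\]
The role of $E^f(t)$ is to control the threshold: by its definition (Lemma~\ref{lemma:uniform_bound}) we have $f(x)-\mu_{t-1}(x) \leq \beta_t\sigma_{t-1}(x)$, hence the standardized threshold is at most $1$. Since the Gaussian upper-tail probability is monotonically decreasing in its argument, this yields the uniform bound $\mathbb{P}(\,\cdot\,) \geq \mathbb{P}(Z > 1)$ for every $x\in\mathcal{X}$.

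It then remains to invoke a Gaussian anti-concentration inequality to turn $\mathbb{P}(Z>1)$ into an explicit constant. I would use the standard lower bound $\mathbb{P}(Z > c) \geq e^{-c^2}/(4\sqrt{\pi})$ valid for all $c\geq 0$ (the same type of bound used in the GP-TS analyses of~\cite{chowdhury2017kernelized,mutny2018efficient}); evaluating at $c=1$ gives exactly $\mathbb{P}(Z>1) \geq e^{-1}/(4\sqrt{\pi}) = 1/(4e\sqrt{\pi}) = p$, which is the claimed constant.

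The few subtleties I would be careful about are minor. The only genuinely quantitative step is choosing the anti-concentration inequality with the right constant so that the value at $c=1$ lands precisely on $1/(4e\sqrt{\pi})$; a looser tail bound (e.g.\ the Mills-ratio form $\frac{c}{c^2+1}\frac{e^{-c^2/2}}{\sqrt{2\pi}}$) would give a different constant. I would also note the degenerate case $\sigma_{t-1}(x)=0$, where the standardization is ill-defined: there $f_t(x)=\mu_{t-1}(x)$ almost surely and, under $E^f(t)$, $\mu_{t-1}(x)=f(x)$, so this case can be handled separately or excluded since the GP posterior variance is strictly positive on the discrete domain. Apart from this, the argument is a direct reduction to a one-dimensional standard-normal tail estimate.
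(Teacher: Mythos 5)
Your proof is correct and follows essentially the same route as the paper, which simply defers to Lemma 8 of Chowdhury and Gopalan: conditioned on $A_t$, $f_t(x)\sim\mathcal{N}(\mu_{t-1}(x),\beta_t^2\sigma_{t-1}^2(x))$, the event $E^f(t)$ bounds the standardized threshold $(f(x)-\mu_{t-1}(x))/(\beta_t\sigma_{t-1}(x))$ by $1$, and Gaussian anti-concentration at that point gives $p=1/(4e\sqrt{\pi})$. Your tail bound $\mathbb{P}(Z>c)\geq e^{-c^2}/(4\sqrt{\pi})$ coincides at $c=1$ with the paper's stated form $\mathbb{P}(Z>\beta)\geq e^{-\beta^2}/(4\sqrt{\pi}\beta)$, and your side remarks (measurability of $E^f(t)$, strict positivity of the posterior variance under observation noise $\sigma^2>0$) are accurate.
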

As shown in the proof of Lemma 8 of~\cite{chowdhury2017kernelized}, the proof of Lemma~\ref{lemma:uniform_lower_bound} makes use of the fact that $f_t(x)\sim \mathcal{N}(\mu_{t-1}(x), \beta_t^2\sigma_{t-1}^2(x))$ since we are conditioning on the event $A_t$, the confidence bound given in Lemma~\ref{lemma:uniform_bound} which holds since we are conditioning on the event $E^{f}(t)$, and the Gaussian anti-concentration lemma. That is, for a Gaussian random variable $X\sim \mathcal{N}(\mu, \sigma^2)$, for any $\beta > 0$, we have that 
\[
\mathbb{P}\left(\frac{X-\mu}{\sigma} > \beta\right) \geq \frac{\exp(-\beta^2)}{4\sqrt{\pi}\beta}.
\]

The next lemma shows that in each iteration $t$, the probability that an unsaturated input is selected can be lower-bounded.
\begin{lemma}
\label{lemma:prob_unsaturated}
For any filtration $\mathcal{F}_{t-1}$, conditioned on the event $E^f(t)$, we have that with probability $\geq 1 - \delta/2$,
\[
\mathbb{P}\left(x_t \in \mathcal{X}\setminus S_t | \mathcal{F}_{t-1} \right) \geq P_t,
\]
in which 
\[
P_t \triangleq p_t (p - 1/t^2).
\]
\end{lemma}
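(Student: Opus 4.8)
The plan is to transplant the standard Gaussian-process Thompson-sampling argument of~\cite{chowdhury2017kernelized} into the mixture structure of FTS, extracting the entire lower bound from the iterations in which $\mathcal{A}$ samples from its \emph{own} GP (the event $A_t$), since only on this branch do we have anti-concentration control over $f_t$. Because $A_t$ occurs independently with probability $p_t$ and the conditioning event $E^f(t)$ (together with $S_t$) is $\mathcal{F}_{t-1}$-measurable, the first step is the decomposition
\[
\mathbb{P}\!\left(x_t \in \mathcal{X}\setminus S_t \mid \mathcal{F}_{t-1}, E^f(t)\right) \;\geq\; \mathbb{P}(A_t)\,\mathbb{P}\!\left(x_t \in \mathcal{X}\setminus S_t \mid \mathcal{F}_{t-1}, E^f(t), A_t\right) \;=\; p_t\,\mathbb{P}\!\left(x_t \in \mathcal{X}\setminus S_t \mid \mathcal{F}_{t-1}, E^f(t), A_t\right),
\]
which simply discards the uncontrolled $B_t$ branch and already isolates the factor $p_t$ appearing in $P_t$.

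The heart of the argument is to show that the remaining conditional probability is at least $p-1/t^2$, and I would do this by a contradiction on the saturation condition. Work on the intersection of $E^f(t)$, the event $E^{f_t}(t)$ of Lemma~\ref{lemma:uniform_bound_t}, and the anti-concentration event $\{f_t(x^*) > f(x^*)\}$. Combining $E^f(t)$ and $E^{f_t}(t)$ gives $|f_t(x)-f(x)|\leq c_t\sigma_{t-1}(x)$ for all $x$, exactly as in Equation~\eqref{eq:combine_two_events}. If $x_t$ were saturated, then Definition~\ref{def:saturated_set} would force $f_t(x_t)\leq f(x_t)+c_t\sigma_{t-1}(x_t) < f(x_t)+\Delta(x_t)=f(x^*)<f_t(x^*)$, contradicting $x_t=\arg\max_{x}f_t(x)$ on $A_t$; hence $x_t$ is unsaturated on this intersection. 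Applying a union bound and invoking the Gaussian anti-concentration estimate of Lemma~\ref{lemma:uniform_lower_bound} at $x=x^*$ and the $1/t^2$ slack of Lemma~\ref{lemma:uniform_bound_t} yields
\[
\mathbb{P}\!\left(x_t \in \mathcal{X}\setminus S_t \mid \mathcal{F}_{t-1}, E^f(t), A_t\right) \;\geq\; \mathbb{P}\!\left(f_t(x^*)>f(x^*)\mid\cdots\right) - \mathbb{P}\!\left(\neg E^{f_t}(t)\mid\cdots\right) \;\geq\; p-\tfrac{1}{t^2},
\]
and multiplying by $p_t$ gives $P_t=p_t(p-1/t^2)$.

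Regarding the main obstacle: the saturation contradiction itself is routine, so the delicate points are bookkeeping rather than conceptual. First, I must propagate the conditioning carefully so that Lemmas~\ref{lemma:uniform_bound_t} and~\ref{lemma:uniform_lower_bound} remain applicable after the further conditioning on $A_t$ and $E^f(t)$; this rests entirely on the $\mathcal{F}_{t-1}$-measurability of $E^f(t)$, $E^{f_t}(t)$, and $S_t$, and on Lemma~\ref{lemma:uniform_bound_t} holding for \emph{any} filtration. Second, the step that genuinely departs from vanilla GP-TS is the decision to throw away the $B_t$ branch, which is what produces the multiplicative $p_t$ and reflects that cross-agent samples carry no own-GP anti-concentration guarantee. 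Finally, the outer factor $1-\delta/2$ in the statement is carried purely for consistency: the inequality $P_t=p_t(p-1/t^2)$ holds deterministically once we condition on $E^f(t)$, and keeping it on the same probability-$(1-\delta/2)$ event as Lemma~\ref{lemma:bound_gt_ft} lets the unsaturated-probability bound be merged with the companion regret bounds under a single union bound in the overall proof of Theorem~\ref{theorem:fts}.
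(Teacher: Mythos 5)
Your proposal is correct and follows essentially the same route as the paper's proof: the same factorization through the event $A_t$ to extract the multiplicative $p_t$, the same combination of $E^f(t)$ and $E^{f_t}(t)$ to get $f_t(x)\leq f(x^*)$ on the saturated set (your contradiction on $x_t=\arg\max_{x\in\mathcal{X}}f_t(x)$ is just a repackaging of the paper's event inclusion $\{f_t(x^*)>f_t(x),\forall x\in S_t\}\subseteq\{x_t\in\mathcal{X}\setminus S_t\}$), and the same union-bound step combining the anti-concentration bound of Lemma~\ref{lemma:uniform_lower_bound} with the $1/t^2$ failure probability of Lemma~\ref{lemma:uniform_bound_t}. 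Your closing remark about the $1-\delta/2$ factor being bookkeeping carried for consistency with Lemma~\ref{lemma:bound_gt_ft} also matches how the paper uses this lemma downstream.
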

\begin{proof}
Note that all probabilities in this proof are conditioned on the event $E^{f}(t)$ and thus this conditioning is omitted for simplicity.
At iteration $t$, the probability that the selected input $x_t$ is unsaturated can be lower-bounded by:
\begin{equation}
\begin{split}
\mathbb{P}&\left(x_t \in \mathcal{X}\setminus S_t | \mathcal{F}_{t-1} \right) \geq \mathbb{P}\left(x_t \in \mathcal{X}\setminus S_t | \mathcal{F}_{t-1},A_t \right)\mathbb{P}(A_t)=\mathbb{P}\left(x_t \in \mathcal{X}\setminus S_t | \mathcal{F}_{t-1},A_t \right)p_t
\end{split}
\label{eq:unsaturated_prob_over_all_eq}
\end{equation}
Next, we attempt to lower-bound $\mathbb{P}\left(x_t \in \mathcal{X}\setminus S_t | \mathcal{F}_{t-1},A_t \right)$.


Firstly, recall that conditioned on the event $A_t$, $x_t$ is selected by maximizing $f_t$, which is sampled from the GP posterior belief of function $f$. This gives rise to:
\begin{equation}
\mathbb{P}\left(x_t \in \mathcal{X}\setminus S_t | \mathcal{F}_{t-1},A_t \right) \geq \mathbb{P}\left( f_t(x^*) > f_t(x),\forall x \in S_t | \mathcal{F}_{t-1},A_t \right).
\label{eq:lower_bound_prob_unsaturated}
\end{equation}
This inequality can be obtained by observing that the event on the right hand side is a subset of the event on the left hand side. Specifically, recall from Definition~\ref{def:saturated_set} that $x^*$ is always unsaturated. Therefore, if $f_t(x^*) > f_t(x),\forall x \in S_t$, as a result of the way in which $x_t$ is selected (i.e., $x_t=\arg\max_{x\in\mathcal{X}}f_t(x)$), this guarantees that an unsaturated input will be selected as $x_t$ since at least one unsaturated input ($x^*$) has a larger value of $f_t$ than all saturated inputs.

Next, we assume that both events $E^f(t)$ and $E^{f_t}(t)$ are true, which allows us to derive an upper bound on $f_t(x)$ for all $x\in S_t$:
\begin{equation}
\begin{split}
    f_t(x) \stackrel{(a)}{\leq} f(x) + c_t\sigma_{t-1}(x) \stackrel{(b)}{\leq} f(x) + \Delta(x)=f(x) + f(x^*) - f(x) = f(x^*),
\end{split}
\label{eq:bound_ftx_ftstar}
\end{equation}
in which $(a)$ follows from~\eqref{eq:combine_two_events} since here we also assume both events $E^f(t)$ and $E^{f_t}(t)$ are true, 
and $(b)$ results from the definition of saturated set (Definition~\ref{def:saturated_set}). Therefore,~\eqref{eq:bound_ftx_ftstar} implies that 
\begin{equation}
    \mathbb{P}\left( f_t(x^*) > f_t(x),\forall x \in S_t | \mathcal{F}_{t-1},A_t, E^{f_t}(t) \right) \geq \mathbb{P}\left( f_t(x^*) > f(x^*) | \mathcal{F}_{t-1},A_t, E^{f_t}(t) \right).
\end{equation}
Next, we can show that 
\begin{equation}
\begin{split}
    \mathbb{P}\left(x_t \in \mathcal{X}\setminus S_t | \mathcal{F}_{t-1},A_t \right) &\geq 
    \mathbb{P}\left( f_t(x^*) > f_t(x),\forall x \in S_t | \mathcal{F}_{t-1},A_t \right)\\
    &\stackrel{(a)}{\geq} \mathbb{P}\left( f_t(x^*) > f(x^*) | \mathcal{F}_{t-1},A_t \right) - \mathbb{P}\left(\overline{E^{f_t}(t)} | \mathcal{F}_{t-1}\right)\\
    &\stackrel{(b)}{\geq} p - 1 / t^2,
\end{split}
\label{eq:unsaturated_prob_plugin_1}
\end{equation}
in which $(a)$ follows from some simple probabilistic manipulations and the fact that the event $E^{f_t}(t)$ is $\mathcal{F}_{t-1}$-measurable and thus independent of the event $A_t$, $(b)$ results from Lemma~\ref{lemma:uniform_lower_bound} and the fact that the event $E^{f_t}(t)$ holds with probability of at least $1 - 1 / t^2$.
Combining this inequality with~\eqref{eq:unsaturated_prob_over_all_eq} completes the proof.

\end{proof}

The next lemma presents an upper bound on the expected instantaneous regret of the FTS algorithm.
\begin{lemma}
\label{lemma:upper_bound_expected_regret}
For any filtration $\mathcal{F}_{t-1}$, conditioned on the event $E^{f}(t)$, we have that with probability of $\geq 1 - \delta/2$
\[
\mathbb{E}[r_t |\mathcal{F}_{t-1}] \leq c_t\left(1 + \frac{10}{p p_1}\right)\mathbb{E}\left[\sigma_{t-1}(x_t)| \mathcal{F}_{t-1} \right] + \psi_t + \frac{2B}{t^2},
\]
in which $r_t$ is the instantaneous regret: $r_t=f(x^*) - f(x_t)$, and $\psi_t \triangleq 2(1-p_t)\sum^N_{n=1}P_N[n] \Delta_{n,t}$.
\end{lemma}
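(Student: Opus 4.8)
The plan is to follow the standard Thompson-sampling decomposition of the instantaneous regret around a low-uncertainty ``anchor'' input, while carefully separating the two sampling sources of FTS so that the cross-agent error aggregates exactly into $\psi_t$. First I would introduce $\bar{x}_t\triangleq\arg\min_{x\in\mathcal{X}\setminus S_t}\sigma_{t-1}(x)$, the \emph{unsaturated} point of smallest posterior standard deviation (well-defined since $x^*\in\mathcal{X}\setminus S_t$ by Definition~\ref{def:saturated_set}), and write $r_t=[f(x^*)-f(\bar{x}_t)]+[f(\bar{x}_t)-f(x_t)]$. The first bracket is at most $c_t\sigma_{t-1}(\bar{x}_t)$ directly from $\bar{x}_t$ being unsaturated.

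For the second bracket I would condition on $E^f(t)\cap E^{f_t}(t)$, which gives $|f(x)-f_t(x)|\le c_t\sigma_{t-1}(x)$ from~\eqref{eq:combine_two_events}, and split on how $x_t$ was chosen. On $A_t$, $x_t$ maximizes $f_t$, so $f_t(\bar{x}_t)-f_t(x_t)\le 0$ and hence $f(\bar{x}_t)-f(x_t)\le c_t\sigma_{t-1}(\bar{x}_t)+c_t\sigma_{t-1}(x_t)$. On $B_{t,n}$, $x_t$ maximizes $\hat{g}_{n,t}$; inserting $\hat{g}_{n,t}$ into $f_t(\bar{x}_t)-f_t(x_t)$ and invoking Lemma~\ref{lemma:bound_gt_ft} to replace $f_t$ by $\hat{g}_{n,t}$ up to $\Delta_{n,t}$ at both points yields the same bound with an extra additive $2\Delta_{n,t}$. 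Combining, on $E^f(t)\cap E^{f_t}(t)$ and the probability-$(1-\delta/2)$ event of Lemma~\ref{lemma:bound_gt_ft}, $r_t\le 2c_t\sigma_{t-1}(\bar{x}_t)+c_t\sigma_{t-1}(x_t)+2\sum_n\Delta_{n,t}\mathbf{1}[B_{t,n}]$.

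Next I would take $\mathbb{E}[\cdot\mid\mathcal{F}_{t-1}]$. Since $\bar{x}_t$ is $\mathcal{F}_{t-1}$-measurable and $\mathbb{P}(B_{t,n})=(1-p_t)P_N[n]$, the indicator term averages to exactly $\sum_n 2(1-p_t)P_N[n]\Delta_{n,t}=\psi_t$. To keep the bound valid when $E^{f_t}(t)$ fails, I would use the trivial bound $r_t\le 2B$ (from $|f|\le B$) on $\overline{E^{f_t}(t)}$, whose conditional probability is at most $1/t^2$ by Lemma~\ref{lemma:uniform_bound_t}, contributing $2B/t^2$. This gives $\mathbb{E}[r_t\mid\mathcal{F}_{t-1}]\le 2c_t\sigma_{t-1}(\bar{x}_t)+c_t\mathbb{E}[\sigma_{t-1}(x_t)\mid\mathcal{F}_{t-1}]+\psi_t+2B/t^2$. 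To eliminate $\sigma_{t-1}(\bar{x}_t)$ I would invoke Lemma~\ref{lemma:prob_unsaturated}: since $\sigma_{t-1}(x_t)\ge\sigma_{t-1}(\bar{x}_t)$ whenever $x_t\in\mathcal{X}\setminus S_t$, we obtain $\mathbb{E}[\sigma_{t-1}(x_t)\mid\mathcal{F}_{t-1}]\ge P_t\,\sigma_{t-1}(\bar{x}_t)$, hence $\sigma_{t-1}(\bar{x}_t)\le P_t^{-1}\mathbb{E}[\sigma_{t-1}(x_t)\mid\mathcal{F}_{t-1}]$, yielding the factor $c_t(1+2/P_t)$.

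The main obstacle, and the federated-specific twist, is controlling $2/P_t$: because FTS samples its own GP only with probability $p_t$, the unsaturated-selection probability carries the extra factor $p_t$, i.e.~$P_t=p_t(p-1/t^2)$, which degrades the clean constant $2/p$ of ordinary GP-TS. I would lower-bound $p_t\ge p_1$ by monotonicity and show $p-1/t^2\ge p/5$ once $t$ is large enough that $1/t^2\le 4p/5$, giving $2/P_t\le 10/(pp_1)$; for the finitely many small $t$ where $P_t$ is too small (or nonpositive), the $2B$-per-step bound keeps the statement intact and only inflates a constant that is absorbed in the eventual $\tilde{\mathcal{O}}$. Verifying this uniform constant, together with the bookkeeping ensuring the per-agent errors sum precisely to $\psi_t$ rather than being double-counted across the $A_t$/$B_t$ split, is where the care is needed.
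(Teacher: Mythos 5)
Your proposal reproduces the paper's proof essentially step for step: the same anchor $\overline{x}_t={\arg\min}_{x\in\mathcal{X}\setminus S_t}\sigma_{t-1}(x)$, the same decomposition of $r_t$ under $E^f(t)\cap E^{f_t}(t)$, the same $A_t$/$B_{t,n}$ split in which Lemma~\ref{lemma:bound_gt_ft} and the maximizing property of $x_t$ under $B_{t,n}$ yield exactly $\psi_t$, the trivial $2B$ bound on $\overline{E^{f_t}(t)}$ contributing $2B/t^2$ via Lemma~\ref{lemma:uniform_bound_t}, and Lemma~\ref{lemma:prob_unsaturated} converting $\sigma_{t-1}(\overline{x}_t)$ into $(2/P_t)\,\mathbb{E}\left[\sigma_{t-1}(x_t)\mid\mathcal{F}_{t-1}\right]$ with $2/P_t\leq 10/(pp_1)$. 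Your closing caveat about small $t$ is in fact slightly more careful than the paper, which asserts $1/(p-1/t^2)\leq 5/p$ for all $t\geq 1$ without comment even though $p-1/t^2\leq 0$ for the first few $t$; the finitely-many-early-steps fix you sketch is the standard way that harmless gap gets absorbed into the final constants.
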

\begin{proof}
To begin with, we define $\overline{x}_t$ as the unsaturated input at iteration $t$ with the smallest (posterior) standard deviation:
\begin{equation}
\overline{x}_t = {\arg\min}_{x\in\mathcal{X}\setminus S_t}\sigma_{t-1}(x).
\end{equation}
Following this definition, for any $\mathcal{F}_{t-1}$ such that $E^{f}(t)$ is true, we have that 
\begin{equation}
\begin{split}
\mathbb{E}\left[\sigma_{t-1}(x_t) | \mathcal{F}_{t-1}\right] &\geq \mathbb{E}\left[\sigma_{t-1}(x_t) | \mathcal{F}_{t-1}, x_t \in \mathcal{X} \setminus S_t\right]\mathbb{P}\left(x_t \in \mathcal{X} \setminus S_t | \mathcal{F}_{t-1}\right)\\
&\geq \sigma_{t-1}(\overline{x}_t)P_t,
\end{split}
\end{equation}
in which the last inequality follows from the definition of $\overline{x}_t$ and Lemma~\ref{lemma:prob_unsaturated}.

Now we condition on both events $E^{f}(t)$ and $E^{f_t}(t)$, and analyze the instantaneous regret as:
\begin{equation}
\begin{split}
r_t=\Delta(x_t)&=f(x^*) - f(\overline{x}_t) + f(\overline{x}_t) - f(x_t)\\
&\stackrel{(a)}{\leq} \Delta(\overline{x}_t) + f_t(\overline{x}_t) + c_t\sigma_{t-1}(\overline{x}_t) - f_t(x_t) + c_t\sigma_{t-1}(x_t)\\
&\stackrel{(b)}{\leq} c_t\sigma_{t-1}(\overline{x}_t) + c_t\sigma_{t-1}(\overline{x}_t) + c_t\sigma_{t-1}(x_t) + f_t(\overline{x}_t) - f_t(x_t)\\
&=c_t(2\sigma_{t-1}(\overline{x}_t) + \sigma_{t-1}(x_t)) + \underline{f_t(\overline{x}_t) - f_t(x_t)},
\end{split}
\end{equation}
in which $(a)$ follows from the definition of $\Delta(x)$ and $|f_t(x)-f(x)|\leq c_t\sigma_{t-1}(x)$ for all $x\in \mathcal{X}$ since we assume both events $E^{f}(t)$ and $E^{f_t}(t)$ are true, and $(b)$ results from the fact that $\overline{x}_t$ is unsaturated.
Next, we analyze the expected value of the underlined term given a filtration $\mathcal{F}_{t-1}$:
\begin{equation}
\begin{split}
\mathbb{E}&\left[f_t(\overline{x}_t) - f_t(x_t) | \mathcal{F}_{t-1}\right] \\
&= \mathbb{P}\left(A_t\right)\mathbb{E}\left[f_t(\overline{x}_t) - f_t(x_t) | \mathcal{F}_{t-1}, A_t\right] + \mathbb{P}\left(B_t\right)\sum^N_{n=1}P_N[n]\mathbb{E}\left[f_t(\overline{x}_t) - f_t(x_t) | \mathcal{F}_{t-1}, B_{t,n}\right]\\
&\stackrel{(a)}{\leq} (1-p_t)\sum^N_{n=1}P_N[n]\mathbb{E}\left[f_t(\overline{x}_t) - f_t(x_t) | \mathcal{F}_{t-1}, B_{t,n}\right]\\
&\stackrel{(b)}{\leq} (1-p_t)\sum^N_{n=1}P_N[n]\mathbb{E}\left[\hat{g}_{n,t}(\overline{x}_t)+\Delta_{n,t}-\hat{g}_{n,t}(x_t) + \Delta_{n,t} | \mathcal{F}_{t-1}, B_{t,n}\right]\\
&\stackrel{(c)}{\leq} 2(1-p_t)\sum^N_{n=1}P_N[n] \Delta_{n,t} \triangleq \psi_t,
\end{split}
\end{equation}
in which $(a)$ follows since when $A_t$ is true, i.e., when $x_t=\arg\max_{x\in\mathcal{X}}f_t(x)$, $f_t(\overline{x}_t) - f_t(x_t) \leq 0$, $(b)$ makes use of Lemma~\ref{lemma:bound_gt_ft} (note that here we are also conditioning on the events $E^{f}(t)$ and $E^{f_t}(t)$ which is the same as Lemma~\ref{lemma:bound_gt_ft}, and that Lemma~\ref{lemma:bound_gt_ft} holds irrespective of the event $B_{t,n}$ since both $E^{f}_t$ and $E^{f_t}(t)$ are $\mathcal{F}_{t-1}$-measurable) and thus holds with probability of $\geq 1 - \delta/2$, and $(c)$ follows since conditioned on the event $B_{t,n}$ (i.e., $x_t=\arg\max_{x\in \mathcal{X}}\hat{g}_{n,t}(x)$), $\hat{g}_{n,t}(\overline{x}_t) - \hat{g}_{n,t}(x_t) \leq 0$.

Subsequently, we can analyze the expected instantaneous regret by separately considering the two cases in which the event $E^{f_t}(t)$ is true and false respectively:
\begin{equation}
\begin{split}
\mathbb{E}&\left[r_t | \mathcal{F}_{t-1}\right] \\
&\leq \mathbb{E}\left[c_t(2\sigma_{t-1}(\overline{x}_t) + \sigma_{t-1}(x_t)) + f_t(\overline{x}_t) - f_t(x_t) | \mathcal{F}_{t-1}\right] + 2B\mathbb{P}\left[\overline{E^{f_t}(t)} | \mathcal{F}_{t-1} \right]\\
&\leq \mathbb{E}\left[c_t(2\sigma_{t-1}(\overline{x}_t) + \sigma_{t-1}(x_t)) | \mathcal{F}_{t-1}\right] + \mathbb{E}\left[f_t(\overline{x}_t) - f_t(x_t) | \mathcal{F}_{t-1}\right] + 2B\mathbb{P}\left[\overline{E^{f_t}(t)} | \mathcal{F}_{t-1} \right]\\
& \leq \frac{2c_t}{P_t}\mathbb{E}\left[\sigma_{t-1}(x_t)| \mathcal{F}_{t-1} \right] + c_t\mathbb{E}\left[\sigma_{t-1}(x_t) | \mathcal{F}_{t-1}\right] + \psi_t + \frac{2B}{t^2}\\
&\leq c_t\left(1 + \frac{2}{P_t}\right)\mathbb{E}\left[\sigma_{t-1}(x_t)| \mathcal{F}_{t-1} \right] + \psi_t + \frac{2B}{t^2}.
\label{eq:expected_inst_regret}
\end{split}
\end{equation}
Note that since $1/(p-1/t^2) \leq 5/p$ and $p_t\geq p_1$ for all $t\geq 1$,
\begin{equation}
\frac{2}{P_t} = \frac{2}{p_t(p - \frac{1}{t^2})}\leq \frac{10}{p p_t}\leq \frac{10}{pp_1}.
\end{equation}

Therefore,~\eqref{eq:expected_inst_regret} can be further analyzed as 
\begin{equation}
\mathbb{E} \left[r_t | \mathcal{F}_{t-1}\right] \leq c_t\left(1 + \frac{10}{p p_1}\right)\mathbb{E}\left[\sigma_{t-1}(x_t)| \mathcal{F}_{t-1} \right] + \psi_t + \frac{2B}{t^2},
\end{equation}
which completes the proof.
\end{proof}

Subsequently, we make use of the concentration inequality of super-martingales to derive a bound on the cumulative regret.
\begin{definition}
Define $Y_0=0$, and for all $t=1,\ldots,T$,
\[
\overline{r}_t=r_t \mathbb{I}\{E^{f}(t)\},
\]
\[
X_t = \overline{r}_t - c_t\left(1 + \frac{10}{p p_1}\right)\sigma_{t-1}(x_t) - \psi_t - \frac{2B}{t^2},
\]
\[
Y_t=\sum^t_{s=1}X_s.
\]
\end{definition}

\begin{lemma}
Conditioned on Lemma~\ref{lemma:upper_bound_expected_regret} (i.e., with probability of $\geq 1 - \delta/2$), $(Y_t:t=0,\ldots,T)$ is a super-martingale with respect to the filtration $\mathcal{F}_t$.
\end{lemma}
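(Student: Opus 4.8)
The plan is to verify the three defining properties of a super-martingale for $(Y_t : t=0,\ldots,T)$ with respect to $(\mathcal{F}_t)$: adaptedness, integrability, and the one-step inequality $\mathbb{E}[Y_t \mid \mathcal{F}_{t-1}] \leq Y_{t-1}$. Adaptedness is immediate: for each $s\leq t$, the instantaneous regret $r_s=f(x^*)-f(x_s)$ and the posterior standard deviation $\sigma_{s-1}(x_s)$ are $\mathcal{F}_s$-measurable (since $x_s$ together with the history up to $s$ determines them), the indicator $\mathbb{I}\{E^f(s)\}$ is $\mathcal{F}_{s-1}$-measurable, and $c_s$, $\psi_s$, $2B/s^2$ are deterministic; hence each $X_s$, and thus $Y_t=\sum_{s=1}^t X_s$, is $\mathcal{F}_t$-measurable. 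Integrability follows from the boundedness assumptions $|f(x)|\leq B$ (so $|r_t|\leq 2B$), the w.l.o.g.\ bound $\sigma_{t-1}(\cdot)\leq 1$, and the finiteness of the deterministic terms, so each $|X_t|$ is uniformly bounded and $\mathbb{E}[|Y_t|]<\infty$.

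The crux is the one-step inequality, which is equivalent to $\mathbb{E}[X_t \mid \mathcal{F}_{t-1}]\leq 0$. Since $c_t$, $\psi_t$, $2B/t^2$ and the constants $p,p_1$ are $\mathcal{F}_{t-1}$-measurable, I would first decompose
\[
\mathbb{E}[X_t \mid \mathcal{F}_{t-1}] = \mathbb{E}[\overline{r}_t \mid \mathcal{F}_{t-1}] - c_t\left(1 + \frac{10}{p p_1}\right)\mathbb{E}[\sigma_{t-1}(x_t) \mid \mathcal{F}_{t-1}] - \psi_t - \frac{2B}{t^2},
\]
noting that $\sigma_{t-1}(x_t)$ is genuinely random given $\mathcal{F}_{t-1}$ because $x_t$ is selected in iteration $t$. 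The key structural fact to exploit is that $E^f(t)$ is $\mathcal{F}_{t-1}$-measurable, so $\mathbb{I}\{E^f(t)\}$ can be pulled out of the conditional expectation and the argument split pathwise into two cases. On the event where $E^f(t)$ fails, $\overline{r}_t=0$, so the first term vanishes while the remaining three terms are each nonnegative, giving $\mathbb{E}[X_t \mid \mathcal{F}_{t-1}]\leq 0$ at once.

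On the event where $E^f(t)$ holds, $\overline{r}_t=r_t$, and I would invoke Lemma~\ref{lemma:upper_bound_expected_regret}, which on the high-probability event we are conditioning upon supplies exactly
\[
\mathbb{E}[r_t \mid \mathcal{F}_{t-1}] \leq c_t\left(1 + \frac{10}{p p_1}\right)\mathbb{E}[\sigma_{t-1}(x_t) \mid \mathcal{F}_{t-1}] + \psi_t + \frac{2B}{t^2}.
\]
Substituting this bound into the decomposition yields $\mathbb{E}[X_t \mid \mathcal{F}_{t-1}]\leq 0$ in this case as well. Combining both cases gives $\mathbb{E}[Y_t \mid \mathcal{F}_{t-1}] = Y_{t-1} + \mathbb{E}[X_t \mid \mathcal{F}_{t-1}] \leq Y_{t-1}$, which is the super-martingale property.

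The main subtlety to get right — and the very reason the truncation by $\mathbb{I}\{E^f(t)\}$ is built into the definition of $\overline{r}_t$ — is that Lemma~\ref{lemma:upper_bound_expected_regret} only controls the expected regret \emph{on} the event $E^f(t)$. The indicator is what converts this conditional bound into an inequality that holds pathwise in $\mathcal{F}_{t-1}$, and the $\mathcal{F}_{t-1}$-measurability of $E^f(t)$ is precisely what legitimizes the case split inside the conditional expectation. The only point to state carefully is that the entire argument is valid ``conditioned on Lemma~\ref{lemma:upper_bound_expected_regret},'' i.e.\ on the event of probability at least $1-\delta/2$ on which that lemma's bound is in force; no genuine analytic difficulty remains beyond this bookkeeping.
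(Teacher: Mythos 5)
Your proposal is correct and follows essentially the same route as the paper: decompose $\mathbb{E}[X_t \mid \mathcal{F}_{t-1}]$, use the $\mathcal{F}_{t-1}$-measurability of $E^f(t)$ to split into the case where the event fails (so $\overline{r}_t=0$ and the remaining terms are nonnegative) and the case where it holds (so Lemma~\ref{lemma:upper_bound_expected_regret} applies directly). Your additional verification of adaptedness and integrability is sound bookkeeping that the paper leaves implicit, but it does not constitute a different argument.
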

\begin{proof}
\begin{equation}
\begin{split}
\mathbb{E}\left[Y_t - Y_{t-1} | \mathcal{F}_{t-1}\right] &= \mathbb{E}\left[X_t | \mathcal{F}_{t-1}\right]\\
&=\mathbb{E}\left[\overline{r}_t - c_t\left(1 + \frac{10}{p p_1}\right)\sigma_{t-1}(x_t) - \psi_t - \frac{2B}{t^2} | \mathcal{F}_{t-1}\right]\\
&=\mathbb{E}\left[\overline{r}_t | \mathcal{F}_{t-1}\right] - \left[c_t\left(1 + \frac{10}{p p_1}\right)\mathbb{E}\left[\sigma_{t-1}(x_t) | \mathcal{F}_{t-1}\right] + \psi_t + \frac{2B}{t^2}
\right]\\
&\leq 0,
\end{split}
\end{equation}
in which the last inequality follows from Lemma~\ref{lemma:upper_bound_expected_regret} when the event $E^{f}(t)$ is true; when $E^{f}(t)$ is false, $\overline{r}_t=0$ and thus the inequality holds trivially.
\end{proof}

The Azuma-Hoeffding Inequality presented below will be useful for proving the concentration of the super-martingale $(Y_t:t=0,\ldots,T)$.
\begin{lemma}[Azuma-Hoeffding Inequality]
\label{lemma:azuma_hoeffding}
Given any $\delta' \in (0, 1)$. If a super-martingale $(Z_T: t=1,\ldots,T)$, defined with respect to the filtration $\mathcal{F}_t$, satisfies $|Z_t-Z_{t-1}|\leq \alpha_t$ for some constant $\alpha_t$, then for all $t=1,\ldots,T$ and with probability of at least $1 - \delta'$,
\[
Z_T - Z_0 \leq \sqrt{2\log(1/\delta')\sum^T_{t=1}\alpha_t^2}.
\]
\end{lemma}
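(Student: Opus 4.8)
The plan is to prove this via the standard exponential-moment (Chernoff) method applied to the super-martingale. First I would introduce the increments $D_t \triangleq Z_t - Z_{t-1}$, so that $Z_T - Z_0 = \sum_{t=1}^T D_t$. Since $(Z_t)$ is a super-martingale with respect to $\mathcal{F}_t$, we have $\mathbb{E}[D_t \mid \mathcal{F}_{t-1}] \leq 0$, and by hypothesis $|D_t| \leq \alpha_t$, i.e.\ $D_t$ is confined to an interval of length $2\alpha_t$. The goal is then to control the upper tail of this sum of bounded, conditionally-nonpositive-mean increments.

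The crux of the argument is a per-step conditional exponential moment bound. For any $s > 0$, I would apply Hoeffding's lemma to $D_t$ conditioned on $\mathcal{F}_{t-1}$: because $D_t \in [-\alpha_t, \alpha_t]$ and $\mathbb{E}[D_t \mid \mathcal{F}_{t-1}] \leq 0$, one obtains $\mathbb{E}[e^{s D_t} \mid \mathcal{F}_{t-1}] \leq e^{s^2 \alpha_t^2 / 2}$, where the nonpositive conditional mean only helps since $e^{s \, \mathbb{E}[D_t \mid \mathcal{F}_{t-1}]} \leq 1$ for $s > 0$. Establishing this estimate, i.e.\ bounding the conditional log-moment-generating function of a bounded, conditionally-centered-or-negative variable by a quadratic in $s$, is the main technical obstacle; it rests on the convexity of $x \mapsto e^{sx}$ on the interval together with a second-order Taylor control of the resulting function of $s$.

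With the per-step bound in hand, I would peel off the conditioning iteratively using the tower property. Writing $\mathbb{E}[e^{s(Z_T - Z_0)}] = \mathbb{E}\big[e^{s \sum_{t=1}^{T-1} D_t} \, \mathbb{E}[e^{s D_T} \mid \mathcal{F}_{T-1}]\big]$ and applying the bound to the innermost factor repeatedly yields $\mathbb{E}[e^{s(Z_T - Z_0)}] \leq \exp\big((s^2/2) \sum_{t=1}^T \alpha_t^2\big)$. This step is routine once the conditional moment bound is available, since each inner conditional expectation is deterministically dominated by $e^{s^2 \alpha_t^2/2}$ and factors out.

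Finally, I would apply Markov's inequality to the exponential: for any $\lambda > 0$, $\mathbb{P}(Z_T - Z_0 \geq \lambda) \leq e^{-s\lambda} \exp\big((s^2/2) \sum_{t=1}^T \alpha_t^2\big)$, and then optimize over $s$ by choosing $s = \lambda / \sum_{t=1}^T \alpha_t^2$, which gives $\mathbb{P}(Z_T - Z_0 \geq \lambda) \leq \exp\big(-\lambda^2 / (2 \sum_{t=1}^T \alpha_t^2)\big)$. Setting the right-hand side equal to $\delta'$ and solving for $\lambda$ produces $\lambda = \sqrt{2 \log(1/\delta') \sum_{t=1}^T \alpha_t^2}$, so on the complementary event, which has probability at least $1 - \delta'$, we have $Z_T - Z_0 \leq \sqrt{2 \log(1/\delta') \sum_{t=1}^T \alpha_t^2}$, as claimed.
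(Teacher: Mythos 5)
Your proof is correct: it is the standard Chernoff-method proof of the Azuma--Hoeffding inequality, with the conditional Hoeffding-lemma bound $\mathbb{E}[e^{sD_t}\mid\mathcal{F}_{t-1}]\leq e^{s^2\alpha_t^2/2}$ (valid for super-martingale increments since the factor $e^{s\mu_t}\leq 1$ when $\mu_t\leq 0$ and $s>0$), the tower-property peeling, and the optimized Markov bound all handled properly. Note that the paper itself states this lemma as a classical result and gives no proof of it, so there is no in-paper argument to compare against; your write-up simply supplies the canonical textbook derivation, which suffices for the single use the paper makes of the lemma (at the fixed terminal time $T$).
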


Finally, we can derive an upper bound on the cumulative regret through the following lemma.
\begin{lemma}
Given $\delta \in (0, 1)$, then with probability of at least $1 - \delta$,
\begin{equation*}
\begin{split}
R_T\leq &c_T\left(1 + \frac{10}{p p_1}\right) \mathcal{O}(\sqrt{T\gamma_T}) + \sum^T_{t=1}\psi_t + \frac{B\pi^2}{3} +\\
&\left[c_T\left(1 + \frac{4B}{p} + \frac{10}{p p_1}
\right) + \psi_1 + \mathcal{O}(\sqrt{\log T}) \right]\sqrt{2T\log\frac{4}{\delta}},
\end{split}
\end{equation*}
in which 
$\gamma_T$ is the maximum information gain about $f$ obtained from any set of $T$ observations.
\end{lemma}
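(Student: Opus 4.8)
The plan is to aggregate the single-step expected-regret bound of Lemma~\ref{lemma:upper_bound_expected_regret} into a cumulative, high-probability statement by analyzing the super-martingale $(Y_t)$ with the Azuma--Hoeffding inequality (Lemma~\ref{lemma:azuma_hoeffding}). First I would work on the event $\bigcap_{t\geq 1}E^f(t)$, which by Lemma~\ref{lemma:uniform_bound} (a uniform-in-$t$ confidence bound) holds with probability at least $1-\delta/4$; on this event $\overline{r}_t=r_t\mathbb{I}\{E^f(t)\}=r_t$ for every $t$, so $R_T=\sum_{t=1}^T \overline{r}_t$. Rearranging the definition $X_t=\overline{r}_t-c_t(1+10/(pp_1))\sigma_{t-1}(x_t)-\psi_t-2B/t^2$ and summing then gives $R_T=Y_T+\sum_{t=1}^T c_t(1+10/(pp_1))\sigma_{t-1}(x_t)+\sum_{t=1}^T\psi_t+\sum_{t=1}^T 2B/t^2$, which cleanly isolates the martingale term from three deterministic sums.

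Next I would dispatch the deterministic pieces. The tail sum is controlled by $\sum_{t=1}^T 2B/t^2\leq 2B\sum_{t=1}^{\infty}t^{-2}=B\pi^2/3$, producing the additive constant. For the variance sum I would use monotonicity $c_t\leq c_T$ to extract $c_T(1+10/(pp_1))$, and then apply the standard maximum-information-gain argument $\sum_{t=1}^T\sigma_{t-1}(x_t)\leq\sqrt{T\sum_{t=1}^T\sigma_{t-1}^2(x_t)}=\mathcal{O}(\sqrt{T\gamma_T})$ (Cauchy--Schwarz followed by $\sum_t\sigma_{t-1}^2(x_t)=\mathcal{O}(\gamma_T)$), yielding the leading term $c_T(1+10/(pp_1))\mathcal{O}(\sqrt{T\gamma_T})$. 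The sum $\sum_{t=1}^T\psi_t$ is left untouched, as it appears verbatim in the statement.

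The core step is bounding $Y_T$ via Azuma--Hoeffding, for which I need an increment bound $|Y_t-Y_{t-1}|=|X_t|\leq\alpha_t$. I would bound each summand of $X_t$ separately: $|\overline{r}_t|\leq 2B$ and $2B/t^2\leq 2B$ (loosened to $4Bc_T/p$ since $c_T/p\geq 1$), $c_t(1+10/(pp_1))\sigma_{t-1}(x_t)\leq c_t(1+10/(pp_1))$ using $\sigma_{t-1}\leq 1$, and $\psi_t\leq\psi_1+\mathcal{O}(\sqrt{\log t})$. Taking the uniform increment $\alpha_t\leq\alpha_T=c_T(1+4B/p+10/(pp_1))+\psi_1+\mathcal{O}(\sqrt{\log T})$ and invoking Lemma~\ref{lemma:azuma_hoeffding} with error probability $\delta/4$ gives $Y_T\leq\alpha_T\sqrt{2T\log(4/\delta)}$, which is exactly the bracketed Azuma term. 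A final union bound closes the argument: the event $E^f$ costs $\delta/4$, the super-martingale property inherits the $\delta/2$ failure probability of Lemma~\ref{lemma:upper_bound_expected_regret}, and the Azuma step costs $\delta/4$, so all three hold simultaneously with probability at least $1-\delta$, and adding the four contributions reproduces the claimed bound.

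The hard part will be the clean estimate $\psi_t\leq\psi_1+\mathcal{O}(\sqrt{\log t})$ needed for the increment. Since $\psi_t=2(1-p_t)\sum_n P_N[n]\Delta_{n,t}$ and $\Delta_{n,t}$ contains both the growing term $c_t$ and $\sqrt{2\log(\cdot\,t^2)}$ logarithmic factors, I would split $\Delta_{n,t}$ into its $t$-independent part (absorbed into $\psi_1$ via the monotonicity $1-p_t\leq 1-p_1$), its $c_t$ contribution (kept bounded through the structural hypothesis $(1-p_t)c_t\leq(1-p_1)c_1$), and its $\sqrt{\log t^2}$ part (which supplies the $\mathcal{O}(\sqrt{\log t})$ growth). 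Combining these three pieces without letting the logarithmic growth inflate the increment is the one place where the specific assumptions on the sequence $[p_t]_{t\in\mathbb{Z}^+}$ are genuinely used, and I expect it to require the most care.
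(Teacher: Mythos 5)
Your proposal is correct and follows essentially the same route as the paper's own proof: bounding the super-martingale increments by $c_t\left(1+\frac{4B}{p}+\frac{10}{pp_1}\right)+\psi_t$ (with $\psi_t\leq\psi_1+\mathcal{O}(\sqrt{\log t})$ secured by the condition $(1-p_t)c_t\leq(1-p_1)c_1$), applying Azuma--Hoeffding with error probability $\delta/4$, controlling $\sum_{t=1}^T\sigma_{t-1}(x_t)=\mathcal{O}(\sqrt{T\gamma_T})$ via the standard information-gain argument, and taking the union bound $\delta/4+\delta/2+\delta/4=\delta$. Even the detail you flag as the hard part --- splitting $\Delta_{n,t}$ into its constant, $c_t$, and $\sqrt{\log t}$ pieces --- is exactly how the paper uses the hypothesis on $[p_t]_{t\in\mathbb{Z}^+}$.
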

\begin{proof}
\begin{equation}
\begin{split}
|Y_t - Y_{t-1}| &= |X_t| \leq |\overline{r}_t| + c_t\left(1 + \frac{10}{p p_1}\right)\sigma_{t-1}(x_t) + \psi_t + \frac{2B}{t^2}\\
&\stackrel{(a)}{\leq} 2B + c_t\left(1 + \frac{10}{p p_1}\right) + \psi_t + \frac{2B}{t^2}\\
&\stackrel{(b)}{\leq} \frac{2B c_t}{p} + c_t\left(1 + \frac{10}{p p_1}\right) + \psi_t + \frac{2Bc_t}{p}\\
&\leq c_t\left(1 + \frac{4B}{p} + \frac{10}{p p_1}
\right) + \psi_t,
\end{split}
\end{equation}
in which $(a)$ follows since the posterior variance is upper-bounded by $1$,
$(b)$ follows since $2B\leq 2Bc_t/p$ and $2B/t^2\leq 2Bc_t/p$.

This allows us to apply the Azuma-Hoeffding Inequality (Lemma~\ref{lemma:azuma_hoeffding}) by using an error probability of $\delta / 4$,
\begin{equation}
\begin{split}
\sum^T_{t=1}\overline{r}_t &\leq \sum^T_{t=1} c_t\left(1 + \frac{10}{p p_1}\right)\sigma_{t-1}(x_t) + \sum^T_{t=1}\psi_t + \sum^T_{t=1}\frac{2B}{t^2} +\\
&\qquad \sqrt{2\log\frac{4}{\delta}\sum^{T}_{t=1}\left[c_t\left(1 + \frac{4B}{p} + \frac{10}{p p_1}
\right) + \psi_t\right]^2}\\
&\leq c_T\left(1 + \frac{10}{p p_1}\right) \sum^T_{t=1}\sigma_{t-1}(x_t) + \sum^T_{t=1}\psi_t + \frac{B\pi^2}{3} + \\
&\qquad \left[c_T\left(1 + \frac{4B}{p} + \frac{10}{p p_1}
\right) + \psi_1 + \mathcal{O}(\sqrt{\log T})\right]\sqrt{2T\log\frac{4}{\delta}}\\
&= c_T\left(1 + \frac{10}{p p_1}\right) \mathcal{O}(\sqrt{T\gamma_T}) + \sum^T_{t=1}\psi_t + \frac{B\pi^2}{3} + \\
&\qquad \left[c_T\left(1 + \frac{4B}{p} + \frac{10}{p p_1}
\right) + \psi_1 + \mathcal{O}(\sqrt{\log T})\right]\sqrt{2T\log\frac{4}{\delta}},
\end{split}
\end{equation}
which holds with probability $\geq 1 - \delta/4$.
The last inequality follows since $c_t$ is increasing in $t$, $\sum^T_{t=1}1/t^2=\pi^2/6$, and $\psi_t\leq \psi_{1} + \mathcal{O}(\sqrt{\log t})$ for all $t\in\mathbb{Z}^+$ which is ensured by the way in which we choose the sequence $p_t$, i.e., such that $(1-p_t)c_t \leq (1-p_1)c_1$ for all $t\in\mathbb{Z}^+\setminus\{1\}$.
Lastly, note that the event $E^{f}(t)$ holds with probability $\geq 1 - \delta/4$, i.e., $\overline{r}_t=r_t$ with probability $\geq 1 - \delta/4$.
In the last equality, we made use of the fact that $\sum^T_{t=1}\sigma_{t-1}(x_t)=\mathcal{O}(\sqrt{T\gamma_T})$ 
which is proved by Lemmas 5.3 and 5.4 of~\cite{srinivas2009gaussian}.
Taking into account the error probability of Lemma~\ref{lemma:upper_bound_expected_regret} ($\delta/2$), which is required for $(Y_t:t=0,\ldots,T)$ to form a super-martingale, completes the proof.
\end{proof}

Finally, we are ready to prove Theorem~\ref{theorem:fts}. Recall that $c_t = \mathcal{O}\left(\left(B + \sqrt{\gamma_t+\log(1/\delta)}\right)\sqrt{\log t}\right)$. Therefore, 
\begin{equation}
\begin{split}
R_T &= \mathcal{O}\Bigg(\frac{1}{p_1}\left(B + \sqrt{\gamma_T + \log\frac{1}{\delta}}\right)\sqrt{\log T}\sqrt{T\gamma_T} + \sum^T_{t=1}\psi_t + \\
&\qquad \left(B + \frac{1}{p_1}\right)\left(B + \sqrt{\gamma_T+\log\frac{1}{\delta}}\right)\sqrt{\log T}\sqrt{T\log\frac{1}{\delta}}\Bigg)\\
&=\mathcal{O}\left(\left(B + \frac{1}{p_1}\right)\sqrt{T\log T\gamma_T \log\frac{1}{\delta}\left(\gamma_T + \log\frac{1}{\delta}\right)} + \sum^T_{t=1}\psi_t\right)\\
&=\tilde{\mathcal{O}}\left(\left(B+\frac{1}{p_1}\right)\gamma_T\sqrt{T} + \sum^T_{t=1}\psi_t\right).
\end{split}
\end{equation}

\section{Further Experimental Details and Results}
\label{app:experiments}
All experiments reported in this work are run on a computer with 48 cores of Xeon Silver 4116 (2.1Ghz) processors, RAM of 256GB, and 1 NVIDIA Tesla T4 GPU.
For fair comparisons, in all experiments, the same random initializations are used by all methods.

\subsection{Optimization of Synthetic Functions}
In the synthetic experiments, we sample the objective functions from a GP with a length scale of $0.03$.
The functions are defined on a $1$-dimensional discrete domain uniformly distributed in $[0, 1]$, with size $|\mathcal{X}|=1,000$.
The output values of all functions $f(x),\forall x\in\mathcal{X}$ are normalized into the range of $[0, 1]$. 
Whenever an input $x$ is queried, the corresponding noisy observation is obtained by adding a zero-mean Gaussian noise $\mathcal{N}(0, \sigma^2)$ where $\sigma^2=0.01$ to the corresponding function value $f(x)$ (Section~\ref{sec:background}, first paragraph).
For a sampled objective function for the target agent, we generate the objective functions of the other agents, as well as their observations, in the following way.
For agent $\mathcal{A}_n$, we go through every input in the entire discrete domain, and for each input, we add either $d_n$ or $-d_n$ to the corresponding output function value with probability $0.5$ each, 
after which the resulting value is used as the objective function value of the agent $\mathcal{A}_n$.
This step ensures the validity of the definition of $d_n$ as the maximum difference between the objective function of the target agent $\mathcal{A}$ and that of agent $\mathcal{A}_n$: $d_n=\max_{x\in \mathcal{X}}|f(x) - g_n(x)|$ (Section~\ref{sec:background}, last paragraph).
Next, we randomly sample $t_n$ inputs from the entire discrete domain, and for each sampled input, we obtain a noisy output observation 
by adding a zero-mean Gaussian noise: $\mathcal{N}(0, \sigma^2)$ where $\sigma^2=0.01$, to the corresponding function value.
Subsequently, following the procedures described in the first paragraph of Section~\ref{subsec:fts_algo},
every agent $\mathcal{A}_n$ applies RFF approximation using its own $t_n$ observations (input-output pairs) to derive the RFF approximation parameters $\nu_n$ and $\Sigma_n$
and hence to draw a sample of $\omega_n$, which is the parameter to be passed to and used by the target agent $\mathcal{A}$.
Finally, after receiving the parameters $\omega_n$'s from all other agents, the target agent starts to run the FTS algorithm (Algorithm~\ref{alg:F_TS}).

\subsection{Real-world Experiments}

\subsubsection{Results in the Most General Setting with Increasing $t_n$}
\label{app:results_increasing_t_n}
Here we perform additional experiments in the most general setting of our FTS algorithm (Section~\ref{subsec:fts_algo}, last paragraph): 
(a) information can be received from every agent $\mathcal{A}_n$ before every iteration instead of only before the first iteration, and 
(b) every $\mathcal{A}_n$ may also be performing black-box optimization tasks (possibly also using FTS),
such that $\mathcal{A}_n$ may collect more observations (i.e., $t_n$ may increase) between different rounds of communication.
We use the three real-world experiments (Section~\ref{subsec:exp_real_world}) to investigate the performances in this setting, 
and compare the performances with those in the simpler setting where communication is allowed only before the first iteration.
\begin{figure}[t]
	\centering
	\begin{tabular}{ccc}
		\hspace{-3mm} \includegraphics[width=0.32\linewidth]{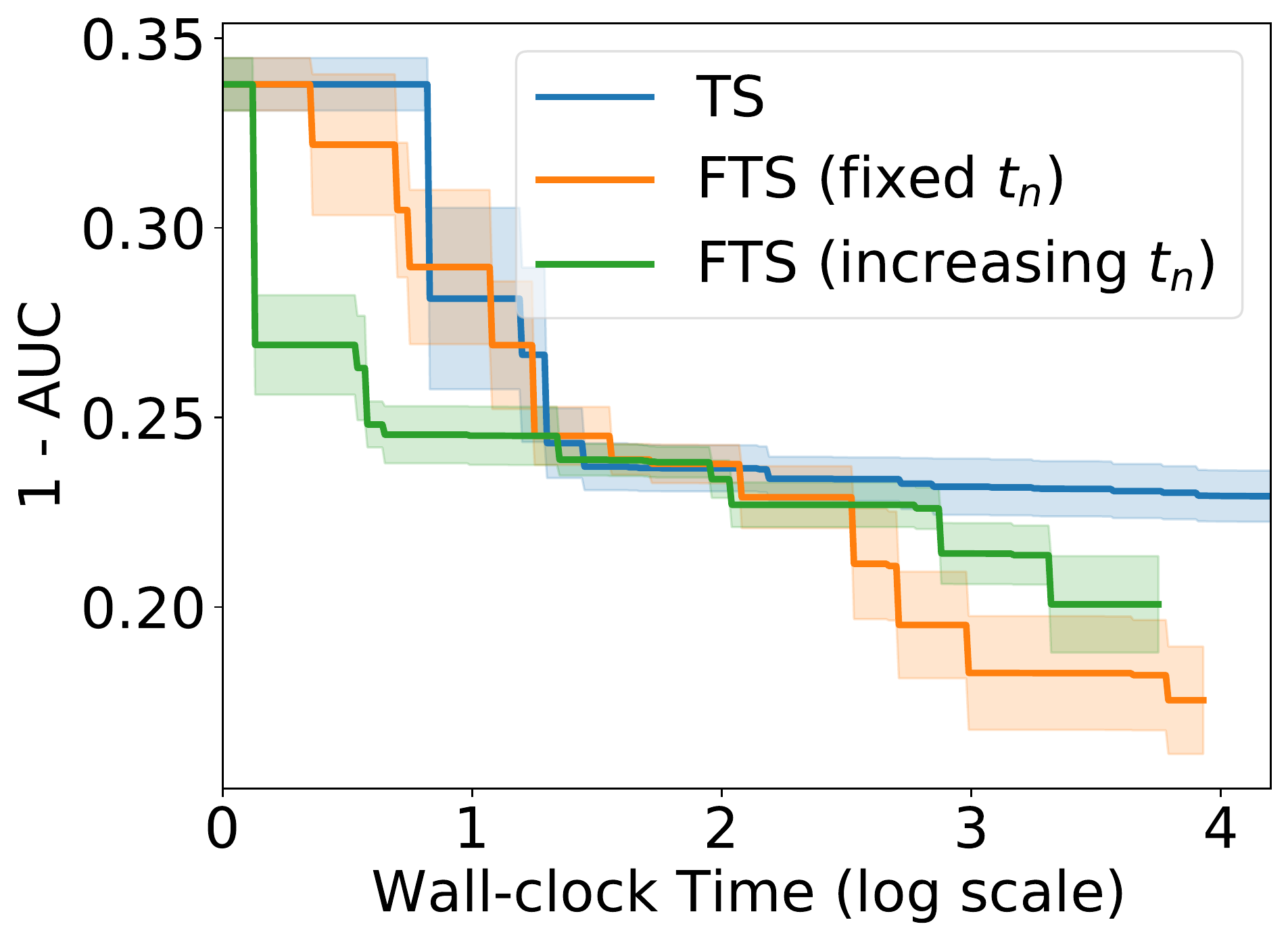} & \hspace{-4mm}
		\includegraphics[width=0.315\linewidth]{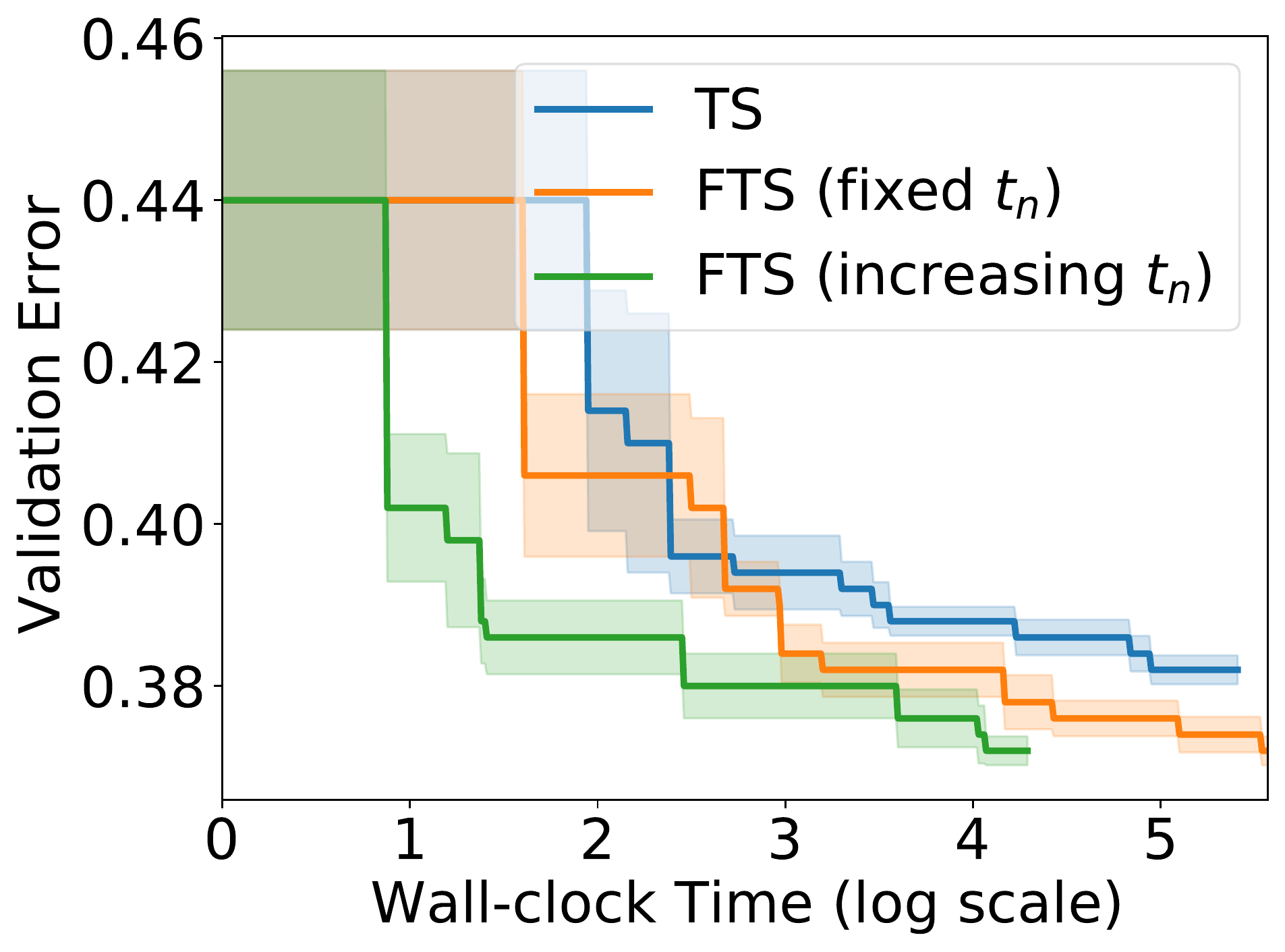} & \hspace{-4mm} 
		\includegraphics[width=0.32\linewidth]{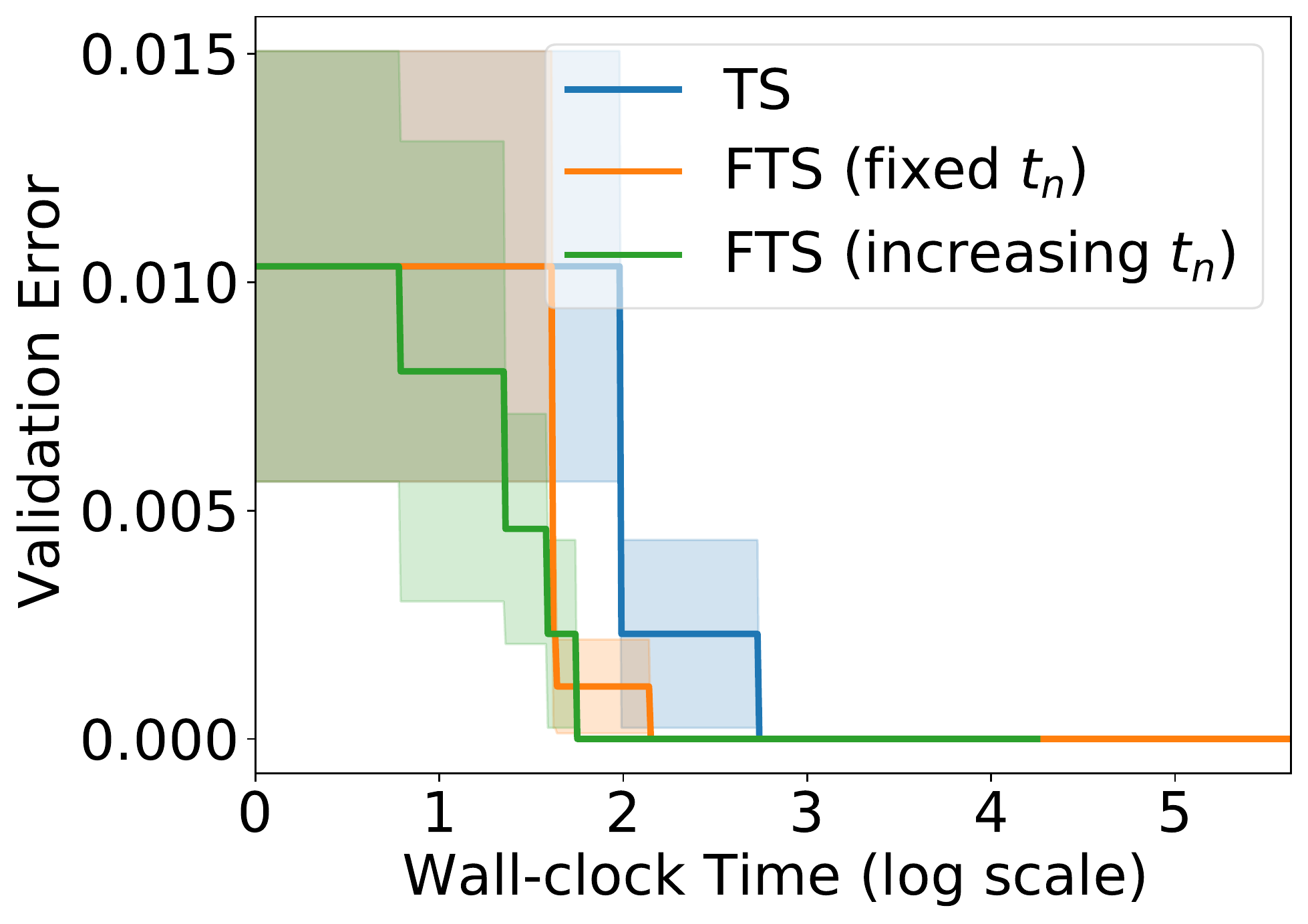}\\
		{(a)} & {(b)} & {(c)}
	\end{tabular}\vspace{-1mm}
	\caption{Performances in the most general setting in which $t_n$ is increasing (green curve) for the (a) landmine detection, (b) Google glasses and (c) mobile phone sensors experiments.
	The specific experimental setup is described in Appendix~\ref{app:results_increasing_t_n}.
	The results correspond to Fig.~\ref{fig:real_world_exp_time_plot} in the main paper.}
	\label{fig:increasing_tn}
\end{figure}

Here we describe the detailed experimental setup for the experiments in this section.
Before the first iteration of the FTS algorithm, every agent $\mathcal{A}_n$ for $n=1,\ldots,N$, who has completed $t_n=50$ iterations of its own BO task (we use standard TS here for simplicity, 
but it can be replaced by FTS in which $\mathcal{A}_n$ is the target agent), passes the first message to the target agent $\mathcal{A}$.
Next, before every iteration $t > 1$ of the FTS algorithm (Algorithm~\ref{alg:F_TS}), every agent $\mathcal{A}_n$ runs \emph{one more iteration} of its own BO task, 
calculates the updated RFF approximation parameters $\nu_{n,t}$ and $\Sigma_{n,t}$~\eqref{eq:sigma_nu}, 
samples a new $\omega_{n,t}$ from its posterior belief: $\omega_{n,t} \sim \mathcal{N}(\nu_{n,t}, \sigma^2\Sigma_{n,t}^{-1})$, 
and finally passes the sampled $\omega_{n,t}$ to the target agent $\mathcal{A}$.
Then, $\mathcal{A}$ uses the received updated information to run iteration $t$ of the FTS algorithm.
After this, the information from every agent $\mathcal{A}_n$ is updated and sent to $\mathcal{A}$ again, and the FTS algorithm proceeds to the next iteration $t+1$.
As a result, every $t_n,\forall n=1,\ldots,N$ increases by $1$ after every iteration of FTS.

The performances in all three experiments are shown in Fig.~\ref{fig:increasing_tn},
in which FTS outperforms standard TS in both settings for all experiments.
The figure also shows that in the most general setting in which $t_n$ is increasing, the performances for the two activity recognition experiments (Google glasses and mobile phone sensors experiments)
are improved, whereas the performances for the landmine detection experiment are comparable in both settings.
Note that the most general setting with increasing $t_n$ may not necessarily lead to better performances:
Although using more observations from those agents with similar objective functions to the target agent can give more useful information and hence 
potentially benefit the FTS algorithm, more observations from \emph{heterogeneous agents} may turn out to hurt the performance of FTS
since the information from these agents are actually harmful for the BO task of the target agent.

\subsubsection{More Experimental Details}
In all real-world experiments, we use $\text{length scale}=0.01$ to generate the random features (Appendix~\ref{app:construct_random_features}) 
and $\sigma^2=10^{-6}$ in the RFF approximation using equations~\eqref{eq:posterior_dist_omega} and~\eqref{eq:sigma_nu}.

\textbf{Landmine Detection.}
This dataset, downloadable from \url{http://www.ee.duke.edu/~lcarin/LandmineData.zip}, consists of the data from $29$ landmine fields, with each field associated with a dataset for landmine detection.
The dataset of each landmine field is made up of a number of input-output pairs, each corresponding to a location;
for every location, the input includes $9$ features extracted from radar images and the output is a binary label indicating whether the location contains landmines.
The number of data points (input-output pairs) of every field ranges from $445$ to $690$, with a mean of $511$;
for every field, we use $50\%$ of the data points as the training set, and the other $50\%$ as the validation set.
We use support vector machines (SVM) as the predictive model, and tune two SVM hyperparameters: RBF kernel parameter in the range of $[0.01,10]$, and L$2$ regularization parameter in $[10^{-4},10]$.
For every queried hyperparamter setting, the SVM model is trained on the training set using this particular set of hyperparameters, 
and evaluated using the validation set to produce the reported performances.
As mentioned in the main text, the dataset of the landmine fields are significantly imbalanced, i.e., there are considerably more locations without than with landmines.
Specifically, the percentage of positive samples (i.e., locations with landmines) in different landmine fields ranges from $2.9\%$ to $9.4\%$, with a mean of $6.2\%$.
Therefore, for this dataset, validation error is inappropriate since an all-zero prediction would result in very low classification error.
Hence, we use the Area Under the Receiver Operating Characteristic Curve (AUC) which is a more appropriate metric when evaluating the performance of ML models 
on imbalanced datasets.

\textbf{Activity Recognition Using Google Glasses.}
This dataset consists of two-hour long sensor data collected using Google glasses from $38$ participants, while the participants are performing different activities such as eating.
The dataset can be downloaded from \url{http://www.skleinberg.org/data/GLEAM.tar.gz}.
For every participant, we group the sensor data into different time windows; for each time window, we calculate the statistics (i.e., mean, variance and kurtosis) of different sensor measurements
within this time window, and use them as the features ($57$ features in total are extracted from each time window); 
the label for each time window is a binary value indicating whether the participant is eating or conducting other activities during this time window.
As a result, for every participant, each time window produces a data point, i.e., an input-output pair.
The number of data points for every participant ranges from $	242$ to $3416$ with an average of $1930$. 
For every participant, we randomly select $100$ data points as the validation set, and use the remaining data points as the training set.
We use logistic regression (LR) as the activity prediction model for every participant, and tune $3$ hyperparameters of LR: the batch size in the range of $[20, 60]$,
the L$2$ regularization parameter in $[10^{-6}, 1]$, and the learning rate in $[0.01, 0.1]$.
Following the common practice for using LR and neural network models, the inputs are pre-processed by removing the mean and dividing by the standard deviation.

\textbf{Activity Recognition Using Mobile Phone Sensors.}
This dataset, which can be downloaded from \url{https://archive.ics.uci.edu/ml/datasets/Human+Activity+Recognition+Using+Smartphones}, 
contains measurements from mobile phone sensors (accelerometer and gyroscope) involving $30$ subjects.
$561$ features were provided together with the dataset, with each set of features associated with a corresponding label indicating which one of the six activities the subject is performing.
Therefore, the activity recognition problem for every subject corresponds to a $6$-class classification problem.
The number of data points (input-output pairs) possessed by the subjects ranges from $281$ to $409$ with a mean of $343$.
For every subject, we use $50\%$ of the data points as the training set, and the remaining $50\%$ as the validation set.
We again use LR as the activity recognition model, and the tuned hyperparameters, as well as their ranges, are the same as those in the activity recognition experiment using Google glasses.

\subsubsection{Additional Results for More Agents}
\label{app:more_experimental_results}
In this section, we present additional experimental results for the three real-world experiments (Section~\ref{subsec:exp_real_world}).
Note that as mentioned in Section~\ref{subsec:exp_real_world} (last paragraph), the results presented in Fig.~\ref{fig:real_world_exp_time_plot} in the main text correspond to using the first agent 
(of the $6$ agents used to produce the results in Fig.~\ref{fig:real_world_exp} in the main text) as the target agent for every experiment.
Meanwhile, the additional results shown in this section (Figs.~\ref{fig:add_results_app_landmine},~\ref{fig:add_results_app_google} and~\ref{fig:add_results_app_har})
correspond to using each of the remaining $5$ agents (agents $2$ to $6$) as the target agent.
Note that since all three real-world datasets contain heterogeneous agents (Section~\ref{subsec:exp_real_world}, first paragraph), 
it is unreasonable to expect FTS to always outperform standard TS for all agents.
Instead, as shown in the figures, FTS performs better than TS for some agents, and comparably with TS for other agents.
\begin{figure}
    \centering
    \begin{subfigure}[t]{0.32\linewidth}
        \includegraphics[width=\linewidth]{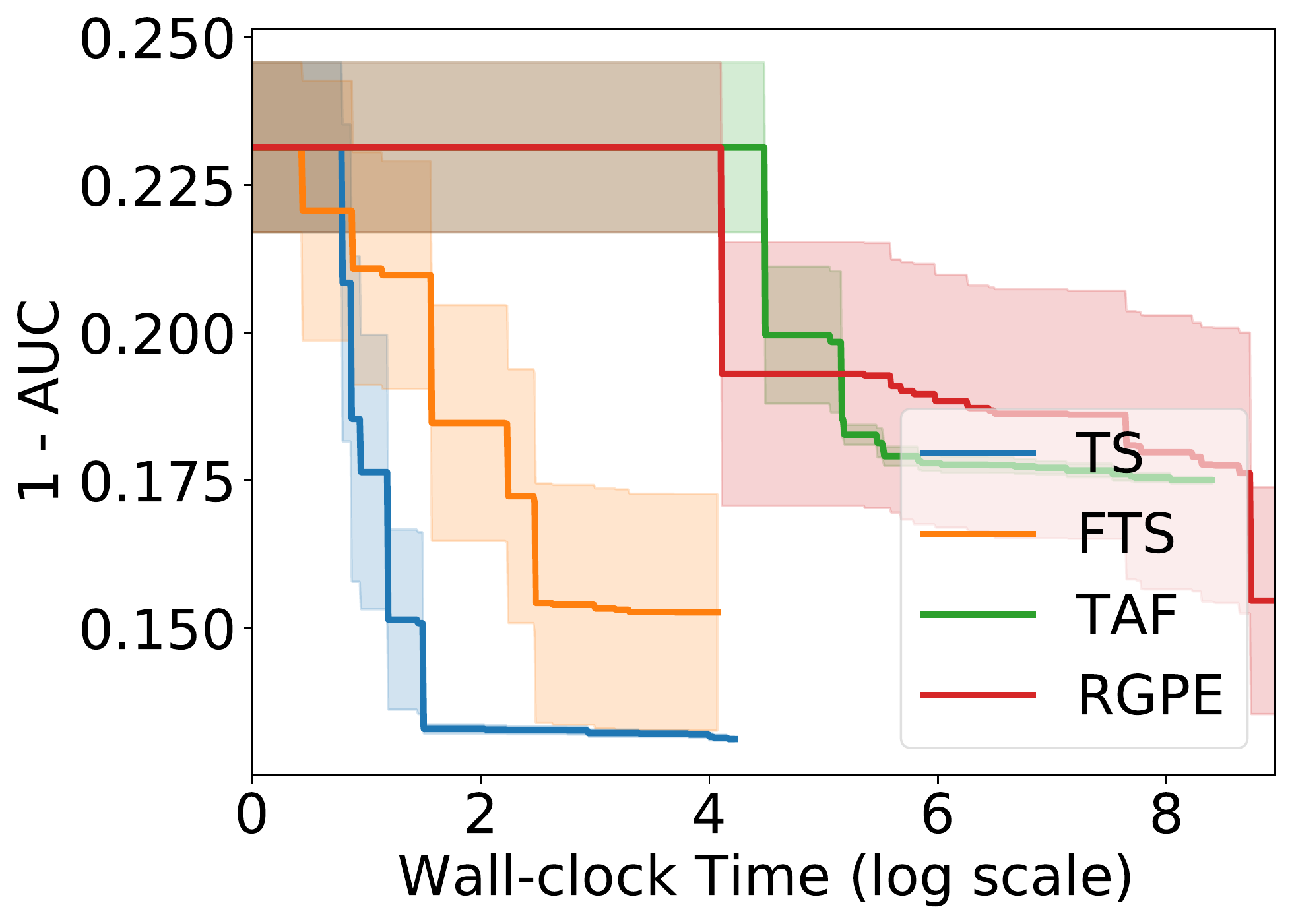}
        \caption{}
    \end{subfigure}
    \begin{subfigure}[t]{0.32\linewidth}
        \includegraphics[width=\linewidth]{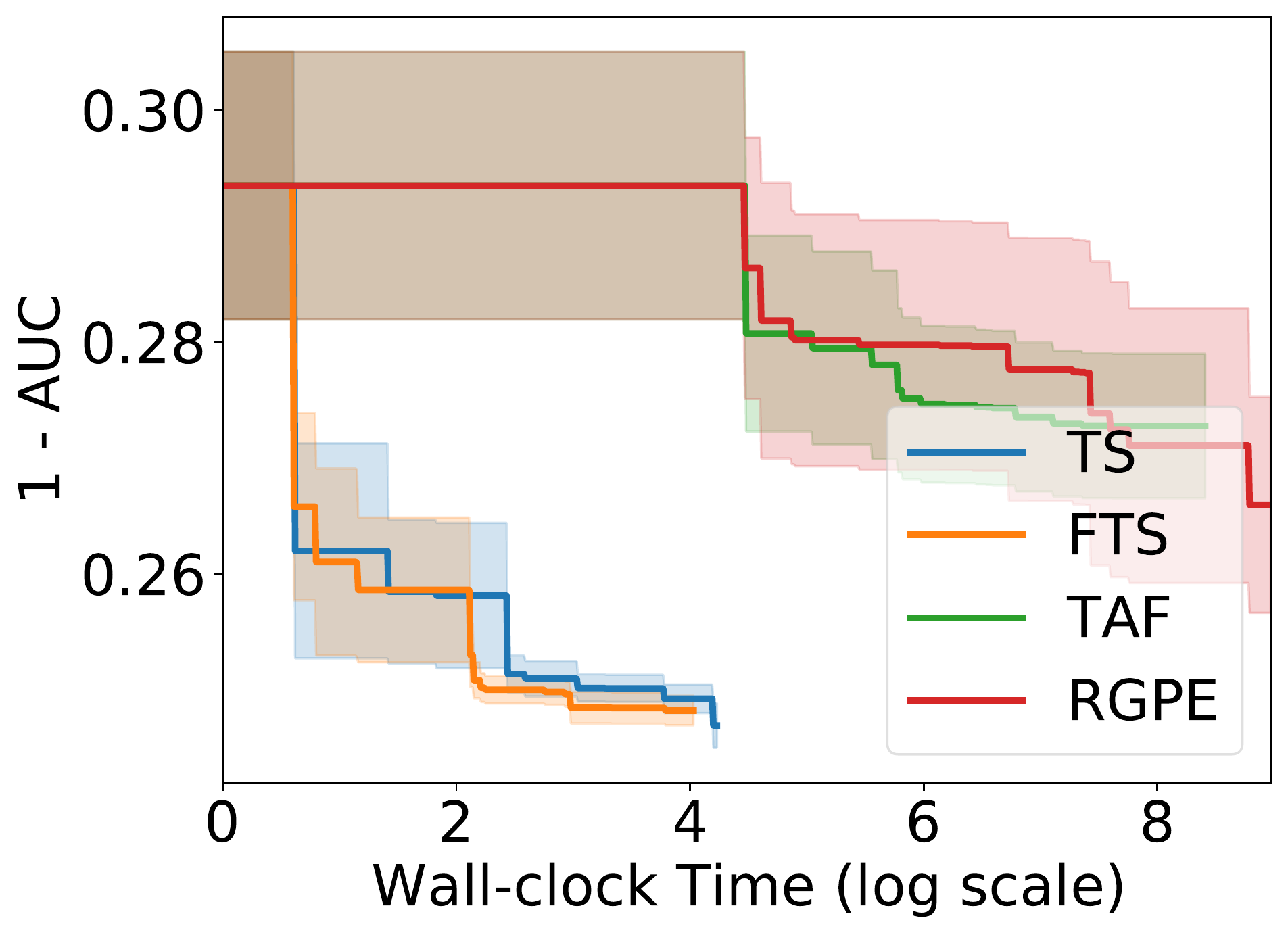}
        \caption{}
    \end{subfigure}
    \begin{subfigure}[t]{0.32\linewidth}
        \includegraphics[width=\linewidth]{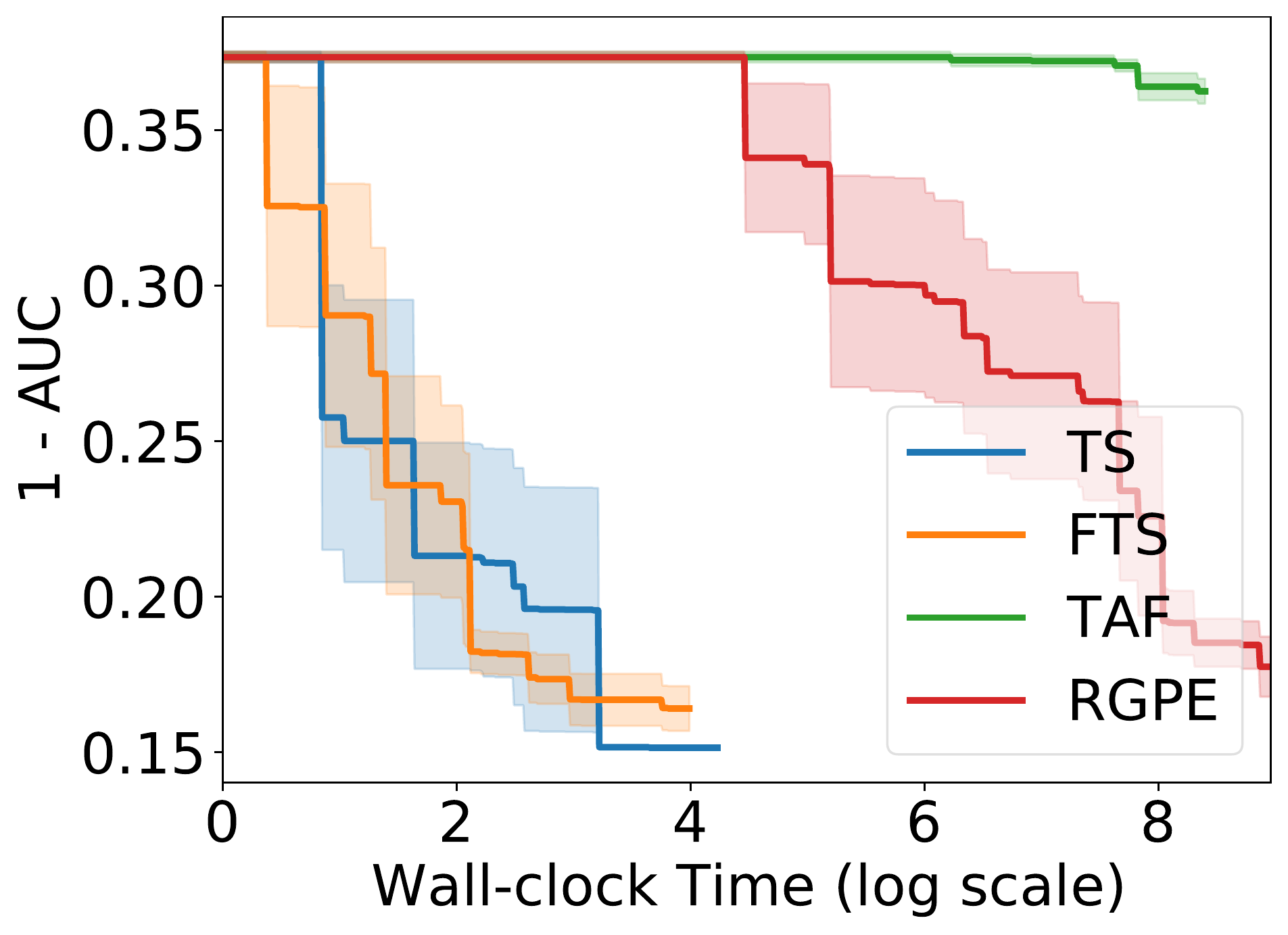}
        \caption{}
    \end{subfigure}
    \begin{subfigure}[t]{0.32\linewidth}
        \includegraphics[width=\linewidth]{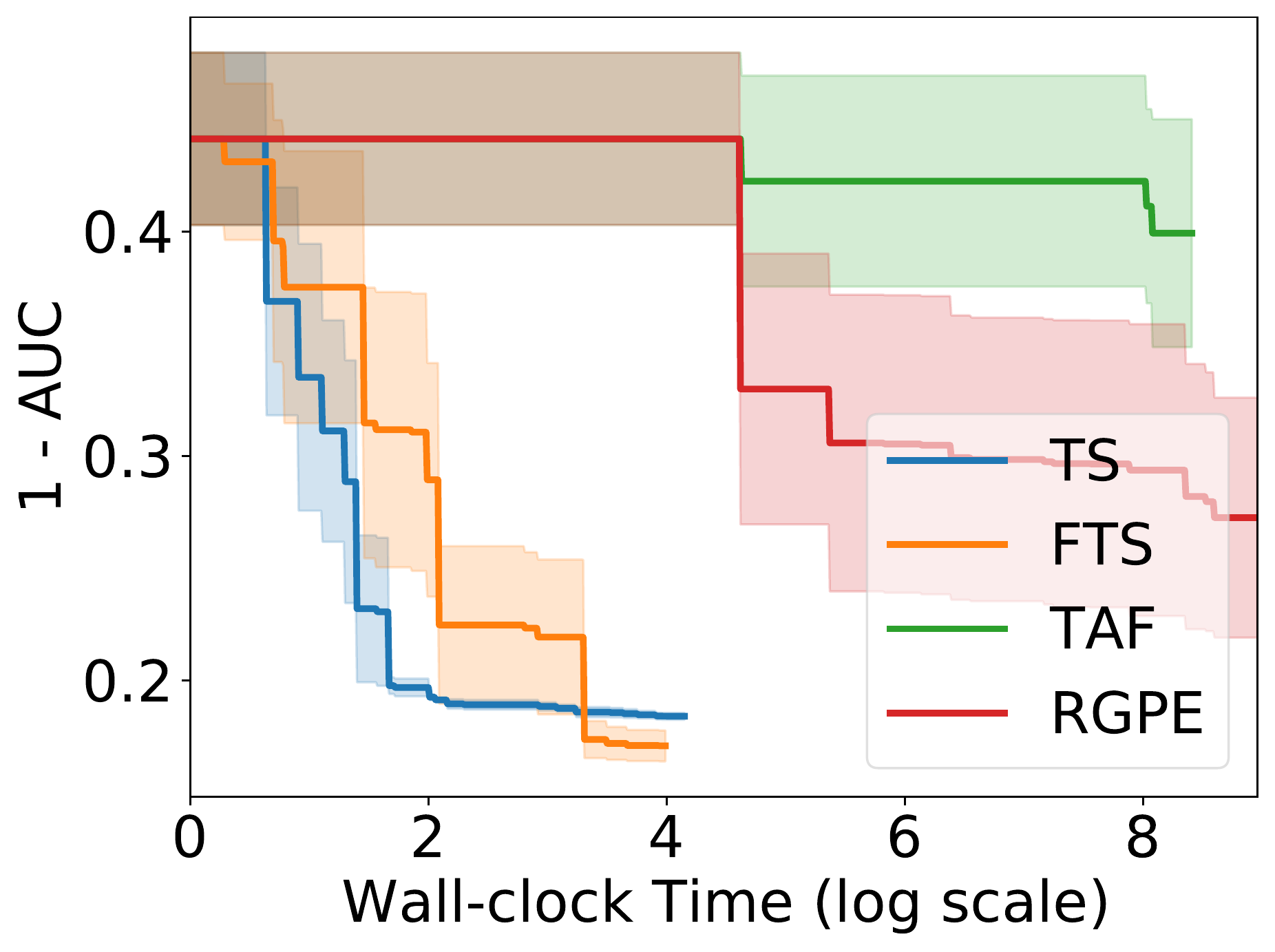}
        \caption{}
    \end{subfigure}
    \begin{subfigure}[t]{0.32\linewidth}
        \includegraphics[width=\linewidth]{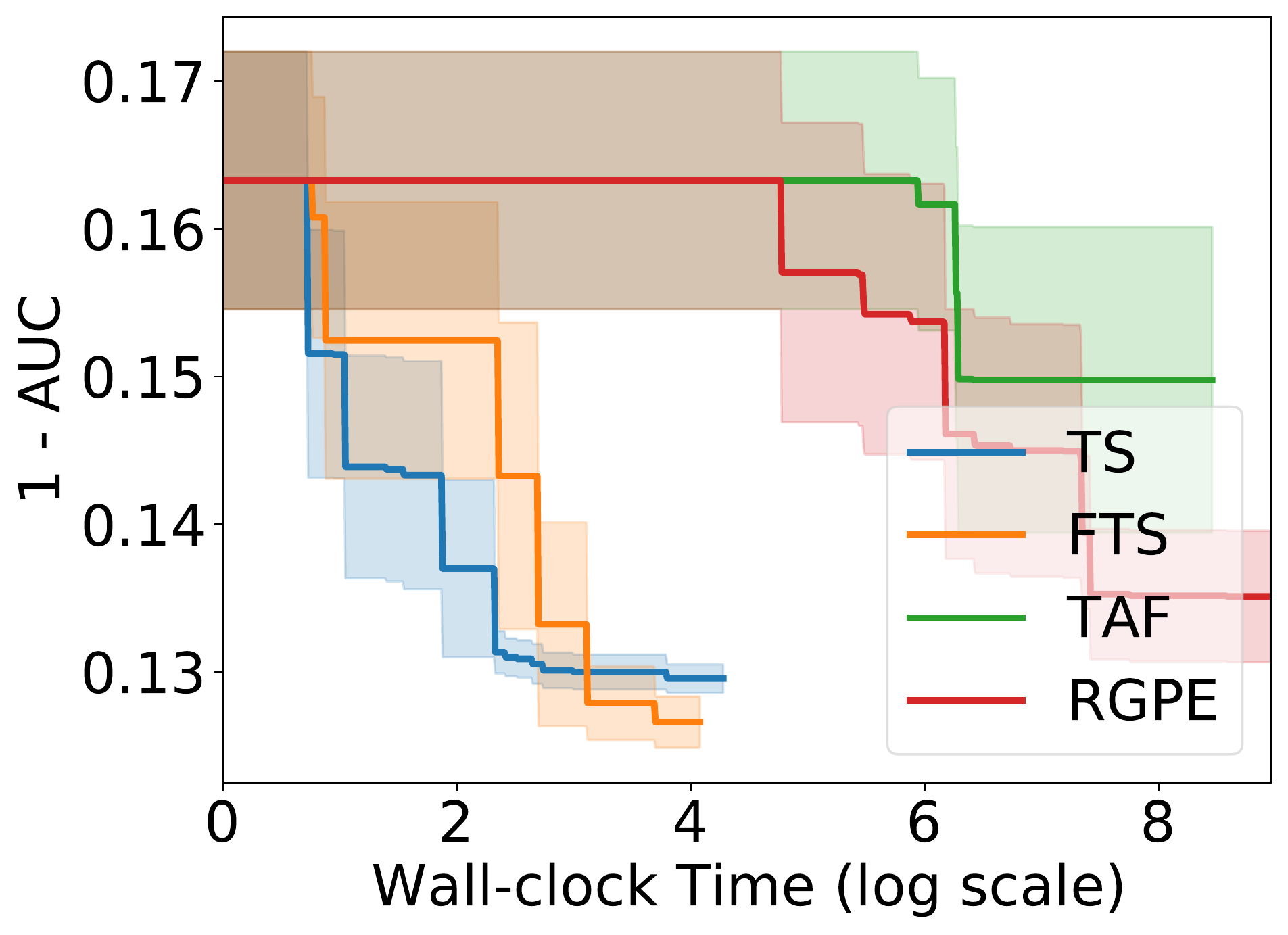}
        \caption{}
    \end{subfigure}
    \caption{Additional results for the other $5$ target agents for the landmine detection experiment ($M=100$).}
    \label{fig:add_results_app_landmine}
\end{figure}

\begin{figure}
    \centering
    \begin{subfigure}[t]{0.32\linewidth}
        \includegraphics[width=\linewidth]{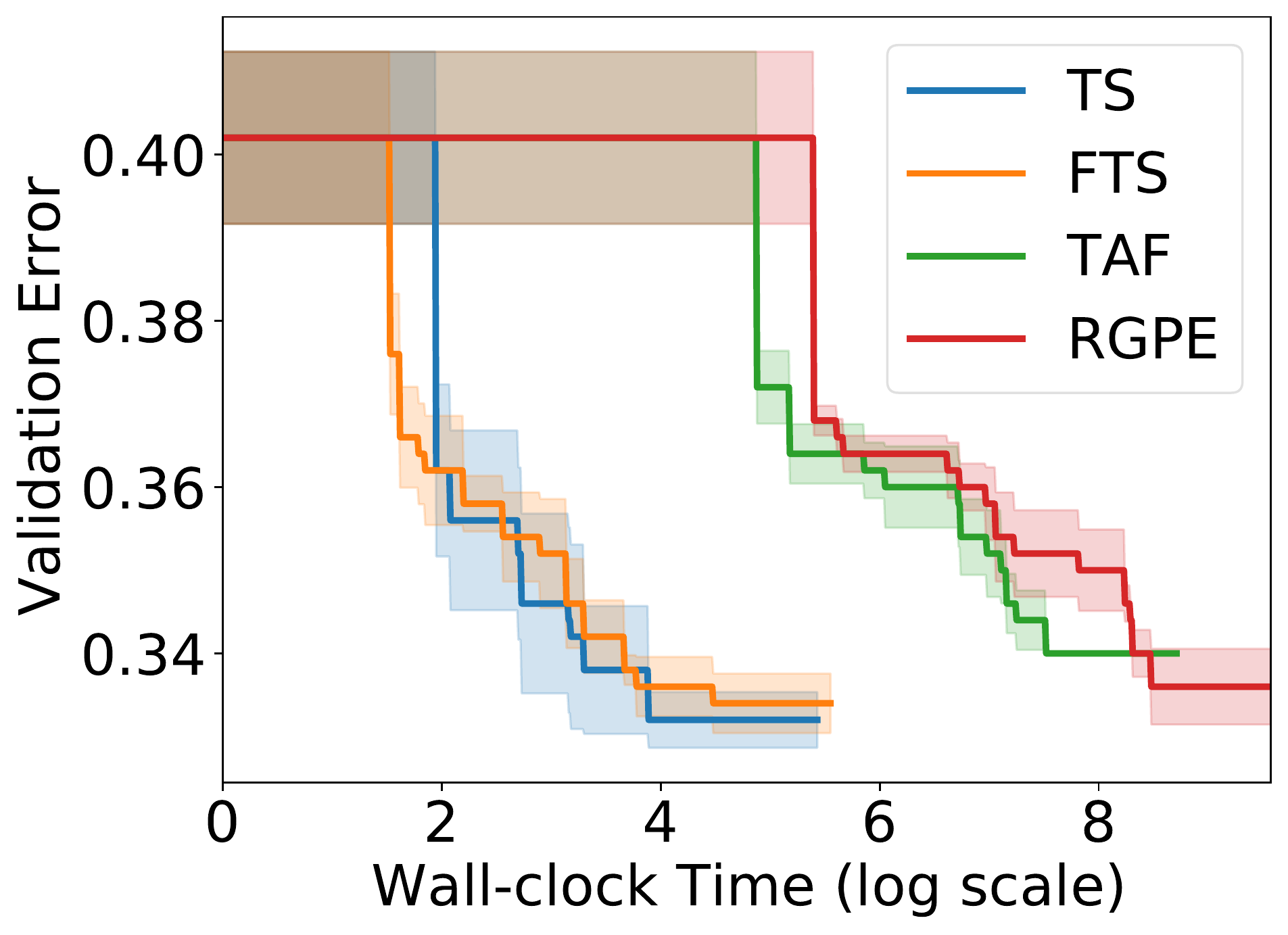}
        \caption{}
    \end{subfigure}
    \begin{subfigure}[t]{0.32\linewidth}
        \includegraphics[width=\linewidth]{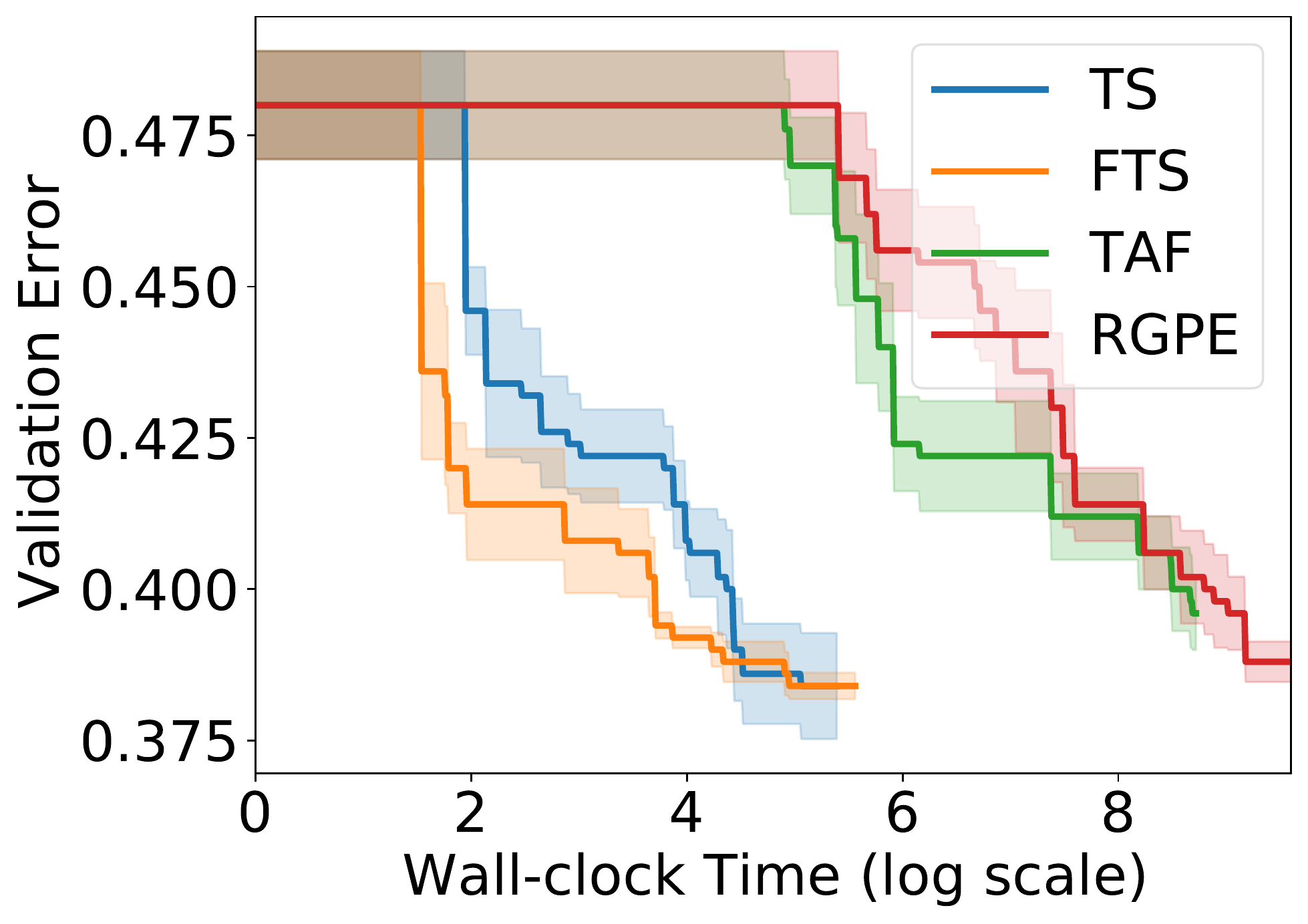}
        \caption{}
    \end{subfigure}
    \begin{subfigure}[t]{0.32\linewidth}
        \includegraphics[width=\linewidth]{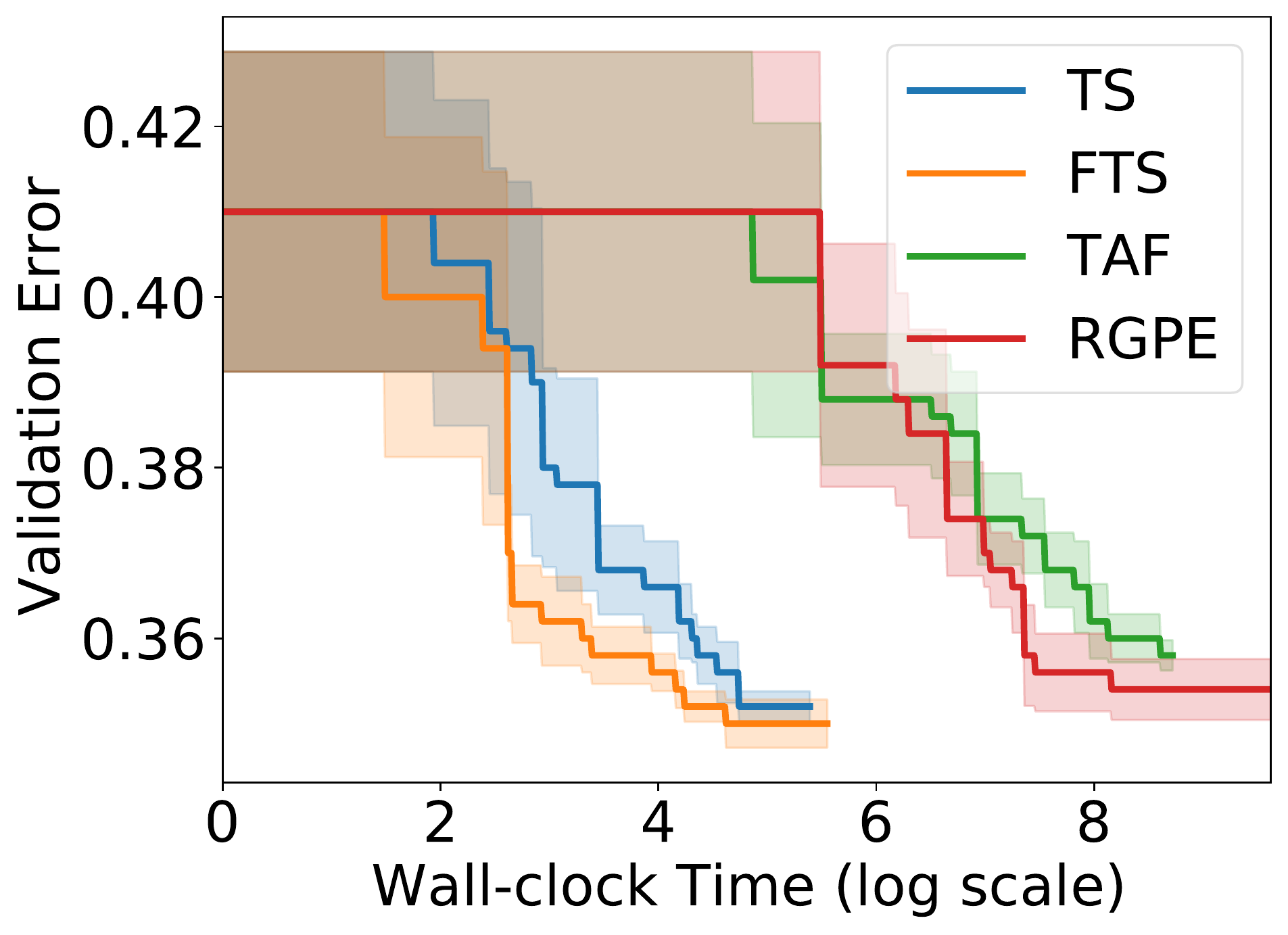}
        \caption{}
    \end{subfigure}
    \begin{subfigure}[t]{0.32\linewidth}
        \includegraphics[width=\linewidth]{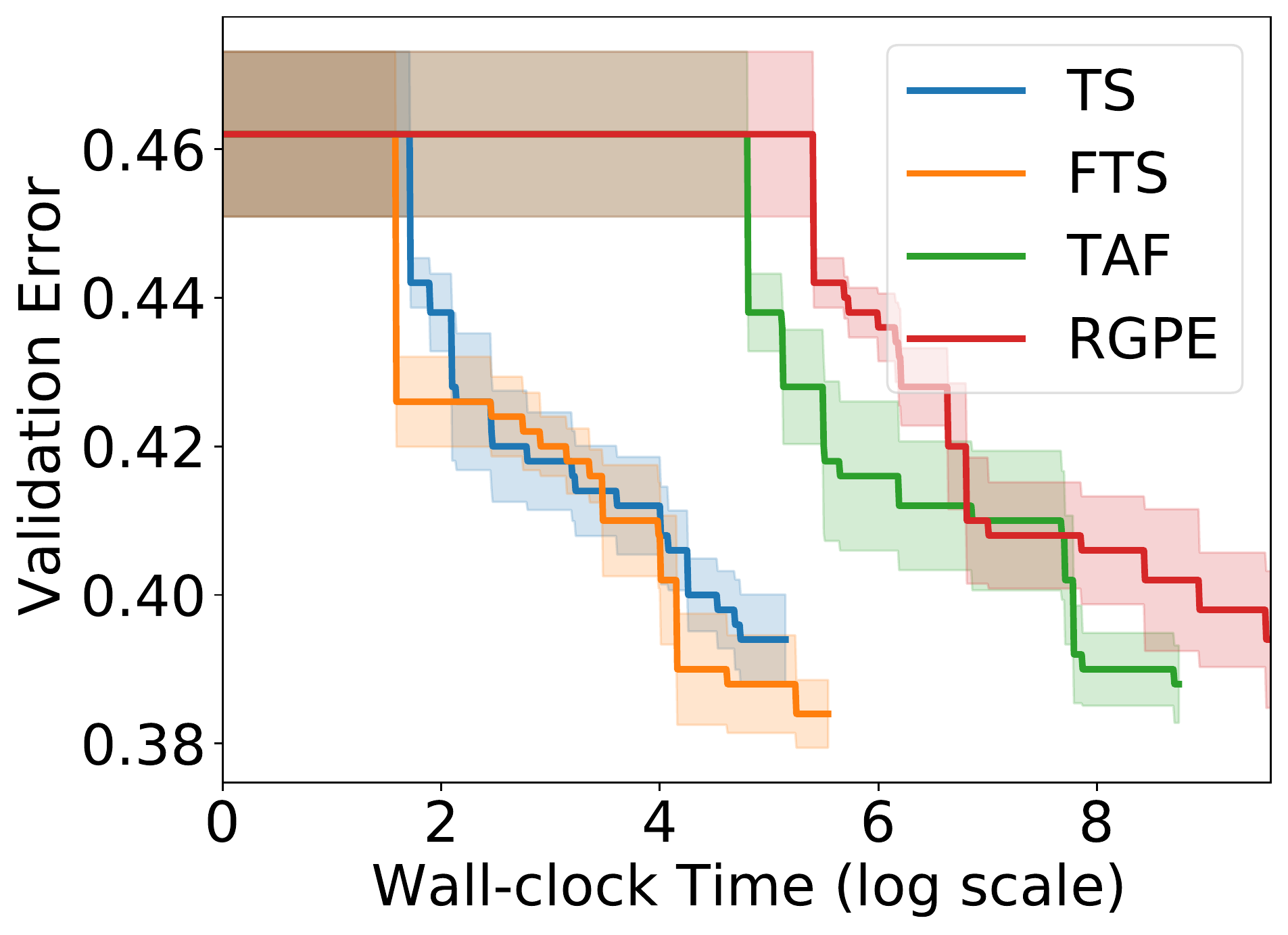}
        \caption{}
    \end{subfigure}
    \begin{subfigure}[t]{0.32\linewidth}
        \includegraphics[width=\linewidth]{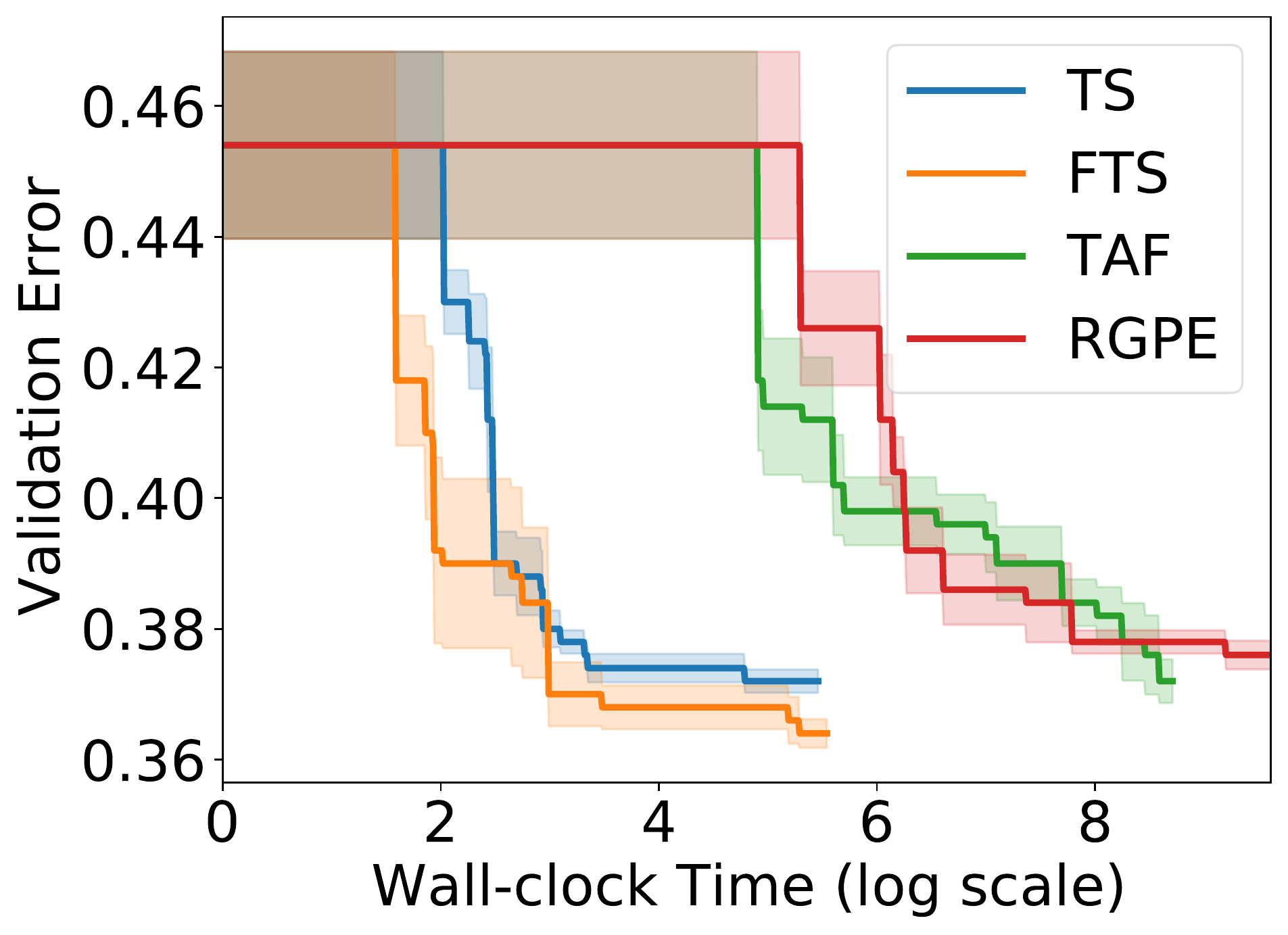}
        \caption{}
    \end{subfigure}
    \caption{Additional results for the other $5$ target agents for the activity recognition experiment using Google glasses ($M=100$).}
    \label{fig:add_results_app_google}
\end{figure}

\begin{figure}
    \centering
    \begin{subfigure}[t]{0.32\linewidth}
        \includegraphics[width=\linewidth]{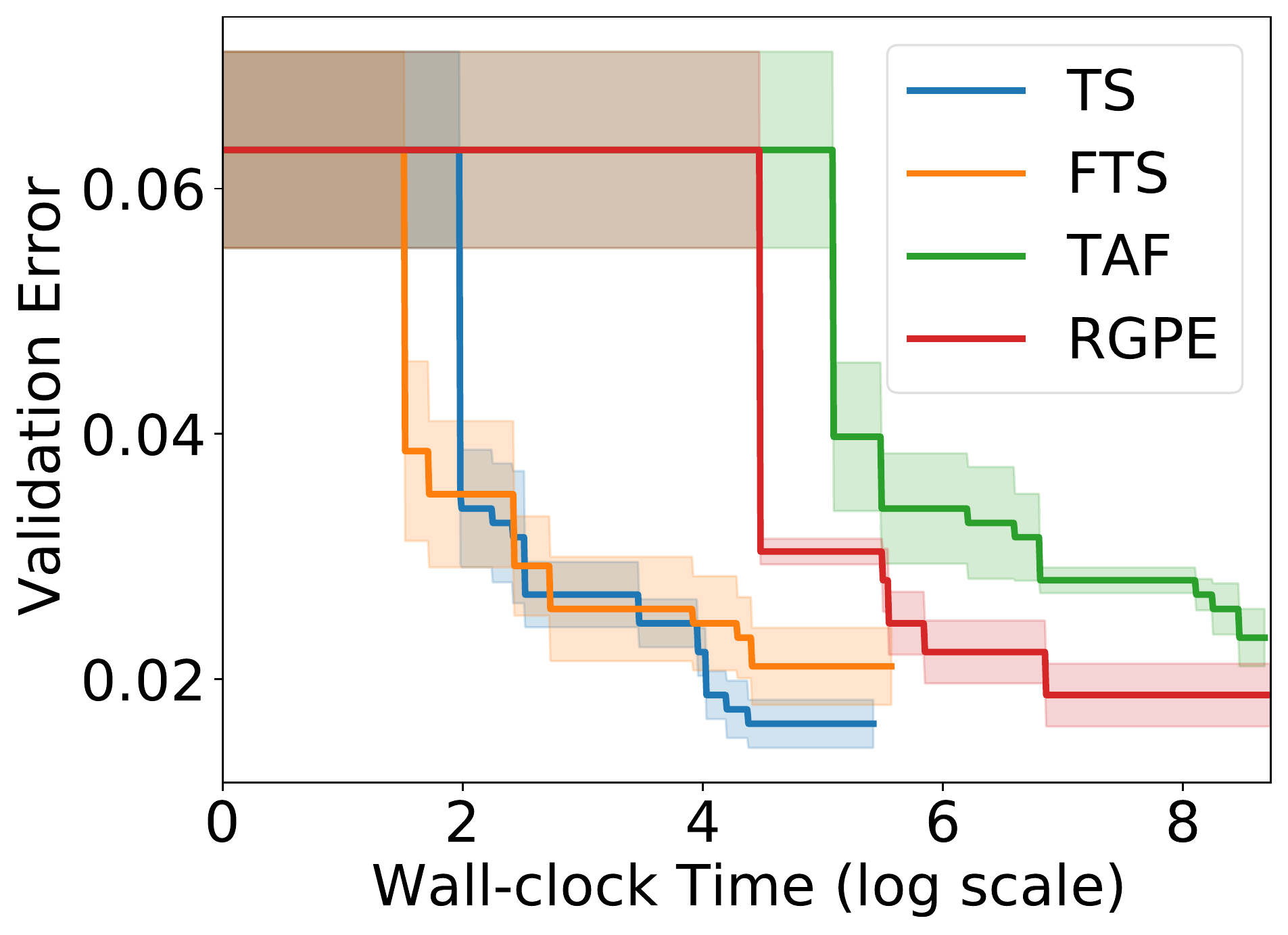}
        \caption{}
    \end{subfigure}
    \begin{subfigure}[t]{0.32\linewidth}
        \includegraphics[width=\linewidth]{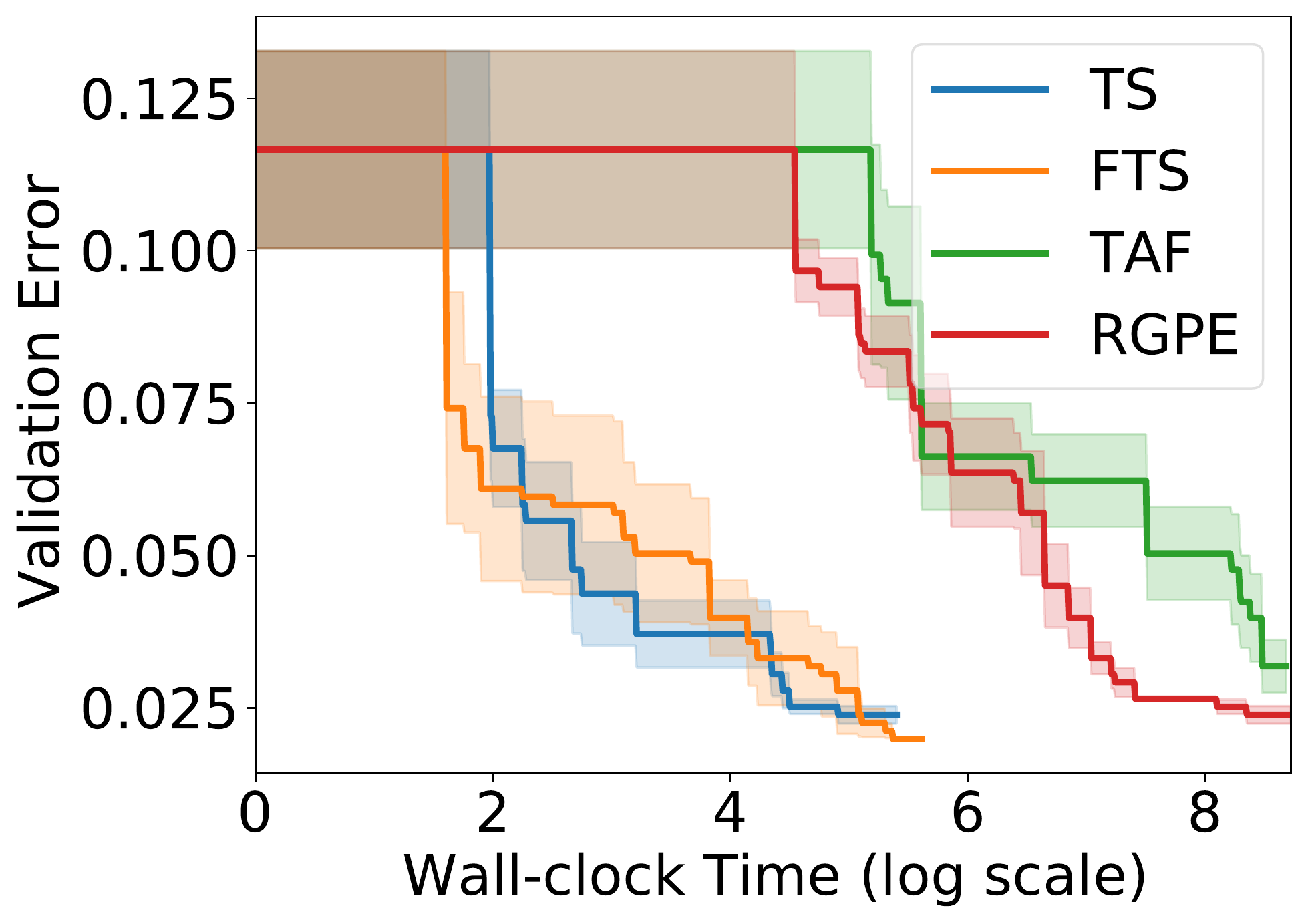}
        \caption{}
    \end{subfigure}
    \begin{subfigure}[t]{0.32\linewidth}
        \includegraphics[width=\linewidth]{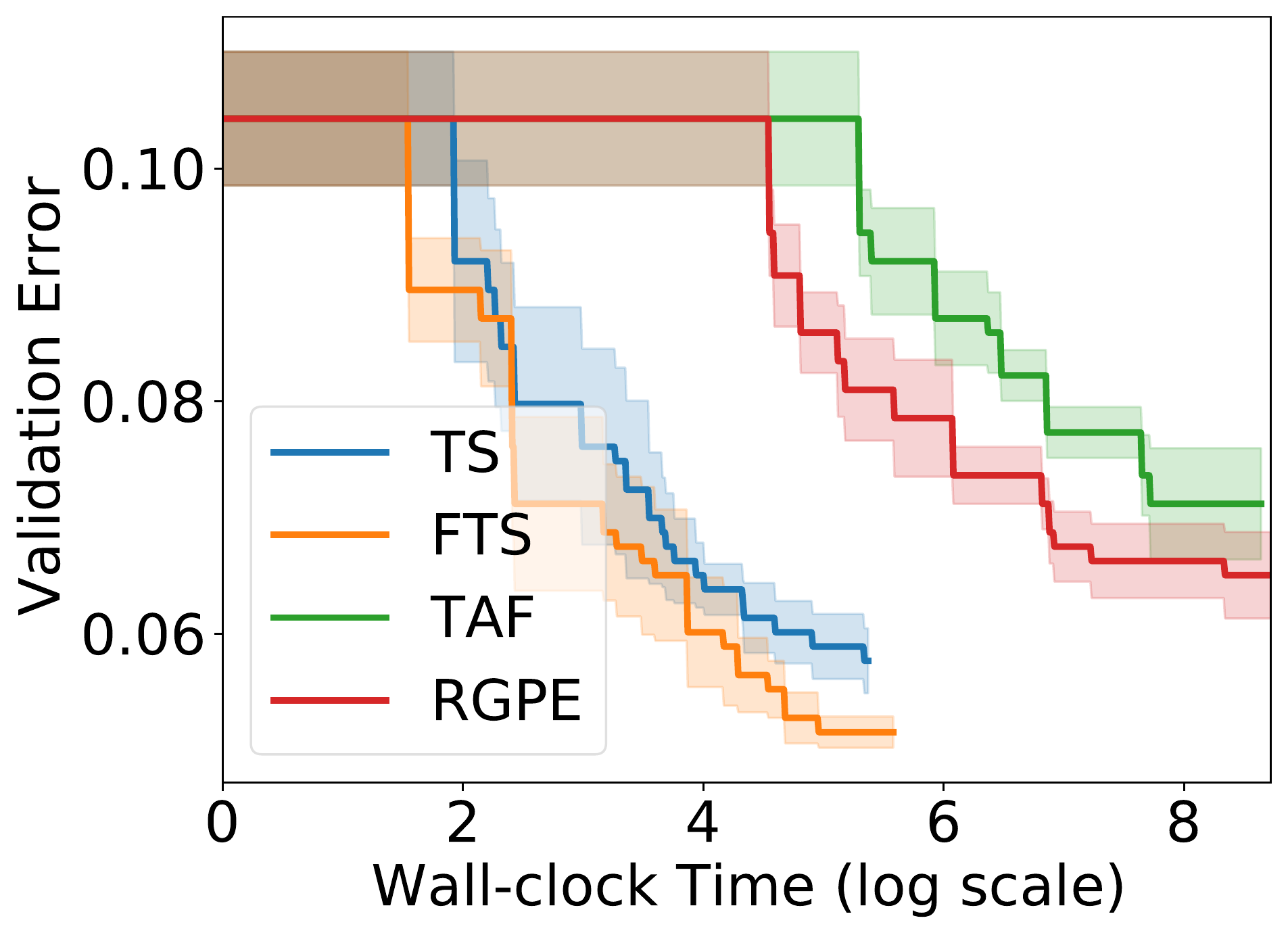}
        \caption{}
    \end{subfigure}
    \begin{subfigure}[t]{0.32\linewidth}
        \includegraphics[width=\linewidth]{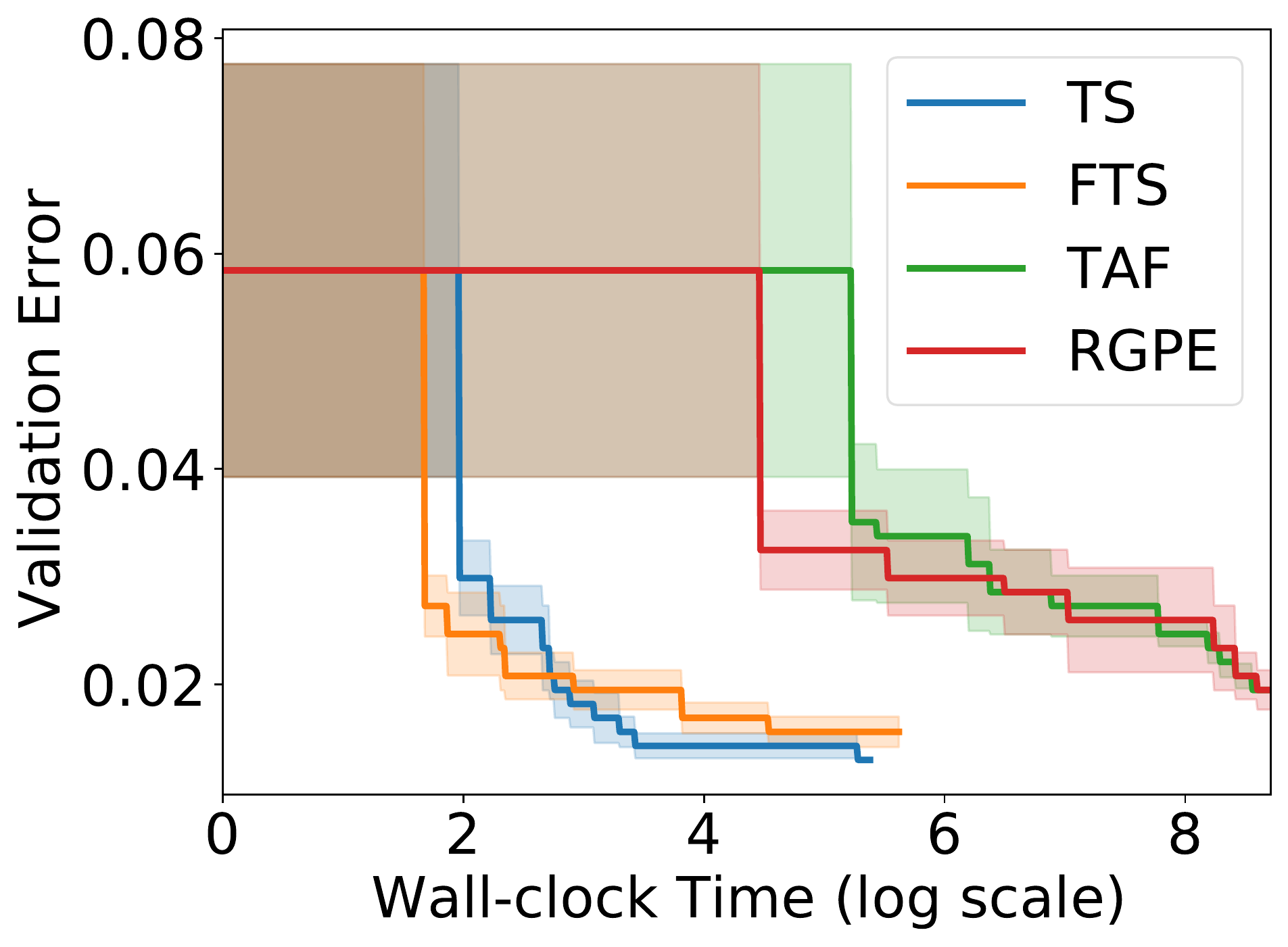}
        \caption{}
    \end{subfigure}
    \begin{subfigure}[t]{0.32\linewidth}
        \includegraphics[width=\linewidth]{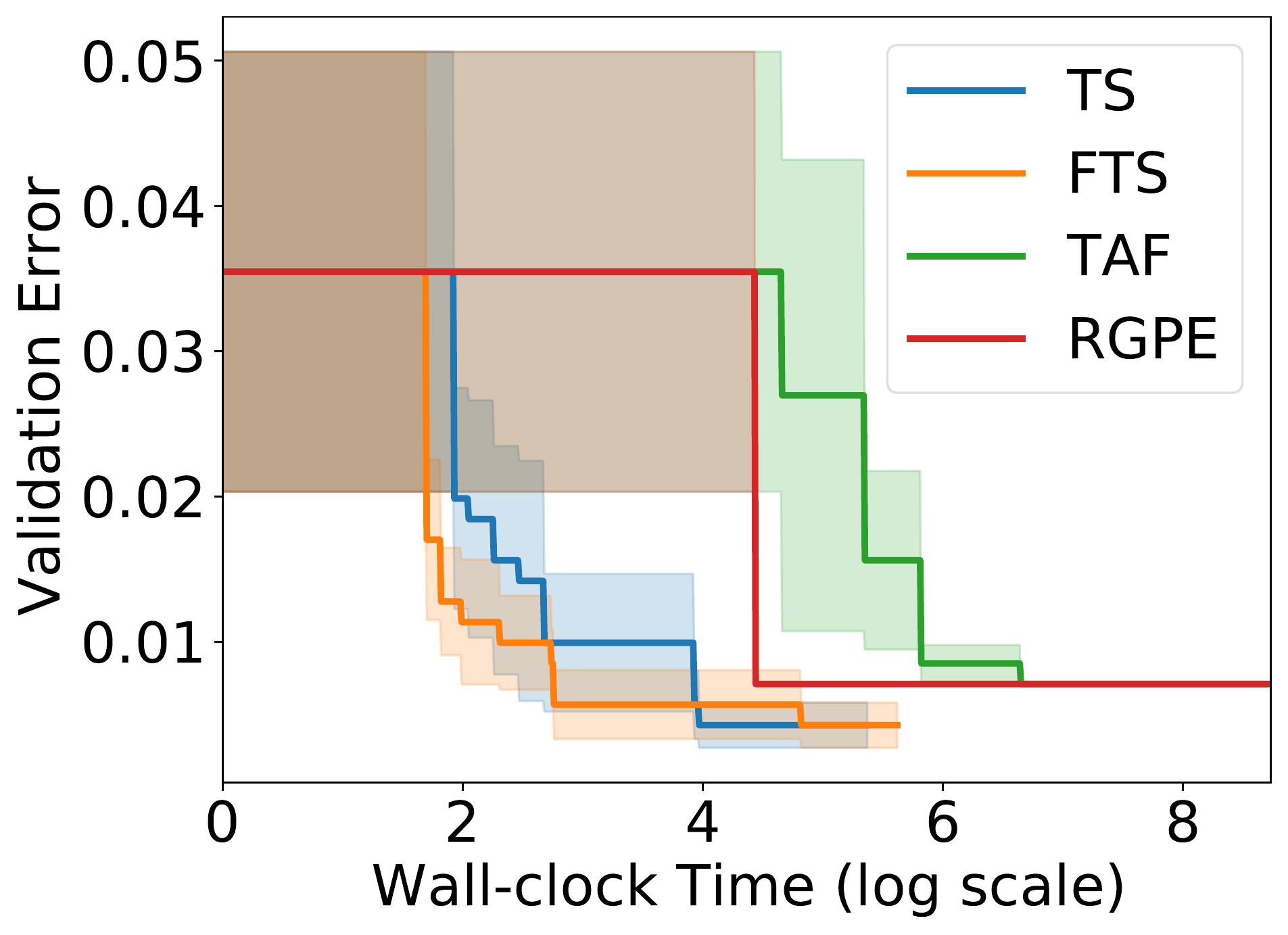}
        \caption{}
    \end{subfigure}
    \caption{Additional results for the other $5$ target agents for the activity recognition experiment using mobile phone sensors ($M=100$).}
    \label{fig:add_results_app_har}
\end{figure}

\end{document}